\documentclass{article}

\usepackage[preprint]{neurips_2022}

\usepackage[utf8]{inputenc} 
\usepackage[T1]{fontenc}    
\usepackage{hyperref}       
\usepackage{url}            
\usepackage{booktabs}       
\usepackage{amsfonts}       
\usepackage{nicefrac}       
\usepackage{microtype}      
\usepackage{xcolor}         
\usepackage{times}
\usepackage[linesnumbered, ruled,vlined]{algorithm2e}
\usepackage{amsthm}
\usepackage{wrapfig}

\newtheorem{lemma}{Lemma}[section]
\newtheorem{theorem}[lemma]{Theorem}

\newtheorem{definition}[lemma]{Definition}

\usepackage{times}
\usepackage{enumitem}
\usepackage{mathtools}

\usepackage{thm-restate}

\usepackage{url}
\usepackage{dsfont}
\usepackage[mathscr]{euscript}
\usepackage{booktabs}
\usepackage{makecell}
\usepackage{xcolor, colortbl}
\usepackage[normalem]{ulem}
\usepackage{soul}
\usepackage{prettyref}

\definecolor{DarkRed}{rgb}{0.75,0,0}
\definecolor{DarkGreen}{rgb}{0,0.5,0}
\definecolor{DarkPurple}{rgb}{0.5,0,0.5}
\definecolor{DarkBlue}{rgb}{0,0,0.7}

\hypersetup{colorlinks=true, plainpages = false, citecolor = DarkBlue, urlcolor = DarkBlue, filecolor = black, linkcolor =DarkBlue}

\newcommand{\Acal}{\mathcal{A}}

\newcommand{\Dcal}{\mathcal{D}}

\newcommand{\Ecal}{\mathcal{E}}

\newcommand{\Bcal}{\mathcal{B}}

\newcommand{\Ical}{\mathcal{I}}

\newcommand{\Ncal}{\mathcal{N}}

\newcommand{\EE}{\mathbb{E}} 
\newcommand{\PP}{\mathbb{P}} 
\newcommand{\RR}{\mathbb{R}} 
 %
 %

\newcommand{\sA}{{\mathscr A}}

\newcommand{\Ocal}{\mathcal{O}}


\DeclareMathOperator*{\E}{\mathbb E}
\DeclareMathOperator*{\argmin}{argmin}

\DeclareMathOperator{\Reg}{\mathsf{Reg}}

\DeclareMathOperator{\gap}{gap}

\DeclareMathOperator{\Ind}{\mathbb{I}}

\DeclarePairedDelimiter{\bracket}{[}{]}
\DeclarePairedDelimiter{\curl}{\{}{\}}
\DeclarePairedDelimiter{\paren}{(}{)}

\DeclarePairedDelimiter{\tri}{\langle}{\rangle}

\newcommand{\ignore}[1]{}

\newcommand{\e}{\epsilon}

\usepackage{MnSymbol}
\DeclareMathAlphabet\mathbb{U}{msb}{m}{n}
\usepackage{xpatch}

\usepackage[toc, page, header]{appendix}
\setcounter{tocdepth}{0}

\title{Stochastic Online Learning with Feedback Graphs:\\
      Finite-Time and Asymptotic Optimality}

\author{%
  Teodor V. Marinov\\
  Google Research\\
  \texttt{tvmarinov@google.com}\\
   \And
   Mehryar Mohri\\
   Courant Institute and Google Research\\
   \texttt{mohri@google.com} \\
   \AND
   Julian Zimmert\\
   Google Research\\
   \texttt{zimmert@google.com}
}

\begin{document}

\maketitle
\begin{abstract}
  We revisit the problem of stochastic online learning with feedback
  graphs, with the goal of devising algorithms that are optimal, up to
  constants, both asymptotically and in finite time.  We show that,
  surprisingly, the notion of optimal finite-time regret is not a
  uniquely defined property in this context and that, in general, it
  is decoupled from the asymptotic rate.  We discuss alternative
  choices and propose a notion of finite-time optimality that we argue
  is \emph{meaningful}. For that notion, we give an algorithm that
  admits quasi-optimal regret both in finite-time and asymptotically.
\end{abstract}

\section{Introduction}

Online learning is a sequential decision making game in which, at each
round, the learner selects one arm (or expert) out of a finite set of
$K$ arms. In the stochastic setting, each arm admits some reward
distribution and the learner receives a reward drawn from the
distribution corresponding to the arm selected. In the \emph{bandit
setting}, the learner observes only that reward
\citep{lai1985asymptotically,auer2002finite,auer2002nonstochastic}, while in the \emph{full
information setting}, the rewards of all $K$ arms are observed
\citep{littlestone1994weighted,freund1997decision}.

Both settings are special instances of a more general model of online
learning with side information introduced by
\citet{mannor2011bandits}, where the information supplied to the
learner is specified by a \emph{feedback graph}. In an undirected
feedback graph, each vertex represents an arm and an edge between
between arm $v$ and $w$ indicates that the reward of $w$ is observed
when $v$ is selected and vice-versa. The bandit setting corresponds to
a graph reduced to self-loops at each vertex, the full information to
a fully connected graph.
The problem of online learning with stochastic rewards and feedback
graphs has been studied by several publications in the last decade or
so. The performance of an algorithm in this problem is expressed in
terms of its pseudo-regret, that is the different between the expected
reward achieved by always pulling the best arm and the expected
cumulative reward obtained by the algorithm.

The \textsc{ucb} algorithm of \citet{auer2002finite} designed for the
bandit setting forms a baseline for this scenario. For general
feedback graphs, \citet{caron2012leveraging} designed a
\textsc{ucb}-type algorithm, \textsc{ucb-n}, as well as a closely
related variant. The pseudo-regret guarantee of \textsc{ucb-n} is
expressed in terms of the most favorable \emph{clique covering} of the
graph, that is its partitioning into cliques. This guarantee is always
at least as favorable as the bandit one \citep{auer2002finite}, which
coincides with the specific choice of the trivial clique
covering. However, the bound depends on the ratio of the maximum and
minimum mean reward gaps within each clique, which, in general, can be
quite large.

\citet{cohen2016online} presented an action-elimination-type algorithm
\citep{EvenDarMannorMansour2006}, 
whose guarantee depends on the least favorable maximal independent
set.  
While there are instances in which this guarantee is worse compared to the bound presented in \citet{caron2012leveraging}, in general it could be much more favorable compared to the clique partition guarantee of \cite{caron2012leveraging}.
The algorithm of \citet{cohen2016online} does
not require access to the full feedback graph, but only to the
out-neighborhood of the arm selected at each round and the results
also hold for time-varying graphs.
Later, \citet{lykouris2020feedback} presented an improved analysis of
the \textsc{ucb-n} algorithm based on a new layering technique, which
showed that \textsc{ucb-n} benefits, in fact, from a more favorable
guarantee based on the independence number of the graph, at the price
of some logarithmic factors. Their analysis also implied a similar
guarantee for a variant of arm-elimination and Thompson sampling, as
well as some improvement of the bound of \citet{cohen2016online} in
the case of a fixed feedback graph.
\citet{buccapatnam2014stochastic} gave an action-elimination-type
algorithm \citep{EvenDarMannorMansour2006}, \textsc{ucb-lp}, that
leverages the solution of a linear-programming (LP) problem. The
guarantee presented depends only on the domination number of the
graph, which can be substantially smaller than the independence
number. A follow-up publication
\citep{BuccapatnamLiuEryilmazShroff2017} presents an analysis for an
extension of the scenario of online learning with stochastic feedback
graphs.

We will show that the algorithms just discussed do not achieve
asymptotically optimal pseudo-regret guarantees and that it is also
unclear how tight their finite-time instance-dependent bounds are.
\citet{wu2015online} and \citet{li2020stochastic} proposed
asymptotically optimal algorithms with matching lower bounds. However,
the corresponding finite-time regret guarantees are far from optimal
and include terms that can dominate the pseudo-regret for any
reasonable time horizon.

We briefly discuss other work related to online learning with feedback graphs. When rewards are adversarial, there has been a vast amount of work studying different settings for the feedback graph such as the graph evolving throughout the game or the graph not being observable before the start of each round \citep{alon2013bandits,alon2015online,alon2017nonstochastic}. The setting in which only noisy feedback is provided by the graph is addressed in \citet{kocak2016online}. First order regret bounds, that is bounds which depend on the reward of the best arm, are derived in \citet{lykouris2018small,lee2020closer}. 
The setting of sleeping experts is studied in \citet{cortes2019online}.
\citet{cortes2020online} study stochastic rewards when the feedback graph evolves throughout the game, however, they do not assume that the rewards and the graph are statistically independent. Another instance in which the feedback and rewards are correlated is that of online learning with abstention~\citep{cortes2018abstention}. In this setting the player can choose to abstain from making a prediction.
The more general problem of Reinforcement Learning with graph feedback has been studied by \citet{dann2020reinforcement}. For additional work on online learning with feedback graphs we recommend the survey of \citet{valko2016bandits}. 

We revisit the problem of stochastic online learning with feedback
graphs, with the goal of devising algorithms that are optimal, up to
constants, both asymptotically and in finite time.  We show that,
surprisingly, the notion of optimal finite-time regret is not a
uniquely defined property in this context and that, in general, it is
decoupled from the asymptotic rate.
Let $T$ denote the time horizon and $\Reg_{\sA}(T)$ the pseudo-regret of
algorithm $\sA$ after $T$ rounds. When $\sA$ is clear from the context, we drop the subscript. It is known that $c^*$, the value of
the LP considered by \citet{buccapatnam2014stochastic,
  wu2015online,li2020stochastic}, is asymptotically a lower bound for
$\Reg_\sA(T)/\log (T)$.  We prove that no algorithm $\sA$ can achieve
a finite-time pseudo-regret guarantee of the form $\Reg_\sA(T) \leq
O(c^* \!\log(T))$.  Moreover, we show that there exists a feedback
graph $G$ for which \emph{any} algorithm suffers a regret of at least
$\Omega\paren*{K^{\frac{1}{8}} \paren*{c^* \log(T) +
    \frac{1}{\Delta_{\min}}}}$, where $\Delta_{\min}$ is the minimum
reward gap.
We discuss alternative choices and propose a notion of finite-time
optimality that we argue is \emph{meaningful}, based on a regret
quantity $d^*$ that we show any algorithm must incur in the worst
case.  For that notion, we give an algorithm whose pseudo-regret is
quasi-optimal, both in finite-time and asymptotically and can be upper
bounded by $O(c^* \log(T) + d^*)$.

\vspace*{-5pt}
\section{Learning scenario}
We consider the problem of online learning with stochastic rewards and
a fixed undirected feedback graph. As in the familiar multi-armed
bandit problem, the learner can choose one of $K \geq 1$ arms. Each
arm $i \in [K]$ admits a reward distribution, with mean $\mu_i$. 
For all our lower bounds, we assume that the distribution of the reward of each arm is Gaussian with variance $1/\sqrt{2}$. For our upper bounds, we only assume that the distribution of each arm is sub-Gaussian with variance proxy bounded by $1$. We assume that the means are always bounded in $[0, 1]$.
For arm $i$, we denote by $\Delta_i = \mu^* - \mu_i$ its mean gap to the best
$\mu^* = \max_{i \in [K]} \mu_i$. We will also denote by
$\Delta_{\min}$ the smallest and by $\Delta_{\max}$ the largest of
these gaps.
At each round $t \in [T]$, the learner selects an arm $i_t$ and
receives a reward $r_{t, i_t}$ drawn from the reward distribution of
arm $i_t$. In addition to observing that reward, the learner observes
the reward of some other arms, as specified by an undirected graph $G
= (V, E)$, where the vertex set $V$ coincides with $[K]$: an edge $e
\in E$ between vertices $i$ and $j$ indicates that the learner
observes the reward of arm $j$ when selecting arm $i$ and
vice-versa. We will denote by $N_i$ the set of neighbors of arm $i$ in
$G$, $N_i = \curl*{j \in V\colon (i, j) \in E}$, and will assume
self-loops at every vertex, that is, we have $i \in N_i$ for all $i
\in V$. The objective of the learner $\sA$ is to minimize its
\emph{pseudo-regret}, that is the expected cumulative gap between the
reward of an optimal arm $i^*$ and its reward:
\[
\Reg(T)
= \E\bracket*{\sum_{t = 1}^T (r_{t, i^*} - r_{t, i_t})}
= \mu^* T - \E\bracket*{\sum_{t = 1}^T r_{t, i_t}},
\]
where the expectation is taken over the random draw of a reward from
an arm's distribution and the possibly randomized selection strategy
of the learner.  
In the following, we may sometimes abusively use the shorter  
term regret instead of pseudo-regret.
We will denote by $I^*$ the set of optimal arms, that
is, arms with mean reward $\mu^*$, and, for any $t \in [T]$ will
denote by $r_t$ the vector of all rewards $r_{t, i}$ at time $t$. When discussing asymptotic or finite-time optimality, we assume the setting of Gaussian rewards.

We will assume an \emph{informed setting} where the graph $G$ is fixed
and accessible to the learner before the start of the game.
Our analysis makes use of the following standard graph theory notions
\citep{GoddardHenning2013}.
A subset of the vertices is \emph{independent} if no two vertices
in it are adjacent. The \emph{independence number} of $G$, $\alpha(G)$,
is the size of the maximum independent set in $G$.
A \emph{dominating set} of $G$ is a subset $S \subseteq V$ such that
every vertex not in $S$ is adjacent to $S$. The \emph{domination
number} of $G$, $\gamma(G)$, is the minimum size of a dominating set.
It is known that for any graph $G$, we have $\gamma(G) \leq
\alpha(G)$. The difference between the domination and independence
numbers can be substantial in many cases. For example, for a star
graph with $n$ vertices, we have $\gamma(G) = 1$ and $\alpha(G) = n -
1$. In the following, in the absence of any ambiguity, we simply
drop the graph arguments and write $\alpha$ or $\gamma$.
We will denote by $\Dcal(G')$ the minimum dominating set of a
sub-graph $G' \subseteq G$ and by $\Ical(G')$ the maximum independent
set. When the minimum dominating set is not unique, $\Dcal(G')$ can be
selected in an arbitrary but fixed way.

\section{Sub-optimality of previous algorithms}

In this section, we discuss in more detail the previous work the most
closely related to ours \citep{buccapatnam2014stochastic,wu2015online,
  buccapatnam2017reward,li2020stochastic} and demonstrate their
sub-optimality. These algorithms all seek to achieve
instance-dependent optimal regret bounds by solving and playing
according to the following linear program (LP), which is known to
characterize the instance-dependent asymptotic regret for this
problem when the rewards follow a Gaussian distribution:
\begin{equation}
\label{eq:lp}
  c^*(\Delta,G) :=\min_{x \in \RR^K_{+}} \ \tri*{x, \Delta} \qquad
  s.t. \ \sum_{j \in N_i} x_j \geq \frac{1}{\Delta_i^2},
  \ \forall i \in [K] \setminus I^*.\tag{LP1}
\end{equation}
\ignore{
\begin{equation}
\label{eq:lp}
\begin{aligned}
  \min_{x \in \RR^K_{+}} & \ \tri*{x, \Delta}\\
  s.t. & \ \sum_{j \in N_i} x_j \geq \frac{1}{\Delta_i^2},
  \quad \forall i \in [K] \setminus I^*.
\end{aligned}
\end{equation}
}
We note
that these prior work algorithms can work in more general settings,
but we will restrict our discussion to their use in the informed
setting with a fixed feedback graph that we consider in this study.

The \textsc{ucb-lp} algorithm of \cite{buccapatnam2014stochastic,
  buccapatnam2017reward} is based on the following modification of
\ref{eq:lp}: $\min_{x \in \RR^K_{+}} \tri*{x, 1}$ subject to
$\sum_{j \in N_i} x_j \geq 1$, for all $i \in [K]$, in which the gap
information is eliminated, working with gaps such that $\Delta_{\min}
= \Theta(\Delta_{\max})$. This modified problem is the LP relaxation
of the minimum dominating set integer program of graph $G$.

The algorithm first solves this minimum dominating set relaxation and
then proceeds as an action elimination-algorithm in $O(\log(T))$
phases. During the first $O(\log(K))$ rounds, their algorithm plays by
exploring based on the solution of their LP. Once the exploration
rounds have concluded, it simply behaves as a bandit
action-elimination algorithm. We argue below that this algorithm is
sub-optimal, in at least two ways.

\textbf{Star graph with equal gaps.}  Consider the case where the
feedback graph is a \emph{star graph} (Figure~\ref{fig:examples}(a)):
there is one root or revealing vertex $r$ adjacent to all other
vertices.  In our construction, the optimal arm is chosen uniformly at
random among the leaves of the graph. The rewards are chosen so that
all sub-optimal arms admit the same expected reward with gap to the
best $\Delta \leq O(1/K^{1+\epsilon})$, $\e > 0$. In this case, an
optimal strategy consists of playing the revealing arm for
$\Theta(1/\Delta^2)$ rounds to identify the optimal arm, and thus
incurs regret at most $O(\frac{1}{\Delta})$. On the other hand the
\textsc{ucb-lp} strategy incurs regret at least $\Omega(\frac{K
  \log(T)}{\Delta})$. Even if we ignore the dependence on the
time horizon, the dependence on $K$ is clearly sub-optimal.

\begin{figure}[t]
\vskip -.35in
\centering
\begin{tabular}{cc}
        \includegraphics[scale=.15]{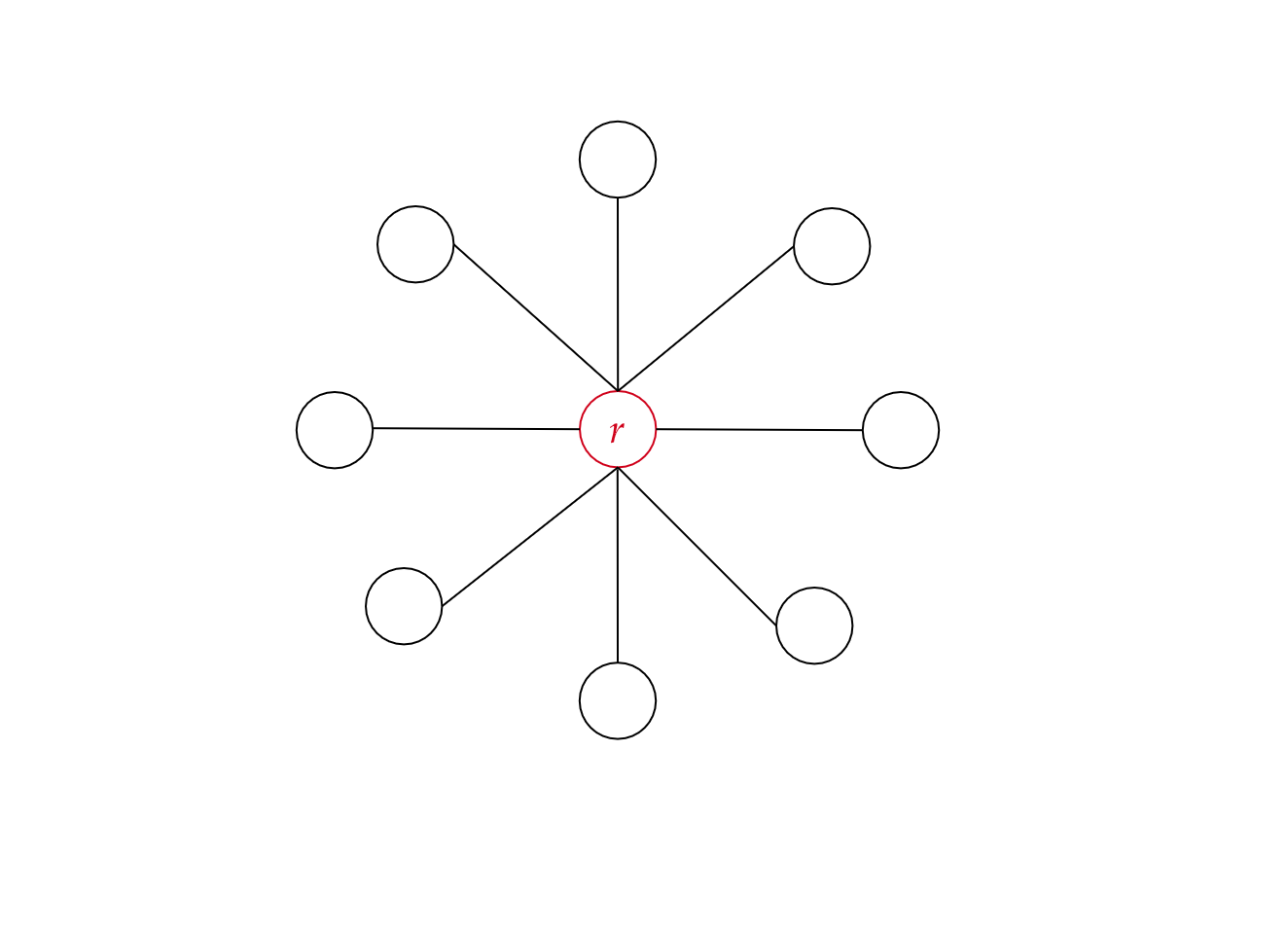} &
        \includegraphics[scale=.15]{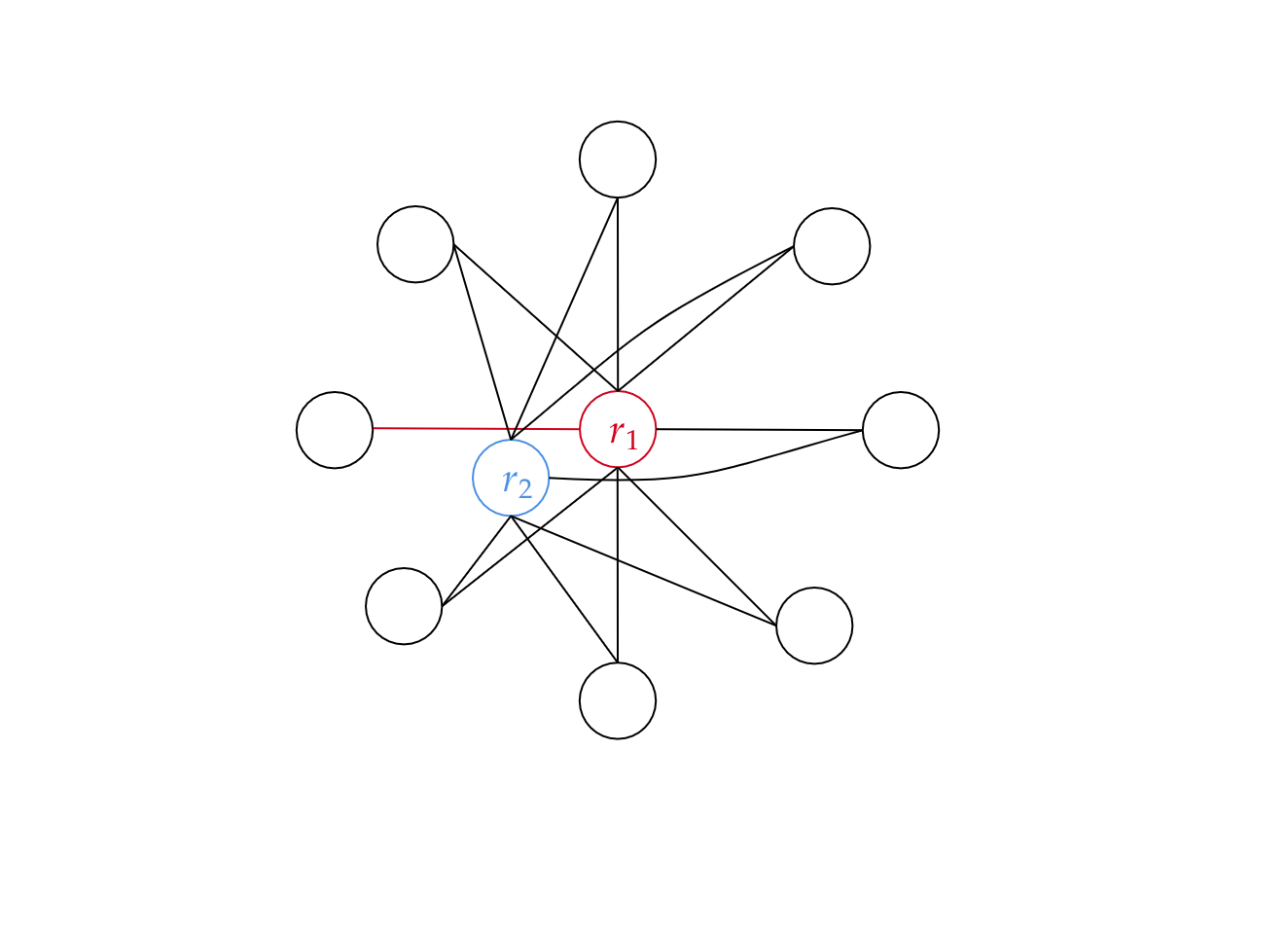}\\[-1cm]
        (a) Example 1. & (b) Example 2.
\end{tabular}
\vskip -.1in
\caption{Sub-optimality examples}
\label{fig:examples}
\end{figure}

\textbf{Sub-optimality of using the minimum dominating set
  relaxation.}  In the second problem instance, given in
Figure~\ref{fig:examples}(b), we consider a star-like graph in which
we have a revealing vertex $r_1$, adjacent to all other vertices. We
also have an "almost" revealing vertex $r_2$ which is adjacent to all
vertices but a single leaf vertex (leaves are the vertices with degree
1 and 2 in this case). The optimal arm is again chosen uniformly among
the leaves. Rewards are set so that the gap at $r_1$ is $\Delta_{max}$
and the remaining gaps are $\Delta_{min}$. The solution to the LP of
\cite{buccapatnam2014stochastic,buccapatnam2017reward} puts all the
weights on $r_1$. However, the optimal policy for this problem
consists of playing $r_2$ and the leaf vertex not adjacent to $r_2$
until all arms but the optimal arm are eliminated. The instance
optimal regret in this case is $O(\frac{\log(T)}{\Delta_{\min}})$,
while \textsc{ucb-lp} incurs regret
$\Omega(\frac{K\log(T)}{\Delta_{\min}})$.

Next, we discuss \citep{wu2015online} and
\citep{li2020stochastic}. Their instance-dependent algorithms are
based on iteratively solving empirical approximations to
\ref{eq:lp}. For simplicity, we only discuss the instance-dependent
regret bound in \citep{li2020stochastic}. A similar bound can be found
in \citep{wu2015online}. Let $x(\Delta)$ denote the solution of
\ref{eq:lp} and define the following perturbed solution
\begin{align*}
  x_i(\Delta,\epsilon) = \sup \curl*{x_i(\Delta') \colon
  |\Delta'_i - \Delta_i| \leq \e, \forall i \in [K] }.
\end{align*}
The solution $x(\Delta,\epsilon)$ is the solution of \ref{eq:lp}
with $\e$-perturbed gaps.
\citep{li2020stochastic}[Theorem~4] states that the expected regret of their
algorithm is bounded as follows:
  $\Reg(T)
  \leq O \paren[\bigg]{\sum_{i\in[K]} \log(T)x_i(\Delta,\epsilon)\Delta_i
    + \sum_{t = 1}^T \exp\paren*{-\tfrac{\beta(t)\epsilon^2}{K}} + K},$
for any $\epsilon > 0$ and $\beta(t) = o(t)$. For the standard bandit
problem with Gaussian rewards, we can compute the perturbed solution:
$x_i(\Delta,\epsilon) = \max\paren*{\frac{1}{(\Delta_i +
  \epsilon)^2},\frac{1}{(\Delta_i - \epsilon)^2}}$. Thus, for a
meaningful regret bound, we would need $\epsilon =
\Theta(\Delta_{\min})$. If $\epsilon$ is much smaller, then the term
$\sum_{t = 1}^T \exp\paren[\big]{-\frac{\beta(t)\epsilon^2}{K}}$
becomes too large and otherwise we risk making $x_i(\Delta,\epsilon)$
too large. To analyze the second term more carefully, we allow
$\beta(t) = t$. The first $\frac{K}{\Delta_{\min}^2}$ terms of
$\sum_{t=1}^T \exp\paren[\big]{-\frac{\beta(t)\epsilon^2}{K}}$ are now
at least $\frac{1}{e}$ and thus this sum is at least
$\sum_{t = 1}^T \exp\paren[\big]{-\frac{\beta(t)\epsilon^2}{K}}
 \geq \frac{K}{\Delta_{\min}^2}$.
Thus, the bandit regret bound evaluates to at least
  $\Omega\paren[\Big]{\sum_{i\in[K]}\frac{\log(T)}{\Delta_i}
  + \frac{K}{\Delta_{\min}^2}}$.
While this bound is asymptotically optimal, since the second term
does not have dependence on $T$, it admits a very poor dependence on
the smallest gap. We can repeat the argument above with a star-graph
construction in which the revealing vertex has gap $\Delta_{\min}$. In
this case, the optimal strategy given by the solution to
\ref{eq:lp} consists of playing the revealing vertex for
$\frac{1}{\Delta_{\min}^2}$ times and incurs regret at most
$O\paren*{\frac{\log(T)}{\Delta_{\min}}}$. The regret bound of the algorithm
of \citet{li2020stochastic}, however, amounts to
$\Omega \paren[\Big]{\frac{\log(T)}{\Delta_{\min}} + \frac{K}{\Delta_{\min}^2}}$.

\section{Instance-dependent finite-time bounds}
\label{sec:finite-time-bounds}

In this section, we provide an in-depth discussion of what finite-time
optimality actually means. 
Finite-time bounds are statements of the
form $\Reg(T)\leq f(T)$, which hold for any $T>0$.  Specifically, we are
considering functions of the type $f(T) = c^*\log(T)+d,$\footnote{When the problem parameters are clear from the context, we will write $c^*$ instead of $c^*(\Delta,G)$.} which we know
to exist from prior work.\footnote{While obtaining exact asymptotic optimality $\lim_{T\rightarrow \infty}\frac{f(T)}{\log(T)}=c^*$ would be ideal, we settle for optimality up to a multiplicative constant in our upper bounds.}
The question of what the optimal expression of $d$ might be seems easy to answer at first. Indeed, for bandits with Gaussian rewards, one can achieve $d = O(\sum_{i} \Delta_i)$, which in general is much smaller than $c^* = \Omega(\sum_{i} \frac{1}{\Delta_i})$~\citep{lattimore2020bandit}, and hence will be dominated by the time-dependent part of the regret for almost all reasonable lengths of the time horizon $T$.
In full information, we obtain a meaningful optimal value $d^*$ for a given gap vector by considering the worst-case regret of any algorithm under any permutation of the arms. This leads to $d^* = O\paren*{\frac{\ln(K)}{\Delta_{\min}}}$ \citep{mourtada2019optimality}. Note that in the full information setting we have $c^* = 0$.

One might hope for a similar structure for feedback graphs, where the optimal $d$ depends only on the ``full-information structure'', that is the gaps of arms neighboring an optimal arm.
All other arms contribute to $c^*$ and we might assume that their complexity is already captured in the $c^*\log(T)$ term as it is the case for bandits.
However, the situation is more complicated, as we show next.
\begin{theorem}
\label{thm:lower_bound_cstar}
For any $\Delta_{\min}$ and $K$, there exists a graph $G$ such that for any algorithm, there exists an
instance with a unique optimal arm, such that the regret of the
algorithm satisfies
    $\frac{\Reg(T)}{c^*\log(T)+\frac{1}{\Delta_{\min}}} = \tilde{\Omega}\paren[\big]{K^\frac{1}{8}}\,,$
for any $T\geq \Omega\paren*{\frac{K^{3/4}}{\Delta_{\min}}}$.
\end{theorem}
Theorem~\ref{thm:lower_bound_cstar} shows that there exists a problem instance, in which $d\gg c^*$ dominates the finite time regret for any $T \leq O(\exp(K^{1/8}))$. We note that this is not simply due to the full-information structure of the feedback graph $G$, as $c^*$ is positive. Furthermore, our results suggest that there is no simple characterization of $d$ in terms of $c^*$, e.g., $d= \Theta(c^*/\Delta_{\min})$. As shown in Section~\ref{sec:dstar_bounds}, there exists a non-trivial family of graphs, for which for any rewards instance, we have $d = O(c^*)$.

Having established that $d$ could be the dominating term in the regret for any reasonable time horizon, we now discuss the hardness of defining an optimal $d$.
Let us first consider a simple two-arm full-information problem and inspect algorithms of the style: ``Play arm 1, unless the cumulative reward of arm 2 exceeds that of arm 1 by a threshold of $\tau$.'' This kind of algorithm has small regret (small $d$) if arm 1 is optimal, and large otherwise. Tuning $\tau$ yields different trade-offs between the two scenarios.
The same issue appears in learning with graph feedback on a larger scale.
Given two instances defined by gap vectors $\Delta$ and $\Delta'$ respectively, an agent can trade off the constant regret part $d(\Delta)$ and $d(\Delta')$ in the two instances.
Take for example two algorithms $\Acal$ and $\Bcal$ and assume the respective values of $c^*$ and $d$ for the two instances and algorithms are given by Table~\ref{tab:toy}.
As we show in the Appendix~\ref{app:example_d}, there exist indeed a feedback graph and instances that are consistent with the table.
\begin{table}[ht]
\begin{center}
\begin{tabular}{l|c|c|c}
     &$c^*$& $d$ for Alg. $\Acal$ & $d$ for Alg. $\Bcal$\\
     \hline
    Instance $\Delta$ & $C/3$ & $ C/3$ & $4C/9$\\
    Instance $\Delta'$& $4C\varepsilon$ & $C/2$ & $4C\varepsilon$
\end{tabular}
\caption{Comparison of $c^*$ and $d$.}
\label{tab:toy}
\end{center}
\vskip -.15in
\end{table}
Which algorithm is more ``optimal'', $\Acal$ or $\Bcal$?
$\Bcal$ ensures that $\max_{\delta\in\{\Delta,\Delta'\}}\frac{d(\delta)}{c^*} = \Ocal(1)$ and we can write the regret function as $f(T)=\Ocal(c^*\log(T))$ without the need of a constant term $d$ at all. This algorithm minimizes the competitive ratio of $d$ and $c^*$.
$\Acal$ minimizes the worst-case absolute regret $\Acal = \argmin_{a\in\{\Acal,\Bcal\}}\max_{\delta\in \{\Delta,\Delta'\}} d(\delta,a)$.

Thus, we argue that the notion of optimality is subject to a choice
and that there is no unique correct answer.
In this paper, we opt for the second choice, a minimax notion of optimality, for the following reasons: 1.~Theorem~\ref{thm:lower_bound_cstar} shows that a constant competitive ratio is generally unachievable; 2.~Optimizing regret in general is a different objective than that of competitive ratio. Optimizing for a mixture implies a counter-intuitive preferences such as: ``In a hard environment where I cannot avoid suffering a loss of 1000, it does not matter much if I suffer an additional 1000 on top, as long as I do better on easier environments.'' 3. Moreover, note that, even if one were interested in optimizing the competitive ratio between $c^*$ and $d$, it is unclear if one could achieve that computationally efficiently.

We present our final definition for an optimal notion of $d^*$ in Section~\ref{sec: d lower bound}.
The high level idea is to take all confusing instances, where the means are perturbed by less than $\Delta_s$ and consider the worst-case regret any algorithm suffers over these instances until identifying all gaps up to $\Delta_s$ precision.

\section{Algorithm and regret upper bounds}

Our algorithm works by approximating the gaps $(\Delta_i)_{i \in [K]}$
and then solving a version of \ref{eq:lp}.
First, note that all arms $i$ with gaps $\Delta_i \leq \frac{1}{T}$ can be ignored as the total contribution to the regret is at most $O(1)$. We now segment the interval $[\frac{1}{T},1]$, containing each relevant gap, into sub-intervals $[2^{-s},2^{-s+1}]$, where $s \in [\lceil\log_2(T)\rceil]$. The algorithm now proceeds in phases corresponding to each of the $\lceil\log_2(T)\rceil$ intervals. During phase $s$, all arms with gaps $\Delta_i \in [2^{-s},2^{-s+1}]$ will be observed sufficiently many times to be identified as sub-optimal. 

For phase $s$, let $\Delta_s = 2^{-s}$
denote the smallest possible gap that can be part of the interval $[2^{-s},2^{-s+1}]$ and define the clipped gap vector as $\Delta^s
\in \RR^K, \Delta^s_i = \Delta_s\lor \Delta_i$. Further, define the
set $\Gamma_s = \curl*{i \in [K]\colon \Delta_i \leq 2\Delta_s}$. $\Gamma_s$ consists of all optimal arms $I^*$ and all sub-optimal arms with gaps small enough, making them impossible to distinguish from optimal arms.
Define the following LP
\begin{equation}
\label{eq:lp2}
        \min_{x \in \RR^{[K]}} \ \tri*{\Delta^s, x } \qquad
        s.t. \ \sum_{j\in N_i} x_j \geq \frac{1}{\Delta_s^2},
        \ \forall i\in \Gamma_s\,.\tag{LP2}
\end{equation}
For any arm $i$ such that $\Delta_i \in [2^{-s},2^{-s+1}]$ observing $i$ for $\frac{1}{\Delta_s^2}$ times is sufficient to identify $i$ as a sub-optimal arm. Further, information theory dictates that $i$ needs to be observed at least $\frac{1}{\Delta_{s-1}^2}$ times to be distinguished as sub-optimal. Thus, the constraints of \ref{eq:lp2} are necessary and sufficient for identifying the sub-optimal arms $i$ with $\Delta_i \in [2^{-s},2^{-s+1}]$.
Furthermore, since there is no sufficient information to
distinguish between any two arms $i$ and $j$ with gaps $\Delta_i
\leq \Delta_j < \Delta_s$, we choose to treat all of them as equal in
the objective of the LP. Indeed, Lemma~\ref{lem:dstar_lower} shows
that for any graph $G$ and any algorithm, there exists an assignment
of the gaps $\Delta_i < \Delta_s$ so that the algorithm will suffer
regret proportional to the value of \ref{eq:lp2}.  

In practice, it is impossible to devise an algorithm that solves and plays according to \ref{eq:lp2} because even during phase $s$, there is still no
complete knowledge of the gaps $\Delta_i > \Delta_s$, but, rather only
empirical estimators, and so there is no access to $\Delta^s$.
We also replace the constraints by a confidence interval term of the order $\frac{\log(1/\delta_s)}{\Delta_s^2}$. This enables us to bound the probability of failure for the algorithm by $\delta_s$ during phase $s$.
We note that standard choices of $\delta_s$ such as $\delta_s =
\Theta\left(\frac{1}{T}\right)$ from UCB-type strategies will result in a regret bound that has a sub-optimal time-horizon dependence.
This suggests that a more careful choice of $\delta_s$
must be determined.
\vspace*{-5pt}
\subsection{Algorithm}
\vspace*{-5pt}
To describe our algorithm, we will adopt the following definitions and notation.
Let $\tau_s$ denote the last time-step of phase $s$.
We will denote by $n_i(s)$ the total number of times
the reward of arm $i$ is observed up to and including $s$, $n_i(s) = \sum_{t=1}^{\tau_s}\Ind(i_t\in N_i)$, and by $r_i(s)$
the average reward observed,
    $\hat r_i(s) = \bracket*{\sum_{t=1}^{\tau_s}r_{t,i} \Ind(i_t\in N_i)}/n_i(s)$.
We also denote by 
$\hat\Delta_i(s)$ a lower bound on $\Delta^s$ with a shrinking confidence interval $b_i(s)$ and by $\hat \Gamma_s$ the empirical version of the set $\Gamma_s$:
\begin{align*}
& \hat \Delta_{i}(s) = \Delta_s\lor\max_{j\in [K]} \hat r_j(s) - b_j(s) - \hat r_i(s) - b_i(s), \text{ where }
    b_i(s) = \sqrt{\frac{3\alpha \log(\frac{ K}{\Delta_{s+1}})}{n_i(s)}}\\
    &\hat\Gamma_s := \{i\in[K]\,|\,\hat\Delta_i(s-1)\leq 2\Delta_s\}
\end{align*}
Our algorithm solves an empirical version of \eqref{eq:lp2} at each phase, which is the following LP:
\begin{equation}
\label{eq:emp_lp}
    \begin{aligned}
        \min_{x\in \mathbb{R}_{+}^K} \tri*{x,\hat\Delta(s-1)} \qquad
        s.t. & \  \sum_{j \in N_i} x_j \geq \frac{\alpha' \log(\frac{K}{\Delta_{s+1}})}{\Delta_s^2}, \forall i \in \hat \Gamma_s,\\
        &\ \sum_{j \in N_i} x_j \geq \frac{\alpha'}{\hat\Delta_i^2(s-1)}, \forall i \not \in \hat \Gamma_s\,.
    \end{aligned}\tag{LP3}
\end{equation}
Pseudocode can be found in Algorithm~\ref{alg:emp_lp}.
In the first $\lceil\log(K)\rceil$ rounds, the algorithm just plays according to the minimum dominating set of $G$. This is because there is not enough information regarding any of the gaps.
Denote the approximate solution of \ref{eq:emp_lp} as $x^*_{\ref{eq:emp_lp}}$ at phase $s$. Then at every round of
phase $s$ we play each arm exactly $\lceil (x^*_{\ref{eq:emp_lp}})_i\rceil$ many times.
Phase $s$ then ends after
$\sum_{j\in[K]}\lceil(x^*_{\ref{eq:emp_lp}})_i\rceil$ rounds. We
note that it is sufficient to approximately solve
\ref{eq:emp_lp} so that the constraints are satisfied up to some
multiplicative factor and the value of the solution is bounded by a
multiplicative factor in the value of the LP.

\begin{algorithm}[t]
\SetKwInOut{Input}{Input}
\Input{Graph $G=(V,E)$, confidence parameter $\delta$, time horizon $T$}
\textbf{Initialize $t=0$, $s=0$, $\hat r_i(0) = 0, \forall i \in [K]$}\\
Compute (approximate) minimum dominating set $\hat\Dcal(G)$\\
\While{$s \leq \lceil \log(K) \rceil$}{
    Play each arm $i\in \hat\Dcal(G)$ for $\frac{\alpha' \log(\frac{K}{\Delta_{s+1}})}{\Delta_s^2}$ rounds\\
    Update $t$ and $s$.
}
\While{$t\leq T$}{
    Compute a (approximate) solution $x^*_{\ref{eq:emp_lp}}$ to \ref{eq:emp_lp}.\\
    Play each action $i$ for $\lceil (x^*_{\ref{eq:emp_lp}})_i\rceil$ rounds and update $t$.\\
    Update the phase $s+=1$.
}
\caption{Algorithm based on \ref{eq:emp_lp}}
\label{alg:emp_lp}
\end{algorithm}
\vspace*{-5pt}
\subsection{Regret bound}
\vspace*{-5pt}
The first step in the regret analysis of Algorithm~\ref{alg:emp_lp} is
to relate the value of \ref{eq:emp_lp} to the value of \ref{eq:proxy_lp_mod} based on the
true gaps given below.
\begin{equation}
\label{eq:proxy_lp_mod}
    \begin{aligned}
       \min_{x\in \mathbb{R}_{+}^K} \tri*{x,\Delta^s} \qquad
        s.t. \sum_{j \in N_i} x_j &\geq \frac{\alpha' \log(\frac{K}{\Delta_{s+1}})}{\Delta_s^2}, \forall i \in \Gamma_s,\\
        \sum_{j \in N_i} x_j &\geq \frac{\alpha'}{\Delta_i^2}, \forall i \not \in \Gamma_s.
    \end{aligned}\tag{LP4}
\end{equation}
We do so by showing that $\hat\Gamma_{s+1}\subseteq \Gamma_s$ and that
$\hat\Delta(s) = \Theta(\Delta^s)$. This allows us to upper upper
bound the value of \ref{eq:emp_lp} by the value of
\ref{eq:proxy_lp_mod} in the following way.
\begin{lemma}
\label{lem:emp_lp_reg_bound}
Let $D_{\ref{eq:emp_lp}}(s)$ be the value of \ref{eq:emp_lp} at phase $s$ and let $D_{\ref{eq:proxy_lp_mod}}(s)$ be the value of \ref{eq:proxy_lp_mod} at phase $s$. For any $s\geq \log(K)\lor 10$ holds that
    $D_{\ref{eq:emp_lp}}(s+1) \leq 4D_{\ref{eq:proxy_lp_mod}}(s),$
with probability at least $1-3\left(\frac{\Delta_{s/2+1}}{K}\right)^{\alpha-2}$.
Further, for any $s\geq \log(|I^*|/(4\Delta_{\min}))\lor 10$
it holds that the regret incurred for playing according to \ref{eq:emp_lp} is at most $16\alpha' c^*(G,\mu)$
with the same probability.
\end{lemma}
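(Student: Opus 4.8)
The plan is to condition on a high-probability ``clean event'' $\mathcal{E}_s$ on which $|\hat r_i(s') - \mu_i| \leq b_i(s')$ holds simultaneously for every arm $i$ and every phase $s' \leq s$, and then treat both assertions as deterministic statements about the three linear programs. On $\mathcal{E}_s$ two facts follow directly from the definition of $\hat\Delta_i$. Since $\max_j(\hat r_j(s) - b_j(s)) \leq \mu^*$ and $\hat r_i(s) + b_i(s) \geq \mu_i$, the estimator is a genuine lower bound, $\hat\Delta_i(s) \leq \Delta^s_i$. Using instead $\max_j(\hat r_j(s) - b_j(s)) \geq \mu^* - 2 b_{j^*}(s)$ for an optimal arm $j^*$ and $\hat r_i(s) + b_i(s) \leq \mu_i + 2b_i(s)$ gives $\hat\Delta_i(s) \geq \Delta_s \lor (\Delta_i - 2 b_{j^*}(s) - 2b_i(s))$. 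Once the widths $b_{j^*}(s), b_i(s)$ are shown to be at most a small constant times $\Delta^s_i$, this upgrades to the two-sided estimate $\tfrac12 \Delta^s_i \leq \hat\Delta_i(s) \leq \Delta^s_i$, which yields $\hat\Gamma_{s+1} \subseteq \Gamma_s$: if $\Delta_i > 2\Delta_s$ then $\hat\Delta_i(s) \geq \tfrac12\Delta_i > \Delta_s = 2\Delta_{s+1}$, so $i \notin \hat\Gamma_{s+1}$.

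The width bound is the technical heart. An arm $i$ appearing in a constraint of \ref{eq:emp_lp} is observed at least as many times as that constraint's right-hand side, since playing $\lceil x_j\rceil$ copies of every $j \in N_i$ forces $n_i(s)$ to exceed $\sum_{j\in N_i} x_j$. For $i \in \hat\Gamma_s$ this gives $n_i(s) \geq \alpha' \log(K/\Delta_{s+1})/\Delta_s^2$, whence $b_i(s)^2 \leq \tfrac{3\alpha}{\alpha'}\Delta_s^2$: the logarithmic factors cancel exactly, and a large enough constant $\alpha'/\alpha$ yields $b_i(s) \leq \tfrac14\Delta_s$. For a sub-optimal $i \notin \hat\Gamma_s$ one phase does not suffice; here I would sum the per-phase observations over all phases since $i$ left the active set, so that once $s$ exceeds roughly twice the elimination index $s_i \approx \log(1/\Delta_i)$, the accumulated count again kills the stray $\log$ factor and gives $b_i(s) \leq \tfrac14\Delta_i$. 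This accumulation is precisely why the failure probability is indexed by $\Delta_{s/2+1}$: a sub-Gaussian tail bounds the per-arm failure by $(\Delta_{\cdot}/K)^{\Theta(\alpha)}$, and a union bound over the $K$ arms and the relevant phases, with the polynomial slack absorbed into the exponent, produces the stated $3(\Delta_{s/2+1}/K)^{\alpha-2}$.

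For the first inequality I would prove $D_{\ref{eq:emp_lp}}(s+1) \leq 4 D_{\ref{eq:proxy_lp_mod}}(s)$ by feasibility transfer: letting $x^*$ be optimal for \ref{eq:proxy_lp_mod} at phase $s$, I show $4x^*$ is feasible for \ref{eq:emp_lp} at phase $s+1$. For $i \in \hat\Gamma_{s+1} \subseteq \Gamma_s$, the phase-$s$ constraint already gives $\sum_{j\in N_i} x^*_j \geq \alpha'\log(K/\Delta_{s+1})/\Delta_s^2$, and since $\Delta_{s+1}^2 = \Delta_s^2/4$ the factor $4$ supplies the required right-hand side (the mild growth of $\log(K/\Delta_{s+2})$ over $\log(K/\Delta_{s+1})$ is controlled by $s \geq \log(K) \lor 10$). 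For $i \notin \hat\Gamma_{s+1}$ with $i \in \Gamma_s$, clipping $\hat\Delta_i(s) \geq \Delta_s$ makes $\alpha'/\hat\Delta_i^2(s)$ no larger than $\alpha'/\Delta_s^2$, already met by $x^*$; for $i \notin \Gamma_s$, the two-sided estimate gives $\alpha'/\hat\Delta_i^2(s) \leq 4\alpha'/\Delta_i^2 \leq 4\sum_{j\in N_i} x^*_j$. Feasibility of $4x^*$ then yields $D_{\ref{eq:emp_lp}}(s+1) \leq 4\langle x^*, \hat\Delta(s)\rangle \leq 4\langle x^*, \Delta^s\rangle = 4 D_{\ref{eq:proxy_lp_mod}}(s)$, the middle step using $\hat\Delta(s) \leq \Delta^s$.

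For the regret claim, the key point is that optimal arms have zero gap, so the per-phase regret equals $\sum_{i \notin I^*} \lceil (x^*_{\ref{eq:emp_lp}})_i\rceil \Delta_i$ and the (possibly large) mass the LP puts on optimal arms to meet their covering constraints is free. For $s \geq \log(|I^*|/(4\Delta_{\min})) \lor 10$ we have $\Gamma_s = I^*$, so \ref{eq:proxy_lp_mod} imposes only the constraints $\alpha'/\Delta_i^2$ on sub-optimal arms — exactly $\alpha'$ times those of \ref{eq:lp} — plus optimal-arm constraints satisfiable by self-loop mass on the optimal arms; scaling the optimizer of \ref{eq:lp} by $\alpha'$ and adding this free mass gives a feasible solution whose sub-optimal part costs at most $\alpha' c^*(G,\mu)$, which I transfer to $x^*_{\ref{eq:emp_lp}}$ through the first part and $\Delta_i \leq 2\hat\Delta_i(s-1)$, absorbing ceilings and constants into $16\alpha'$. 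I expect the main obstacle to be exactly this separation of the sub-optimal mass from the total LP value: the algorithm minimizes $\langle x, \hat\Delta(s-1)\rangle$, which also charges the costly coverage of optimal arms (a term growing like $|I^*|\log(K/\Delta_s)/\Delta_s$), so bounding the regret by the full value is hopeless. The resolution is an exchange argument showing that any optimizer of \ref{eq:emp_lp} can be modified, without increasing its objective or its regret, so that sub-optimal-arm mass serves only sub-optimal constraints and is therefore governed by the \ref{eq:lp}-type constraints and bounded by $\Ocal(\alpha' c^*)$; verifying that redirecting optimal-arm coverage onto the cheap optimal arms preserves feasibility of every constraint is where the care is required.
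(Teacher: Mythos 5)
Your proposal follows the paper's own proof essentially step for step: the same clean event with a union bound over phases $[s/2,s]$ (with per-phase observation counts accumulated to kill the stray $\log(K/\Delta_{s+1})$ factor in $b_i(s)$), the same two-sided estimate $\tfrac12\Delta^s \leq \hat\Delta(s)\leq \Delta^s$ yielding $\hat\Gamma_{s+1}\subseteq\Gamma_s$, the same feasibility transfer of $4x^*_{\ref{eq:proxy_lp_mod}}(s)$ into \ref{eq:emp_lp} for the first claim, and, for the regret claim, the same exchange argument (the paper phrases it as a proof by contradiction) that redirects the sub-optimal mass of the \ref{eq:emp_lp} optimizer onto the cheap optimal arms and compares against a solution of the LP with the constraints on $I^*$ and its neighborhood dropped, whose cost is $O(\alpha' c^*)$ via the scaled \ref{eq:lp} optimizer and $\Delta_i \leq 2\hat\Delta_i(s)$. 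The plan is correct and matches the paper's argument.
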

Lemma~\ref{lem:emp_lp_reg_bound} shows that playing
Algorithm~\ref{alg:emp_lp} is already asymptotically optimal, as the
incurred regret during any phase $s\geq \log(|I^*|/\Delta_{\min})$ starts
being bounded by $O(c^*)$.  There are two challenging parts in proving
Lemma~\ref{lem:emp_lp_reg_bound}. First is how to handle the concentration of
$\hat\Delta_i(s)$ for actions $i\not\in\hat\Gamma_s$ which have been
eliminated prior to phase $s$. This challenge arises because $\alpha'$
needs to be set as a time-independent parameter as the time-horizon
part of the regret incurred by the algorithm will depend on
$\alpha'$. We notice that for any phase $s \geq 2$ the event that the empirical reward, $\hat r_i(t)$,
concentrates uniformly around its mean $\mu_i$ in the interval $t \in
[s/2,s]$ can be controlled with high probability. This in turn guarantees that the
empirical gap estimator $\hat\Delta(t)$ is small enough and hence
action $i$ is observed sufficiently many times in phases $[s/2,s]$.

The second challenge is to analyze the regret of the solution of \ref{eq:emp_lp} directly, for any $s \geq \log\left(\frac{|I^*|}{K}\right)$ so that we can bound this regret by $c^*$. The key observation is that there exists a $\hat x^*$ which is feasible (with high probability) for \ref{eq:lp} with the property that 
$\langle \hat x^*, \hat\Delta(s) \rangle \leq O(c^*)$ and further $\sum_{i\not \in I^*} x^*_{\ref{eq:emp_lp},i}\hat\Delta_i(s) \leq 2\sum_{i\not \in I^*} \hat x^*_i\hat\Delta_i(s)$. This is sufficient to conclude that $D_{\ref{eq:emp_lp}}(s) \leq O(c^*)$

Lemma~\ref{lem:emp_lp_reg_bound} can now be combined with the
observation that the constraints of \ref{eq:lp2} are a subset of
the constraints of \ref{eq:proxy_lp_mod}, up to a logarithmic
factor in $\frac{1}{\Delta_{\min}}$, to argue the following upper
regret bound.
\begin{theorem}
\label{thm:emp_lp_opt}
Let $d^*(G,\mu) = \max_{s\leq \log(|I^*|/\Delta_{\min})}
D_{\ref{eq:lp2}}(s)$. There exists an algorithm with expected regret $\Reg(T)$
bounded as
    $\Reg(T) \leq O\left(\log^2\left(\frac{1}{\Delta_{\min}}\right)d^* +
    \log(T)c^* + \gamma(G)K\log(K)\right)\,.$
\end{theorem}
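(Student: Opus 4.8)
The plan is to decompose the regret over the algorithm's phases and over the high-probability events of Lemma~\ref{lem:emp_lp_reg_bound}, splitting the phases at the threshold $s^\dagger = \log(|I^*|/\Delta_{\min})$ into a \emph{finite-time regime} $s \le s^\dagger$ and an \emph{asymptotic regime} $s > s^\dagger$. First I would peel off the initial $\lceil\log K\rceil$ exploration phases, during which the algorithm plays only the (approximate) minimum dominating set $\hat\Dcal(G)$; since these involve $O(\gamma(G))$ distinct arms and resolve gaps only down to the coarsest scale, their total regret is charged to the $\gamma(G)K\log(K)$ term. For the remaining phases, the regret of phase $s$ equals $\sum_i \lceil (x^*_{\ref{eq:emp_lp}})_i\rceil \Delta_i$; using $\Delta_i \le \Delta^s_i$ and, on the good event, $\hat\Delta(s-1) = \Theta(\Delta^{s-1})$, this is within a constant factor of $D_{\ref{eq:emp_lp}}(s)$ up to a rounding overhead $\sum_{i\in\supp(x^*_{\ref{eq:emp_lp}})}\Delta^s_i$, which I would control by taking an approximate \ref{eq:emp_lp} solution with support $O(\gamma(G)\log K)$ and again charging it to $\gamma(G)K\log K$.

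To turn the high-probability per-phase guarantees into a bound on $\E[\Reg(T)]$, I would bound the regret incurred when the concentration event of Lemma~\ref{lem:emp_lp_reg_bound} fails in a phase. Rather than the crude horizon bound, I charge this against the phase's own play budget $\sum_i\lceil(x^*_{\ref{eq:emp_lp}})_i\rceil = O(\log(1/\Delta_{\min})/\Delta_s^2)$; multiplied by the failure probability $3(\Delta_{s/2+1}/K)^{\alpha-2}$ and summed over phases, the polynomial decay in $\Delta_{s/2+1}/K$ with a sufficiently large constant $\alpha$ makes the geometric sum $O(1)$, while the exponentially unlikely event of ever eliminating an optimal arm is accounted for separately. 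On the good event, the asymptotic regime is then immediate from the second part of Lemma~\ref{lem:emp_lp_reg_bound}: each phase $s > s^\dagger$ contributes actual regret at most $16\alpha' c^*$, and with at most $\lceil\log_2 T\rceil$ phases this sums to $O(c^*\log T)$.

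The crux is the finite-time regime. For $s \le s^\dagger$ I apply the first part of Lemma~\ref{lem:emp_lp_reg_bound} to pass from $D_{\ref{eq:emp_lp}}(s+1)$ to $D_{\ref{eq:proxy_lp_mod}}(s)$, and then bound $D_{\ref{eq:proxy_lp_mod}}(s)$ by constructing an explicit feasible point $x = x^{(1)} + x^{(2)}$ for \ref{eq:proxy_lp_mod}. Here $x^{(1)}$ is a solution of \ref{eq:lp2} scaled by $\alpha'\log(K/\Delta_{s+1})$: because the $\Gamma_s$-constraints of \ref{eq:lp2} and \ref{eq:proxy_lp_mod} coincide up to exactly this factor, $x^{(1)}$ satisfies all constraints with $i\in\Gamma_s$ at cost $\alpha'\log(K/\Delta_{s+1})\,D_{\ref{eq:lp2}}(s)$. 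The component $x^{(2)}$ is an $\alpha'$-scaled witness for \ref{eq:lp}; since $\Gamma_s\supseteq I^*$, every $i\notin\Gamma_s$ is suboptimal, so $x^{(2)}$ meets the remaining constraints $\sum_{j\in N_i}x_j\ge\alpha'/\Delta_i^2$. The main obstacle is that the objective of \ref{eq:proxy_lp_mod} uses the \emph{clipped} gaps $\Delta^s$ rather than $\Delta$, so one must show $\langle x^{(2)},\Delta^s\rangle = O(c^*)$: splitting over $\{i\notin\Gamma_s\}$, where $\Delta^s_i=\Delta_i$ and the contribution is at most $\alpha' c^*$, and over $\{i\in\Gamma_s\}$, where $\Delta^s_i\le 2\Delta_s$ and the contribution must be shown to be $O(c^*)$ — either by re-covering these small-gap arms with the $x^{(1)}$ component or by bounding the witness mass they carry. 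Combining the two components gives $D_{\ref{eq:proxy_lp_mod}}(s) \le O(\log(1/\Delta_{\min}))\,D_{\ref{eq:lp2}}(s) + O(c^*)$, using $\log(K/\Delta_{s+1}) = O(\log(1/\Delta_{\min}))$ for $s\le s^\dagger$, and hence $\le O(\log(1/\Delta_{\min}))\,d^* + O(c^*)$ by the definition of $d^*$.

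Finally I would sum the finite-time phase bounds over the $O(\log(1/\Delta_{\min}))$ phases $s\le s^\dagger$, obtaining $O(\log^2(1/\Delta_{\min})\,d^*) + O(\log(1/\Delta_{\min})\,c^*)$, and absorb the second term into the $\log(T)\,c^*$ term from the asymptotic regime. Adding the initialization-and-rounding contribution $O(\gamma(G)K\log K)$ and the $O(1)$ failure contribution yields $\Reg(T) \le O(\log^2(1/\Delta_{\min})\,d^* + \log(T)\,c^* + \gamma(G)K\log K)$, as claimed. I expect the delicate clipped-gap bound $\langle x^{(2)},\Delta^s\rangle = O(c^*)$ — the quantitative separation of the proxy-LP value into a $d^*$-part and a $c^*$-part — to be the principal difficulty, with the expectation conversion of the per-phase failure probabilities a close second.
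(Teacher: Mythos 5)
Your overall skeleton matches the paper's proof: phase decomposition split at $s^\dagger = \log(|I^*|/\Delta_{\min})$, the dominating-set initialization charged to $\gamma(G)K\log K$, failure probabilities multiplied by per-phase regret budgets, and the asymptotic phases handled by the second part of Lemma~\ref{lem:emp_lp_reg_bound}. The gap is exactly at what you call the crux, and it is not merely difficult --- your per-phase inequality $D_{\ref{eq:proxy_lp_mod}}(s) \le O(\log(1/\Delta_{\min}))\,D_{\ref{eq:lp2}}(s) + O(c^*)$, obtained by charging the $i\notin\Gamma_s$ constraints to a scaled \ref{eq:lp} witness, is \emph{false}. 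Consider a single optimal arm $o$; arms $a_1,\dots,a_m$ with gap $\Delta_{\min}$, each adjacent to $o$; and arms $b_1,\dots,b_m$ with gap $4\Delta_s$, where $b_j$ is adjacent only to $a_j$ (and itself). Then $c^* \le m\Delta_{\min}/(16\Delta_s^2) \to 0$ as $\Delta_{\min}\to 0$ (cover every $a_j$ by mass on $o$ for free, and each $b_j$ by mass $1/(16\Delta_s^2)$ on $a_j$, which is nearly free under the \emph{true} gaps), and $D_{\ref{eq:lp2}}(s) = O(1/\Delta_s)$ since $\Gamma_s = \{o,a_1,\dots,a_m\}$ is dominated by $o$ alone. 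But in \ref{eq:proxy_lp_mod} the $m$ constraints for the $b_j$ require mass $\Omega(1/\Delta_s^2)$ on the \emph{disjoint} sets $\{a_j,b_j\}$, and every unit of that mass costs at least $\Delta_s$ under the clipped objective $\Delta^s$, so $D_{\ref{eq:proxy_lp_mod}}(s) = \Omega(m/\Delta_s)$, which exceeds $O(\log(1/\Delta_{\min}))D_{\ref{eq:lp2}}(s) + O(c^*)$ by an arbitrarily large factor. Neither of your proposed repairs can work: the problem is not that small-gap arms' own constraints need re-covering (those are handled by $x^{(1)}$), but that the witness mass serving the $i\notin\Gamma_s$ constraints necessarily sits on $\Gamma_s$ arms and is re-priced at $\Delta_s$ instead of $\Delta_i$; no arrangement of that mass has clipped cost $O(c^*)$ here.

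The correct charge --- and the paper's route --- is to $d^*$ rather than to $c^*$. Every $i\notin\Gamma_s$ lies in $\Gamma_t$ for the earlier phase $t$ with $\Delta_t < \Delta_i \le 2\Delta_t$, so the scaled sum $(\alpha\log(K/\Delta_{s+1})\lor\alpha')\sum_{t\leq s} x^*_{\ref{eq:lp2}}(t)$ of the \ref{eq:lp2} solutions of \emph{all} phases $t\le s$ is feasible for \ref{eq:proxy_lp_mod}: the phase-$s$ component serves the $\Gamma_s$ constraints and the phase-$t$ components serve the constraints of arms that left $\Gamma$ at phase $t$. By the componentwise monotonicity $\Delta^s \le \Delta^t$ for $t\le s$, its objective is at most $O(\log(K/\Delta_{s+1}))\sum_{t\leq s}D_{\ref{eq:lp2}}(t)$, and since each $D_{\ref{eq:lp2}}(t)\le d^*$ for $t\le s\le s^\dagger$, summing over the finite-time phases yields the $\log^2(1/\Delta_{\min})\,d^*$ term; $c^*$ enters only through the phases $s> s^\dagger$, exactly as in your asymptotic regime. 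In the counterexample above this is what saves the theorem: $d^* \ge D_{\ref{eq:lp2}}(t) = \Omega(m/\Delta_s)$ at the phase $t$ with $\Delta_t = 4\Delta_s$, so the cost you tried to push into $c^*$ was already paid, in the $d^*$ sense, at an earlier phase. Without this cross-phase construction your finite-time bound does not go through.
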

We note that Algorithm~\ref{alg:emp_lp} can incur additional regret of order $O(K)$ per phase due to the rounding, $\lceil x^*_{\ref{eq:emp_lp}}\rceil$, of the solution to \ref{eq:emp_lp}. Thus its regret will only be asymptotically optimal in the setting when $\Delta_{\min} \leq O(1/K)$. To fix this minor issue, we present an algorithm with more careful rounding in Appendix~\ref{app:alg_mod}, which enjoys the regret bound of Theorem~\ref{thm:emp_lp_opt}.

\section{Regret lower bounds}

\label{sec: d lower bound}
\paragraph{Lower bound with $d^*$.}
We are able to show the following result for any algorithm.
\begin{lemma}
\label{lem:dstar_lower}
Fix any instance $\mu$ s.t. $\mu_i \leq 1-2\Delta_s, i\in I^*$. Let $\Lambda_s(\mu)$ be the set of problem instances with means $\mu'\in\mu+[0,2\Delta^s]^k$.
Then for any algorithm, there exists an instance in $\Lambda_s(\mu)$ such that the regret is lower bounded by \ref{eq:lp2}.
\end{lemma}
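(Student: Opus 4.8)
The plan is to prove the lemma through a standard information-theoretic change-of-measure argument, combined with the LP duality structure of \ref{eq:lp2}. The key insight is that the set $\Lambda_s(\mu)$ contains all instances obtained by perturbing the means upward by at most $2\Delta^s$, so any algorithm must, in the worst case over this set, spend enough observations to distinguish the true instance from all its perturbations. Since the perturbations are at scale $\Delta_s$, distinguishing an arm $i$ requires observing it roughly $\frac{1}{\Delta_s^2}$ times (this is exactly the scale appearing in the constraints of \ref{eq:lp2}), and each such observation on a suboptimal arm incurs regret proportional to its gap $\Delta^s_i$.

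First I would set up the change-of-measure machinery. For a fixed algorithm and the base instance $\mu$, let $n_i$ denote the expected number of observations of arm $i$. Using the Bretagnolle--Huber inequality or the standard divergence decomposition (as in \citet{lattimore2020bandit}), for any alternative instance $\mu' \in \Lambda_s(\mu)$ that makes a different arm optimal, the sum of KL-divergences weighted by observation counts must be $\Omega(1)$ to keep the algorithm's behavior from being confused. For Gaussian rewards with the stated variance, the KL-divergence between arm $i$'s reward under $\mu$ and $\mu'$ scales as $(\mu_i - \mu'_i)^2$. Because we may perturb any arm $i \in \Gamma_s$ by an amount $\sim\Delta_s$ (these are precisely the arms indistinguishable from optimal at scale $\Delta_s$), the confusing-instance constraint forces $\sum_{j \in N_i} n_j \gtrsim \frac{1}{\Delta_s^2}$ for each such $i$, matching the constraint set of \ref{eq:lp2}.

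Next I would convert observation counts into regret. The expected regret is $\sum_i n_i^{\mathrm{play}} \Delta_i$ where $n_i^{\mathrm{play}}$ is the number of times arm $i$ is \emph{played}; relating plays to observations via the neighborhood structure, and treating all arms with gap below $\Delta_s$ as contributing at the clipped rate $\Delta_s$ (justified because they are statistically indistinguishable and the adversary sets their gaps adversarially in $[0, 2\Delta_s]$), the regret is lower bounded by $\min \tri*{\Delta^s, x}$ subject to exactly the coverage constraints $\sum_{j \in N_i} x_j \geq \frac{1}{\Delta_s^2}$ for $i \in \Gamma_s$. This minimization is precisely \ref{eq:lp2}, so any feasible observation vector yields regret at least $D_{\ref{eq:lp2}}(s)$. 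The adversarial choice over $\Lambda_s(\mu)$ is what lets us claim the bound holds for \emph{some} instance in the set rather than in expectation over a prior.

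The main obstacle I anticipate is handling the arms with $\Delta_i < \Delta_s$ correctly. These arms cannot be distinguished from optimal ones at the current resolution, so the adversary is free to assign their true gaps anywhere in $[0, 2\Delta_s]$; the argument must show that \emph{whatever} the algorithm does, some assignment forces it to pay the clipped cost $\Delta_s$ on the relevant arms simultaneously with satisfying all coverage constraints. This requires a careful minimax exchange: rather than analyzing a single confusing instance, I would either (i) fix the algorithm, look at its observation counts on the base instance, identify an under-observed arm in $\Gamma_s$, and construct the single confusing instance that elevates that arm; or (ii) use a Bayesian/averaging argument over a family of confusing instances and extract a worst case. The LP value $D_{\ref{eq:lp2}}(s)$ being a minimum means I must show the algorithm's induced cost dominates this minimum for \emph{every} feasible strategy, which is exactly the duality-flavored step where I expect the technical care to concentrate.
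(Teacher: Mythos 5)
Your high-level ingredients are the right ones, and your route (i) --- fix the algorithm, find an under-observed arm in $\Gamma_s$, and elevate its mean --- is exactly the skeleton of the paper's proof. But the central step of your argument, as written, does not hold, and the gap sits precisely at the point you defer to ``technical care.'' You claim that the coverage constraints of \ref{eq:lp2} are ``forced'' on the algorithm's observation counts, and hence that the regret dominates $\min \tri{\Delta^s, x}$ over feasible $x$. Neither half of this is true on any single instance. On the base instance $\mu$, nothing forces feasibility: the algorithm may simply never collect $1/\Delta_s^2$ observations of some arm in $\Gamma_s$ (e.g., it plays one optimal arm forever). Conversely, even if the algorithm's play counts \emph{are} feasible for \ref{eq:lp2}, its regret on $\mu$ can be zero: the objective of \ref{eq:lp2} uses the clipped gaps $\Delta^s_i \geq \Delta_s$, whereas the true gaps of the arms in $I^*$ vanish, so the algorithm can satisfy every constraint at no cost by playing optimal arms that dominate $\Gamma_s$. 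The LP value therefore never lower-bounds the regret on the base instance; it can only be realized as regret in a suitably perturbed instance, and deciding \emph{which} perturbation and \emph{at what time} the counts are examined is where the whole proof lives.

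The paper resolves this with a stopping-time device that your proposal lacks. Define $\tau$ as the first time at which the expected number of observations of every arm in $\Gamma_s$ reaches $\frac{1}{4\sqrt{2}\Delta_s^2}$. If $\tau = \infty$, elevate the permanently under-observed arm by $2\Delta_s$; by Pinsker's inequality the algorithm keeps ignoring it, so its regret in the new instance grows linearly in $T$ and eventually exceeds $D_{\ref{eq:lp2}}(s)$ --- a case your proposal never mentions and which cannot be absorbed into the feasibility argument. If $\tau < \infty$, the counts at $\tau$ are (up to a constant) feasible for \ref{eq:lp2}, so either the regret already accrued on arms outside $\Gamma_s$ is $\Omega(D_{\ref{eq:lp2}}(s))$ (and we are done), or one deduces $\tau \Delta_s \geq D_{\ref{eq:lp2}}(s)/16$. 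Now comes the key simultaneity: the arm $i^*$ attaining the minimum at time $\tau$ has only $O(1/\Delta_s^2)$ observations, so elevating it by $2\Delta_s$ changes the algorithm's law by at most a constant in total variation; under the elevated instance the algorithm therefore observes $i^*$ in at most half of the $\tau$ rounds, and each of the remaining $\tau/2$ rounds plays an arm outside $N_{i^*}$, each carrying regret on the order of $\Delta_s$, giving regret $\Omega(\tau\Delta_s) = \Omega(D_{\ref{eq:lp2}}(s))$. Your route (i) names the perturbation but not the stopping time that makes LP-feasibility (so that $\tau\Delta_s$ is large) and under-observation of $i^*$ (so that the change of measure is cheap) hold at the same moment; without that device, the minimax exchange you flag as the difficulty does not go through.
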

Motivated by Lemma~\ref{lem:dstar_lower}, the quantity $d^*(G,\mu)$ is a meaningful definition of finite-time optimality.
We note that $d^*$ is indeed independent of the time-horizon and only depends on the topology of $G$ and the instance $\mu$. 
The result in Lemma~\ref{lem:dstar_lower} is a companion to the
upper bound in Theorem~\ref{thm:emp_lp_opt}.
It shows that for any instance $\mu$ and number of observations which are not sufficient to distinguish the arms with smallest positive gaps as sub-optimal, any algorithm will necessarily incur large regret of order $d^*$. This happens because the algorithm will not be able to distinguish $\mu$ from some environment $\mu'$ which is identical to $\mu$ except for the reward of a single arm which is only slightly perturbed.

The definition of $d^*$ as a maximum over different values of $s$ might seem surprising, as one could expect that the value of \ref{eq:lp2} strictly increases when $s$ grows, after all this is precisely what happens in the bandit setting. 
This is not the case for general graphs, where the value can also decrease between phases $s$ and $s+1$. Intuitively this happens when the approximate minimum weighted dominating set chosen by the LP's solution increases between phases.

The result in Lemma~\ref{lem:dstar_lower} has a min-max flavor in the sense that all possible instances which are close to $\mu$ are considered. It is reasonable to ask if $d^*$ can be further bounded by a favorable instance-dependent quantity. The answer to this question is complicated and certainly depends on the topology of the feedback graph as we show next.

\paragraph{Sketch of proof of Theorem~\ref{thm:lower_bound_cstar}.}

\setlength{\intextsep}{0pt}
\setlength{\columnsep}{8pt}
\begin{wrapfigure}{r}{0.3\textwidth}
    \centering
    \includegraphics[angle=270,origin=c,
      width=0.33\textwidth]{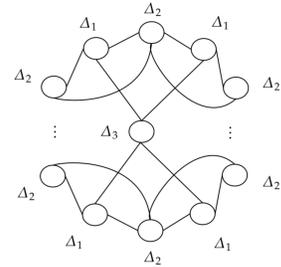}
    \vskip -.3in
    \caption{Reinforced wheel.}
    \label{fig:counter_example}
    \vskip -.05in
\end{wrapfigure}

We now show that any finite time term $d$ has to exceed $c^*$ by at
least a multiplicative polynomial factor in the number of actions
$K$. To do so we exhibit a specific feedback graph $G$, 
found in Figure~\ref{fig:counter_example}, on which any
algorithm will have to incur regret at least $\Omega(K^{\frac 1 8}
c^*)$ for some $\mu$ s.t. $c^* \geq \frac{1}{\Delta_{\min}}$. 

Formally the graph is defined to have a vertex set $V =
\Ncal_1\bigcup\Ncal_2\bigcup\Ncal_3$ of $2K+1$ arms, with each of
$\Ncal_i$'s disjoint and $\Ncal_{1}=\{2i : 1 \leq i\leq K\}$,
$\Ncal_{2} = \{2i+1: 0\leq i\leq K\}$, $\Ncal_{3} = \{0\}$. The
set of edges is defined as follows. Every vertex in $\Ncal_1$ is adjacent
to the vertex in $\Ncal_3$ and the $2i$ vertex is adjacent to both $2i
+ 1$ and $2i-1$ in $\Ncal_2$ modulo $2K+1$. Finally vertex $2i+1$ in
$\Ncal_2$ is further adjacent to to the next $\lceil K^{1/8} \rceil$
vertices in $\Ncal_2$ modulo $2K+1$. The base instance, $\mathcal{E}$, is defined by a
scalar $\nu \in [0,1]$ and gap parameter $\Delta$ so that the expected
reward of every action in $\Ncal_1$ is equal to $\nu-\Delta$, the
expected reward of every action in $\Ncal_2$ is equal to $\nu -
K^{1/4}\Delta$ and the expected reward of the action in $\Ncal_3$ is
$\nu-\sqrt{K}\Delta$. We assume that all rewards follow a Gaussian
with variance $\frac{1}{\sqrt{2}}$. We denote by $\Delta_1 = \Delta,
\Delta_2 = K^{\frac 1 4}\Delta, \Delta_3 = \sqrt{K}\Delta$.

The lower bound now fixes an algorithm $\Acal$ and considers two
cases. First, $\Acal$ could commit to often playing arms in $\Ncal_1$. In this case we show that there could be a large gap to the arms in $\Ncal_1$ which would not be detectable by $\Acal$ as arms in $\Ncal_2$ are not observed often enough. This is indeed the case as $\Acal$ needs to play $\Omega(K)$ actions in $\Ncal_1$ to cover $\Ncal_2$. This first case corresponds to assuming that the number of arms played from $\Ncal_2$ is at most $O(K^{\frac 7 8}/\Delta_2^2)$ in the first $O(K/\Delta_2^2)$ rounds.
The second case considers the scenario in which actions in $\Ncal_2$ are played for more than $\Omega(K^{\frac 7 8}/\Delta_2^2)$ times in the first $O(K/\Delta_2^2)$ rounds. In this case, $\Acal$ would suffer large regret if the gap at actions in $\Ncal_1$ is small enough, so that the optimal strategy is to cover $\Ncal_2$ by playing arms in $\Ncal_1$.

More formally, we begin by showing that there always exists an arm $n^*\in \Ncal_2$ which is observed for only $O(1/\Delta_2^2)$ times. 
Next, we change the expected reward of $n^*$ depending on which of the above two cases occur. In the first case we change the environment by setting
the reward of $n^*$ to have expectation $\nu+\Delta_2$. We can now
argue that the regret of $\Acal$ will be at least $\Omega(K^{\frac 3
  4}/\Delta)$ as $n^*$ will not be played often enough in the new
environment. The value of $c^*$, however, is at most $O(K^{\frac 7
  8}/\Delta_2) = O(K^{\frac 5 8}/\Delta)$, as playing each action in a
minimum dominating set over $\Ncal_2$ for $\frac{1}{\Delta_2^2}$
rounds is feasible for \ref{eq:lp}. For the second case, we set the
expected reward of $n^*$ to equal $\nu$. The optimal strategy now has
regret at most $O(\sqrt{K}/\Delta)$ by playing the action in $\Ncal_3$
for $\frac{1}{\Delta^2}$ and every action in $\Ncal_1$ for
$\frac{1}{\Delta_2^2}$ rounds. On the other hand $\Acal$ will incur at
least $\Omega(K^{\frac 5 8}/\Delta)$, as again $n^*$ is not played
often enough. This argument implies the result presented in
Theorem~\ref{thm:lower_bound_cstar}.

\vspace*{-7pt}
\section{Characterizing the value of $d^*$}
\vspace*{-7pt}
\label{sec:dstar_bounds}
Theorem~\ref{thm:lower_bound_cstar} suggests that we take into account the topology of $G$ explicitly when trying to bound $d^*$, independently of the instance $\mu$. In this section, we first show a bound on $d^*$ that depends only on independent sets of $G$. Then, we show a set of graphs $G$ for which $d^* \leq O(c^*)$ on any instance $\mu$.

Let us recall the regret bounds presented in
\citep{lykouris2020feedback}.
Denote by $\mathscr{I}(G)$ the set of all independent sets for the graph
$G$. Then the regret bounds presented in \citep{lykouris2020feedback}
are of the order
  $\Reg(T)
  \leq O\left(\max_{I \in \mathscr{I}(G)} \sum_{i\in I}\frac{\log^2(T)}{\Delta_i}\right)$.
It is possible to show, as we do in Appendix~\ref{app:thodoris_bound}, that $d^*(G,\mu) \leq \max_{I \in \mathscr{I}(G)} \sum_{i\in I}\frac{1}{\Delta_i}$
Thus, our algorithm enjoys regret bounds which are better than what is known 
for the algorithms studied in~\citep{cohen2016online,lykouris2020feedback}.
The above bound, however, could be very loose as was discussed in the beginning of the paper, especially when considering star-graphs, as the bound would just reduce to the bandit case. It turns out, however, that $d^* \leq O(c^*)$ in this case. In fact we can state a sufficient condition on $G$ so that $d^* \leq c^* + \frac{|I^*|}{\Delta_{\min}}$ for a more general family of graphs.
We begin by defining the following operation on $G$.
\begin{definition}
\label{def: collapse}
Let $\sim$ be the equivalence class defined by $u\sim v$ iff $N_u=N_v$ and
let $\mathscr{C}$ be the mapping which sends $G$ to the quotient $^{G}/_\sim$ through the operation of collapsing any sub-graph of $G$ into its equivalence class.
\end{definition}
\vspace*{-5pt}
We note that $\mathscr{C}$ is well-defined as the relation $\sim$ is an equivalent relation.
The equivalence classes defined by $\sim$ are cliques with the following property. For any $v$ in an equivalence class $[v]$ it holds that $u \in [v], \forall u \in N_v$, that is the vertices in the equivalence class clique only have neighbors in the clique to which they belong. For any instance $\mu$ of the problem, this allows us to collapse each equivalence class $[v]$ to a vertex $v$ with the maximum expected reward in $[v]$. The next lemma states a sufficient condition on $G$ under which $d^*$ is bounded.
\begin{lemma}
\label{lem:dstar_gen}
If the graph $G$ is such that $\mathscr{C}(G)$ has no path of length greater than two between any two vertices, then, for any instance $\mu$, the following inequality holds: $c^* + \frac{|I^*|}{\Delta_{\min}} \geq d^*$.
\end{lemma}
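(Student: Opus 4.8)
The plan is to first reduce to a canonical graph via the collapse operation $\mathscr{C}$ and then argue phase by phase. Since $u\sim v$ holds exactly when $N_u=N_v$, every equivalence class is a clique that is also a \emph{module}: all of its vertices have identical neighborhoods outside the class. Consequently, for any fixed $\mu$ one may replace each class by its maximum-reward representative without changing the value of \ref{eq:lp} or of any instance of \ref{eq:lp2}, since the remaining twins are dominated coordinate-wise and can be removed from every feasible solution. Hence it suffices to prove the bound for $G=\mathscr{C}(G)$. The next step is purely graph-theoretic: I would show that the hypothesis ``no path of length greater than two'' forces every connected component of $\mathscr{C}(G)$ to be a star, i.e.\ a \emph{hub} $h$ adjacent to every other vertex of the component together with leaves that are pairwise non-adjacent (with an isolated vertex as the degenerate case). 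This is where the assumption enters in full, and I expect it to follow from a short case analysis: an induced $P_4$, or two independent edges inside one component, would exhibit a path of length three.

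With this structure in hand I would bound each $D_{\ref{eq:lp2}}(s)$ for $s\le \log(|I^*|/\Delta_{\min})$ by exhibiting a feasible solution of \ref{eq:lp2} and summing its cost over components. Fix a component with hub $h$; every constraint $i\in\Gamma_s$ in that component is either $h$ itself or a leaf whose only neighbors are $h$ and itself. There are two natural ways to satisfy these constraints: place mass $1/\Delta_s^2$ on $h$, covering the whole component at cost $\Delta^s_h/\Delta_s^2$, or place mass $1/\Delta_s^2$ on each near-optimal leaf locally, at cost $\Delta_s/\Delta_s^2=1/\Delta_s$ per optimal leaf. Summing the local cost over the at most $|I^*|$ optimal arms, and using that $\Delta_s$ stays above $\Delta_{\min}$ in the relevant range of phases, bounds that contribution by $|I^*|/\Delta_{\min}$. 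For the hub option the key observation is the threshold comparison $\tfrac{1}{\Delta_i^2}\ge \tfrac{1}{4\Delta_s^2}$, valid for every $i\in\Gamma_s$: whenever $h$ is suboptimal, \ref{eq:lp} must already pay to cover $h$ and its suboptimal leaves at their (larger) thresholds, so the hub-based covering cost $\Delta^s_h/\Delta_s^2$ is within a constant factor of that component's contribution to $c^*$.

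The remaining work is to combine these choices component-wise: for a component whose hub is near-optimal one uses the cheap hub covering and charges it to the local budget, and for a component with a far hub one mirrors whatever \ref{eq:lp} does on that component and charges it to $c^*$; taking the maximum over $s\le\log(|I^*|/\Delta_{\min})$ then yields $d^*\le c^* + |I^*|/\Delta_{\min}$. I expect the main obstacle to be exactly this quantitative charging step: relating the \ref{eq:lp2} covering cost of each component to that component's share of $c^*$ without losing more than a constant factor. The delicate case is covering the \emph{optimal} arms, which are charged weight $\Delta_s$ in the objective $\langle\Delta^s,x\rangle$ yet contribute nothing to $c^*$; controlling their total local cost across components is precisely what pins down the $|I^*|/\Delta_{\min}$ term and what makes the restriction on the phase index essential to the argument.
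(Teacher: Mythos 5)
Your first two steps coincide with the paper's own proof: the collapse $\mathscr{C}$ preserves $c^*$ and $d^*$ once each equivalence class is represented by its maximum-reward arm (the twins' constraints and variables are redundant), and the no-path-of-length-greater-than-two condition forces $\mathscr{C}(G)$ to be a star. The paper then hands the star case to a dedicated lemma (Lemma~\ref{lem:dstar_upper_star_graphs}), and it is in your version of that step that the proposal has a genuine gap rather than a different route.

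Your star-graph analysis is incomplete and, in one place, incorrect. The claim ``whenever $h$ is suboptimal, \ref{eq:lp} must already pay to cover $h$ and its suboptimal leaves at their (larger) thresholds, so the hub-based covering cost $\Delta^s_h/\Delta_s^2$ is within a constant factor of that component's contribution to $c^*$'' is false: take a hub with $\Delta_h = 1$ and a single suboptimal leaf with gap $2\Delta_s$; then \ref{eq:lp} is satisfied by placing mass $1/(4\Delta_s^2)$ on that leaf alone, so $c^* \leq 1/(2\Delta_s)$, while $\Delta_h/\Delta_s^2 = 1/\Delta_s^2 \gg c^*$. The correct dichotomy is not hub-optimal versus hub-suboptimal but whether $|\Gamma_s|\Delta_s \geq \Delta_h$ or not, and in each branch one needs a matching \emph{lower bound} on $c^*$ --- precisely the ``quantitative charging step'' you explicitly leave open. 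The paper closes it with LP duality: it pins down $D_{\ref{eq:lp2}}(s)$ (as $\Delta_h/\Delta_s^2$ or $|\Gamma_s|/\Delta_s$, via matching primal and dual solutions of \ref{eq:lp5_dual}), and then exhibits feasible points of the dual \ref{eq:lp_dual} of \ref{eq:lp} --- mass $y_i = \Delta_h/|\Gamma_s|$, respectively $y_i = \Delta_i$, spread over $i \in \Gamma_s\setminus I^*$ --- certifying $c^* \geq \frac{|\Gamma_s\setminus I^*|}{|\Gamma_s|}\cdot\frac{\Delta_h}{\Delta_s^2}$, respectively $c^* \geq \frac{|\Gamma_s\setminus I^*|}{|\Gamma_s|}\cdot\frac{1}{\Delta_s}$, with the $|I^*|/\Delta_{\min}$ term absorbing the remaining $\frac{|I^*\cap\Gamma_s|}{|\Gamma_s|}$ fraction. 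Without some such certificate for $c^*$ your argument does not close; moreover, mirroring a primal solution of \ref{eq:lp} necessarily loses the gap between the thresholds $1/\Delta_s^2$ and $1/\Delta_i^2 \geq 1/(4\Delta_s^2)$, so at best it yields $O(c^*) + |I^*|/\Delta_{\min} \geq d^*$ rather than the coefficient-one inequality stated. One further slip: since the maximum defining $d^*$ ranges over $s \leq \log(|I^*|/\Delta_{\min})$, you are only guaranteed $\Delta_s \geq \Delta_{\min}/|I^*|$, not $\Delta_s \geq \Delta_{\min}$; your sketch leans on the latter explicitly (the paper's write-up elides the same point, but silently).
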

\vspace*{-10pt}
\section{Conclusion}
\vspace*{-7pt}
We presented a detailed study of the problem of stochastic online learning 
with feedback graphs in a finite time setting. We pointed out the surprising
issue of defining optimal finite-time regret for this problem.
We gave an instance on which no algorithm can hope to match, in finite time, the quantity $c^*$, which characterizes asymptotic optimality. Next, we derived an asymptotically optimal algorithm that is also min-max optimal in a finite-time sense and admits more favorable regret guarantees than those given in prior work. Finally, we described a family of feedback graphs for which matching the asymptotically optimal rate is possible in finite time.

There are several interesting questions that follow from this work. First, while the condition on $\mathscr{C}(G)$ in Lemma~\ref{lem:dstar_gen} is sufficient, it is not necessary. For example, a star-like graph in which two leaf vertices are also neighbors will have the property that $d^* \leq O(c^*)$ for any instance $\mu$.
We ask what would be a necessary and sufficient condition on $G$ for which $d^* = \Theta(c^*)$ on any instance $\mu$?
Another interesting question is how to address the setting of evolving feedback graphs. It is unclear what conditions on the graph sequence would allow us to recover bounds that improve on the existing independence number results.
Further, can we use our approach to show improved results for the setting of dependent rewards and feedback graphs studied in \citep{cortes2020online}? Finally, our methodology crucially relies on the informed setting assumption. We ask if it is possible to achieve similar bounds to Theorem~\ref{thm:emp_lp_opt} in the uninformed setting.

\newpage
\bibliographystyle{plainnat}
\bibliography{mybib}

\clearpage


\appendix
\renewcommand{\contentsname}{Contents of Appendix}
\tableofcontents
\addtocontents{toc}{\protect\setcounter{tocdepth}{3}} 
\clearpage

\section{Refined example for hardness of determining optimal $d$}
\label{app:example_d}
\begin{figure}[t]
    \vskip -.15in
    \centering
    \includegraphics[width=0.7\textwidth]{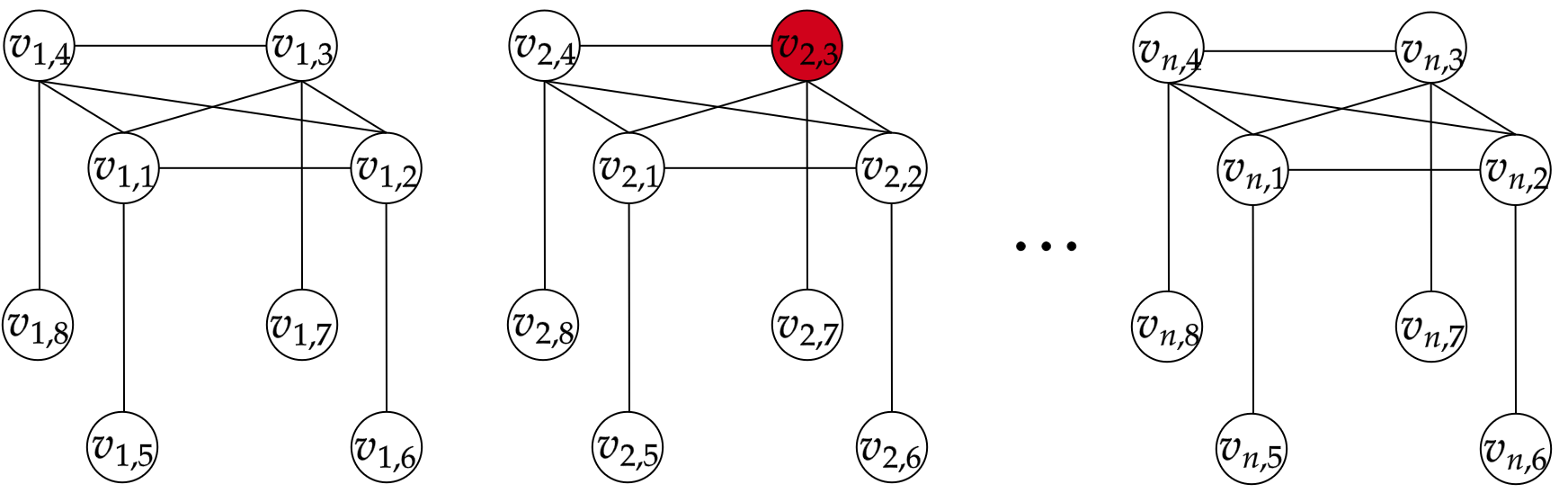}
    \caption{Arm in red is optimal}
    \vskip -.1in
    \label{fig:story_example}
\end{figure}

To understand better why it is difficult to define a notion of
\emph{optimality} for the constant term $d$ in the finite-time bound,
consider the following toy problem.  The graph is given by
Figure~\ref{fig:story_example}.  There are $n$ disjoint copies of an
open cube graph with 8 vertices each.
We let $V_1 = \{\nu_{i,1},\nu_{i,2},\nu_{i,3},\nu_{i,4}\}_{i\in[n]}$ and $V_2 =
\{\nu_{i,5},\nu_{i,6},\nu_{i,7},\nu_{i,8}\}_{i\in[n]}$.  We assume that we have oracle
knowledge of the mean rewards of all arms $\mu(\nu)=\frac{1}{2}$ for
any $\nu\in V_2$ and $\mu(\nu)=\frac{1}{2}-\Delta$ for all $\nu\in
V_1$, with one exception.  There is one arm in $V_1$, chosen uniformly
at random, that is optimal with a mean
$\mu^*\in\left\{\frac{1}{2}+2\Delta,\frac{1}{2}+\varepsilon\Delta\right\}$.
We note that we do not know the index of the optimal arm and so the 
problem reduces to identifying the optimal arm and the respective
environment (i.e.\ value of $\mu^*$). The best we can do is to collect
equally many samples for each arm in $V_1$ until we have sufficient
statistics to figure out either the environment or the optimal
arm. Under Env. A we need to collect $1/(3\Delta)^2$ samples
and under Env. B we need to collect $1/((1+\epsilon)\Delta)^2$ samples.
There are two canonical base
strategies corresponding to algorithm $\Acal$ and $\Bcal$ in Section~\ref{sec:finite-time-bounds}:  either play all arms in $V_2$ for $N(env)$ times (Algorithm $\Acal$), depending 
on the environment, or
play all arms in $V_1$ for $N(env)/4$ many times (Algorithm $\Bcal$). The following
table shows the regret each strategy suffers for collecting sufficient
samples to distinguish the environments.
\begin{center}
\begin{tabular}{l|c|c}
     & Env. A ($\mu^*=\frac{1}{2}+2\Delta$)&Env. B ($\mu^*=\frac{1}{2}+\varepsilon\Delta$)  \\
     \hline
    $\Acal$ (Play $V_1$) & $n/(3\Delta)$ & $ n/((1+\varepsilon)\Delta)$\\
    $\Bcal$  (Play $V_2$)& $ 4n/(9\Delta)$ & $4n\varepsilon/((1+\varepsilon)^2\Delta)$
\end{tabular}
\end{center}
Under Env. A we have $c^*_{\text{Env. A}} = \frac{n}{(3\Delta)}$ 
and under Env. B we have
$c^*_{\text{Env. B}} = \frac{n\varepsilon}{(1+\epsilon)^2\Delta}$. 
Which strategy is the ``optimal'' one?  One possible answer is to say
that $\Acal$ is optimal, since it minimizes the worst-case regret.
One might be
tempted to say that $\Bcal$ is better, since we can absorb the
constant term in the leading $\mathcal{O}(c^*\log(T))$ without the
need of adding a constant $d$ at all! That is, $\Bcal$ minimizes
the competitive ration.

The implicit assumption made for the second choice of optimality is:
``In a bad environment, where it is inevitable to suffer a loss of
100000, suffering an additional 100000 is just as bad as suffering an
additional loss of 10 in an environment where one cannot avoid a loss
of 10.''  We argue that this notion of optimality is not aligned with
the principle of regret as a benchmark. In regret, unlike the
competitive ratio, we care about the absolute value of
suboptimality. Hence, we claim that considering strategy 1 optimal in
our toy experiment independent of the value of $c^*$ in environment A
and B is a meaningful choice. The same argument implies that hiding
arbitrarily large constants in the $\mathcal{O}$-notation will obscure
critical information about the practicalities of an algorithm, which
our work unfortunately does as well. The regret upper bounds presented
in this work hide only universal constants which are independent of
the problem parameters, including the topology of the feedback graph.

\section{Regret upper bound proofs}

For the rest of the appendix we are going to assume that each gap $\Delta_i$ is such that $\Delta_i = 2^{-f(i)}$ for some function $f\colon [K] \to \lceil\log(T)\rceil$. This is without loss of generality as every $\Delta_i$ is in $[2^{-s}, 2^{-s+1}]$ for some $s$. Thus, we can clip every $\Delta_i$ to $2^{-s_i}$ for some $s_i$ and change the constraints and objective of \ref{eq:lp} by at most a factor of $2$. Thus the value of $c^*$ would change by at most a factor of $2$.

\subsection{Algorithm modification}
\label{app:alg_mod}

Since Algorithm~\ref{alg:emp_lp} plays $\lceil x^*_{\ref{eq:emp_lp},i}\rceil$ we need to take care of the difference $\lceil x^*_{\ref{eq:emp_lp},i}\rceil - x^*_{\ref{eq:emp_lp},i}$. At worst, playing according to the rounded solution of \ref{eq:emp_lp} can result in a $\Omega(K)$ additive factor on top of $D_{\ref{eq:emp_lp}}(s)$. This can accumulate regret up to an $\Omega(K\log(T))$ factor in the final bound. Our goal is to give asymptotically optimal bounds together with the finite time bounds and such a term might be sub-optimal in the case when $\Delta_{\min} \geq \omega(\frac{1}{K})$. 

To avoid the additional $K$-factor we modify Algorithm~\ref{alg:emp_lp} in the following way.
\begin{algorithm}[ht]
\SetKwInOut{Input}{input}
\Input{Graph $G=(V,E)$, confidence parameter $\delta$, time horizon $T$}
\textbf{Initialize $t=0$, $s=0$, $\hat r_i(0) = 0, \forall i \in [K]$, $B = [0]^K$}\\
Compute (approximate) minimum dominating set $\hat\Dcal(G)$\\
\While{$s \leq \lceil \log(K) \rceil$}{
    Play each arm $i\in \hat\Dcal(G)$ for $\frac{\alpha' \log(\frac{K}{\Delta_{s+1}})}{\Delta_s^2}$ rounds\\
    Update $t$ and $s$.
}
\While{$t\leq T$}{
    Compute a (approximate) solution $x^*_{\ref{eq:emp_lp}}$ to \ref{eq:emp_lp}.\\
    \For{$i \in [K]$}{
        \If{$x^*_{\ref{eq:emp_lp},i} < 1$}{
        \If{$B_i = 0$ or $\lfloor B_i + x^*_{\ref{eq:emp_lp},i} \rfloor \geq \lfloor B_i \rfloor$}{
            Play $i$ and update $t$
            }
            $B_i += x^*_{\ref{eq:emp_lp},i}$
        }
        \Else{
        Play $i$ for $\lceil (x^*_{\ref{eq:emp_lp}})_i\rceil$ rounds and update $t$\\
        }
    }
    Update the phase $s+=1$.
}
\caption{Modification of Algorithm~\ref{alg:emp_lp}}
\label{alg:emp_lp_mod}
\end{algorithm}
Note that for any $x^*_{\ref{eq:emp_lp},i} \geq 1$, the following inequality holds: $\lceil x^*_{\ref{eq:emp_lp},i}\rceil \leq 2x^*_{\ref{eq:emp_lp},i}$, and thus playing such arms will only increase the incurred regret by a multiplicative factor of at most $2$. Thus, we only need to consider $x^*_{\ref{eq:emp_lp}, i} < 1$. We introduce a buffer $B \in \RR^K$ which will inform us when to play an arm $i$ for which $x^*_{\ref{eq:emp_lp},i} < 1$. The first time the solution of the LP informs us to play $i$ for less than a single round, we play $i$ for a single round and update the buffer as $B_i += x^*_{\ref{eq:emp_lp},i}$. We observe that we have now overplayed $i$ and have a buffer of $1-x^*_{\ref{eq:emp_lp},i}$ extra plays of $i$. Thus, at the next phase at which $x^*_{\ref{eq:emp_lp},i} < 1$, we can check if $x^*_{\ref{eq:emp_lp},i}$ can be covered by the remaining buffer. If so, then there is no need to play arm $i$ again as we still have sufficient number of observations provided by playing $i$. If the buffer is exceeded, we again play $i$ for one round and take into account the additional overplay. Thus, at the end of phase $s$, the total number of arm $i$ has been played does not exceed 
\begin{align*}
  2\sum_{t=1}^s x^*_{\ref{eq:emp_lp},i}(t) + \lceil B \rceil \leq 2\sum_{t=1}^s x^*_{\ref{eq:emp_lp},i}(t) + s \leq 3\sum_{t=1}^s x^*_{\ref{eq:emp_lp},i}(t),
\end{align*}
where $x^*_{\ref{eq:emp_lp},i}(t)$ is the solution to \ref{eq:emp_lp} at phase $t$.
The above implies the following lemma.
\begin{lemma}
Let $x^*_{\ref{eq:emp_lp},i}(t)$ be the solution to \ref{eq:emp_lp} at phase $t$. The at the end of phase $s$ of Algorithm~\ref{alg:emp_lp_mod} the total number of plays of arm $i$ is at most
\begin{align*}
    3\sum_{t=1}^s x^*_{\ref{eq:emp_lp},i}(t).
\end{align*}
\end{lemma}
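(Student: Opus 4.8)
The plan is to fix an arm $i$ and bound its total number of plays through the end of phase $s$, call it $P_i(s)$, by separating the phases into those routed through the \textbf{else} branch of Algorithm~\ref{alg:emp_lp_mod}, where $x^*_{\ref{eq:emp_lp},i}(t)\geq 1$, and those routed through the buffer branch, where $x^*_{\ref{eq:emp_lp},i}(t)<1$. I would control each group separately and then add the two contributions; the rounding overhead is benign in the first group and is absorbed by an amortized argument in the second.

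For the phases with $x^*_{\ref{eq:emp_lp},i}(t)\geq 1$, the algorithm plays $i$ exactly $\lceil x^*_{\ref{eq:emp_lp},i}(t)\rceil$ times, and since $\lceil y\rceil\leq y+1\leq 2y$ for every $y\geq 1$, these phases contribute at most $2\sum_{t:\,x^*_{\ref{eq:emp_lp},i}(t)\geq 1} x^*_{\ref{eq:emp_lp},i}(t)$. The buffer branch is the crux. Here I would run a potential argument with the running buffer value $B_i$ as the potential: whenever $i$ is requested with fractional weight $x^*_{\ref{eq:emp_lp},i}(t)<1$ we increment $B_i$ by that weight, but we actually execute \textbf{Play $i$} only on the first such request and whenever the increment carries $B_i$ past the next integer. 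The invariant I would establish, by induction over the buffer phases, is that the number of executed plays never exceeds $\lceil B_i\rceil$, so that the discrete play count tracks the accumulated continuous mass up to a single extra unit: each integer crossing of $B_i$ is paid for by one unit of accumulated weight, and the one leftover unit corresponds to the initial play made when $B_i=0$.

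Combining the two groups yields $P_i(s)\leq 2\sum_{t:\,x^*_{\ref{eq:emp_lp},i}(t)\geq 1} x^*_{\ref{eq:emp_lp},i}(t) + \lceil B_i\rceil$, where $B_i=\sum_{t:\,x^*_{\ref{eq:emp_lp},i}(t)<1} x^*_{\ref{eq:emp_lp},i}(t)$. Because each term in $B_i$ is strictly below $1$, we have $\lceil B_i\rceil\leq B_i+1\leq \sum_{t=1}^s x^*_{\ref{eq:emp_lp},i}(t)+1$, giving $P_i(s)\leq 2\sum_{t=1}^s x^*_{\ref{eq:emp_lp},i}(t)+1$; the additive unit is then absorbed into the gap between the factors $2$ and $3$, since it is dominated by $\sum_{t=1}^s x^*_{\ref{eq:emp_lp},i}(t)$ once arm $i$ carries at least a constant amount of cumulative LP mass and otherwise corresponds to only a single play, which recovers the claimed bound $P_i(s)\leq 3\sum_{t=1}^s x^*_{\ref{eq:emp_lp},i}(t)$. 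I expect the main obstacle to be exactly the amortized buffer accounting of the second paragraph: one must verify that the discrete rule firing a play at each integer crossing of $B_i$ never overshoots the continuous buffer mass by more than the one initial play, which is the potential-function invariant underlying the $\lceil B_i\rceil$ bound.
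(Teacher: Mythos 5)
Your proof follows the paper's own argument almost step for step: the same split into phases with $x^*_{\ref{eq:emp_lp},i}(t)\geq 1$ (handled via $\lceil y\rceil\leq y+1\leq 2y$ for $y\geq 1$) and buffer phases, and the same amortization of executed plays against the accumulated buffer mass. Two remarks on the buffer accounting. First, your invariant ``the number of executed plays never exceeds $\lceil B_i\rceil$'' fails when the buffer lands exactly on an integer: with two consecutive requests of $1/2$ each, the algorithm plays twice (the initial play plus the play triggered when $B_i$ reaches $1$), while $\lceil B_i\rceil=1$. The correct invariant is that buffer-branch plays are at most $1+\lfloor B_i\rfloor\leq 1+B_i$, which feeds into your subsequent computation unchanged. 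Second, you are implicitly reading the test in Algorithm~\ref{alg:emp_lp_mod} as a \emph{strict} integer crossing $\lfloor B_i+x^*_{\ref{eq:emp_lp},i}\rfloor>\lfloor B_i\rfloor$; as literally written with $\geq$ the test is always satisfied. Your reading is clearly the intended one, and the lemma would be false under the literal one.

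The genuine gap is your last step: absorbing the additive $+1$ into the slack between the factors $2$ and $3$ requires $\sum_{t=1}^s x^*_{\ref{eq:emp_lp},i}(t)\geq 1$, which you do not establish and which cannot be deduced from \ref{eq:emp_lp}, since its constraints bound neighborhood sums $\sum_{j\in N_i}x_j$ and the individual coordinate for arm $i$ may be arbitrarily small in every phase. In that regime the stated inequality actually fails: if the cumulative LP mass assigned to arm $i$ over all phases is $1/10$, the algorithm still plays it at least once (the $B_i=0$ branch fires at the first request), while the claimed bound is $3/10$; your phrase ``otherwise corresponds to only a single play'' concedes exactly this case without closing it. To be fair, the paper's own proof has the same hole in a cruder form: it bounds $\lceil B\rceil\leq s$ and then asserts $2\sum_{t=1}^s x^*_{\ref{eq:emp_lp},i}(t)+s\leq 3\sum_{t=1}^s x^*_{\ref{eq:emp_lp},i}(t)$, which needs the even stronger unjustified claim $\sum_{t=1}^s x^*_{\ref{eq:emp_lp},i}(t)\geq s$. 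What your argument (and the paper's) actually proves is $P_i(s)\leq 2\sum_{t=1}^s x^*_{\ref{eq:emp_lp},i}(t)+1$, i.e., a single extra play per arm over the whole horizon; this additive form contributes only $O(K)$ total regret and is all the downstream analysis needs, which is precisely the improvement over Algorithm~\ref{alg:emp_lp} the modification was designed for. So the clean fix is to state and use the lemma in that additive form, or to add the hypothesis that arm $i$ carries cumulative LP mass at least $1$; the purely multiplicative statement is not provable as written.
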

The above lemma implies that we can, at the price of a constant multiplicative factor of $3$, consider the solution of \ref{eq:emp_lp} instead of the rounded solution played by Algorithm~\ref{alg:emp_lp}. Hence, for the rest of the appendix, we do so.

\subsection{Proof of Theorem~\ref{thm:emp_lp_opt}}
We begin with a somewhat standard concentration result.
\begin{lemma}
\label{lem:martingale bound}
For any $s \in [10,\log(T)], K\geq 2,\alpha'\leq 3072$, the following inequality holds
\begin{align*}
    \PP(\exists i\in [K] : |\mu_i - \hat r_i(s)| \geq b_i(s)) \leq \left(\frac{\Delta_{s+1}}{K}\right)^{\alpha-1}\,.
\end{align*}
\end{lemma}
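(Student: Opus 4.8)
The plan is to read this as a time-uniform sub-Gaussian concentration statement for a single arm, where the only real complication is that the observation count $n_i(s)$ is data-dependent and therefore random. First I would fix an arm $i$ and pass to the martingale $M_t = \sum_{u=1}^{t}(r_{u,i}-\mu_i)\Ind(i_u\in N_i)$ with respect to the natural filtration $\mathcal{F}_t$ generated by the plays and the observed rewards. Since the reward vector at round $u$ is independent of $\mathcal{F}_{u-1}$ and has variance proxy at most $1$, while $\Ind(i_u\in N_i)$ is $\mathcal{F}_{u-1}$-measurable, each increment is conditionally sub-Gaussian with variance proxy $\Ind(i_u\in N_i)$. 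By definition $\hat r_i(s)-\mu_i = M_{\tau_s}/n_i(s)$, so the target event $|\mu_i-\hat r_i(s)|\ge b_i(s)$ is exactly $|M_{\tau_s}|\ge \sqrt{3\alpha\, n_i(s)\log(K/\Delta_{s+1})}$. Writing $S_n$ for the partial sum of the first $n$ revealed rewards of arm $i$ (an i.i.d. sub-Gaussian sum), the statement reduces to controlling $\PP(\exists\, n\le N : |S_n|\ge \sqrt{3\alpha\, n\log(K/\Delta_{s+1})})$, where $N$ is any deterministic upper bound on $n_i(s)$.

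The key preliminary step is to establish that $n_i(s)$ is at most polynomial in $K/\Delta_{s+1}$, deterministically. In each phase $s'\le s$ the algorithm plays arm $j$ exactly $\lceil x^*_{\ref{eq:emp_lp},j}\rceil$ times, and a basic solution of \ref{eq:emp_lp} has every coordinate bounded by the largest constraint right-hand side, namely $O(\alpha'\log(K/\Delta_{s'+1})/\Delta_{s'}^2)$. Summing over the neighbours of $i$ and over phases, and using that $1/\Delta_{s'}^2$ grows geometrically so that the sum is dominated by its last term, yields $n_i(s)\le N$ with $N = O(K\alpha'\log(K/\Delta_{s+1})/\Delta_{s+1}^2)\le (K/\Delta_{s+1})^{c}$ for an absolute constant $c$. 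This is exactly where the hypothesis $\alpha'\le 3072$ enters: it keeps $\alpha'$ a bounded constant, so that it is absorbed into the polynomial and does not inflate the degree $c$.

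With $N$ in hand I would simply apply a union bound directly over the finitely many values $n\in\{1,\dots,N\}$ and over the $K$ arms, rather than over the time horizon. For each fixed pair $(i,n)$, the sub-Gaussian tail for the average of $n$ i.i.d. samples of variance proxy at most $1$ gives $\PP(|S_n|\ge \sqrt{3\alpha\, n\log(K/\Delta_{s+1})})\le 2\exp(-\tfrac{3\alpha}{2}\log(K/\Delta_{s+1})) = 2(\Delta_{s+1}/K)^{3\alpha/2}$, where the width $b_i(s)$ is engineered precisely so the $n$ cancels. Multiplying by the prefactor $KN\le (K/\Delta_{s+1})^{c+1}$ gives a total bound of order $(\Delta_{s+1}/K)^{3\alpha/2-c-1}$, and since the tail exponent $3\alpha/2$ is retained in full, taking $\alpha$ a sufficiently large constant (larger than $2c+2$, which the cap on $\alpha'$ guarantees is possible) makes $3\alpha/2-c-1\ge \alpha-1$, yielding the claimed $(\Delta_{s+1}/K)^{\alpha-1}$.

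The main obstacle is the randomness of $n_i(s)$. A careless anytime argument over all $n\le T$ would contribute a factor $\log T$, which cannot be absorbed because for a fixed $s$ the target right-hand side $(\Delta_{s+1}/K)^{\alpha-1}$ does not shrink with $T$; likewise, a coarse dyadic peeling would halve the tail exponent to $3\alpha/4$ and break the bound. The crux is therefore twofold: proving the deterministic polynomial bound on $n_i(s)$, which replaces the spurious $\log T$ by a polynomial in $K/\Delta_{s+1}$, and unioning over each value of $n$ individually so as to preserve the full exponent $3\alpha/2$, letting that tail absorb all $K$-, $N$-, and constant factors once $\alpha$ is large enough.
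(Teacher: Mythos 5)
Your overall strategy is a genuinely different and more elementary route than the paper's: you replace the paper's time-uniform concentration inequality (Theorem~1 of \citet{zhao2016adaptive}, a law-of-iterated-logarithm-type bound) by a pointwise union bound over every possible value $n\le N$ of the observation count, after first bounding $n_i(s)$ by a deterministic $N=\mathrm{poly}(K/\Delta_{s+1})$. That deterministic count bound is indeed needed in both proofs (the paper asserts it in one line, using that arm $i$ is observed at most $\alpha'\log(K/\Delta_{s+1})/\Delta_{s+1}^2$ times; your derivation via basic solutions of \ref{eq:emp_lp} is more careful). The structural difference lies in the price of uniformity over $n$: your union bound multiplies the failure probability by $KN$, i.e., it subtracts roughly $c+1$ from the tail exponent, where $c$ is the polynomial degree of $N$ in $K/\Delta_{s+1}$; the paper's anytime inequality instead costs only an additive $2\log\log_2(n)$ inside the square root, which is at most $2\log(K/\Delta_{s+1})$ by the same count bound, so no exponent is lost at all.

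This is where your proof has a genuine gap. Your final exponent is $3\alpha/2-c-1$ and you need it to be at least $\alpha-1$, i.e., $\alpha\geq 2c$, and with any honest accounting ($N$ contains a factor $1/\Delta_{s+1}^2$, a factor $K$ from summing over neighbours, and the factors $\alpha'\log(K/\Delta_{s+1})$) one has $c\geq 3$, so you need $\alpha\geq 6$ at the very least, realistically $\alpha\geq 8$. But $\alpha$ is not a free parameter you may enlarge: it is the constant in the algorithm's confidence width $b_i(s)$, and the exponent $\alpha-1$ in the lemma refers to that same constant. In the paper $\alpha'$ is tied to $\alpha$ ($\alpha'=768\alpha$, see Lemma~\ref{lem:bonus_bound} and the appendix version of Theorem~\ref{thm:emp_lp_opt}), so the hypothesis $\alpha'\leq 3072$ forces $\alpha\leq 4$ --- the opposite of your parenthetical claim that the cap on $\alpha'$ ``guarantees'' a large $\alpha$ is available. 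At the value $\alpha=4$ actually used downstream, and with variance proxy $1$ as you assume, your bound reads $(\Delta_{s+1}/K)^{6-c-1}$ with $6-c-1\leq 2<3=\alpha-1$, so the claimed inequality does not follow. The role of $\alpha'\leq 3072$ (together with $s\geq 10$, $K\geq 2$) in the paper is only to ensure $\log_2\bigl(\alpha'\log(K/\Delta_{s+1})/\Delta_{s+1}^2\bigr)\leq K/\Delta_{s+1}$, which controls the $\log\log$ term; to close your argument in the required regime you would have to swap the per-$n$ union bound for exactly such a time-uniform inequality (or a peeling argument whose cost is $O(\log\log N)$ rather than a constant fraction of the exponent), which is precisely the paper's route.
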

\begin{proof}
We use Theorem 1 from \citet{zhao2016adaptive} which states that for a sum of zero-mean, $1/2$ sub-Gaussian random variables $(X_i)_{i=1}^t$ the following inequality holds
\begin{align*}
    \PP\left(\exists t: \sum_{i=1}^t X_i \geq \sqrt{t(2\log \log_2(t) + \log(1/\delta))}\right) \leq 2\delta.
\end{align*}
We begin by bounding $\PP(|\mu_i - \hat r_i(s)| \geq b_i(s))$ for a fixed $i\in[K]$. Since action $i$ is observed at most $\frac{\alpha' \log(K/\Delta_{s+1})}{\Delta_{s+1}^2}$ 
times up to and including phase $s$, we can write
\begin{align*}
    \PP(|\mu_i - \hat r_i(s)| \geq b_i(s)) &\leq 2\PP\left(\exists t \in \left[\frac{\alpha' \log(K/\Delta_{s+1})}{\Delta_{s+1}^2}\right] : \sum_{\ell=1}^t (r_{t,i} - \mu_i) \geq \sqrt{3\alpha t \log\left(\frac{K}{\Delta_{s+1}}\right)}\right)\\
    &\leq 2\left(\frac{\Delta_{s+1}}{K}\right)^\alpha,
\end{align*}
where  we used the fact that for $s \geq 7 , K\geq 2, \alpha' \leq 512$ the following inequality holds
\begin{align*}
    \log_2\left(\frac{\alpha'\log(K/\Delta_{s+1})}{\Delta_{s+1}^2}\right) \leq \frac{K}{\Delta_{s+1}}.
\end{align*}
A union bound over $i\in[K]$ completes the proof.
\end{proof}

\begin{lemma}
\label{lem:emp_gap_lower}
Under the same assumptions as in Lemma~\ref{lem:martingale bound}, we have
\begin{align*}
    \PP(\exists i \in [K], t\in [s/2,s] : |\mu_i - \hat r_i(t)| \geq b_i(t)) \leq \left(\frac{\Delta_{s/2+1}}{K}\right)^{\alpha-2}\,.
\end{align*}
Furthermore, if we let $\Ecal_{upper} = \{\forall i\in[K], t\in [s/2,s]: \hat \Delta_i(t) \leq \Delta_i\lor\Delta_{t}\}$ then $\PP(\Ecal_{upper}) \geq 1-\left(\frac{\Delta_{s/2+1}}{K}\right)^{\alpha-2}$.
\end{lemma}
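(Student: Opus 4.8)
The plan is to derive both statements from Lemma~\ref{lem:martingale bound} by a union bound over the window $t\in[s/2,s]$, followed by an elementary geometric-series estimate, and then to read off the second claim from the complement of the failure event in the first.

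For the first display, I would fix $t\in[\lceil s/2\rceil,s]$ and apply Lemma~\ref{lem:martingale bound} at index $t$, which gives $\PP(\exists i:\ |\mu_i-\hat r_i(t)|\geq b_i(t))\leq(\Delta_{t+1}/K)^{\alpha-1}$. A union bound over the window then bounds the target probability by $\sum_{t=\lceil s/2\rceil}^{s}(\Delta_{t+1}/K)^{\alpha-1}$. Since $\Delta_{t+1}=2^{-(t+1)}$ halves at each step, this is a geometric series whose largest term is the one at $t=\lceil s/2\rceil$, so the sum is at most $\frac{1}{1-2^{-(\alpha-1)}}(\Delta_{s/2+1}/K)^{\alpha-1}$. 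Writing $(\Delta_{s/2+1}/K)^{\alpha-1}=\frac{\Delta_{s/2+1}}{K}\,(\Delta_{s/2+1}/K)^{\alpha-2}$, it remains to check that $\frac{1}{1-2^{-(\alpha-1)}}\cdot\frac{\Delta_{s/2+1}}{K}\leq 1$: for $\alpha\geq 2$ the geometric constant is at most $2$, while $s\geq 10$ and $K\geq 2$ force $\Delta_{s/2+1}\leq 2^{-6}$ and hence $\frac{\Delta_{s/2+1}}{K}\leq 2^{-7}$, so the product is at most $2^{-6}\leq 1$. This is exactly the step that converts the exponent $\alpha-1$ into $\alpha-2$ and yields the claimed bound.

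For the second statement, I would let $\mathcal{G}$ denote the complement event, i.e.\ $|\mu_i-\hat r_i(t)|<b_i(t)$ for all $i\in[K]$ and $t\in[s/2,s]$, which by the first part has probability at least $1-(\Delta_{s/2+1}/K)^{\alpha-2}$. On $\mathcal{G}$ the lower confidence bound satisfies $\hat r_j(t)-b_j(t)<\mu_j\leq\mu^*$ for every $j$, so $\max_j(\hat r_j(t)-b_j(t))<\mu^*$, and the upper confidence bound satisfies $\hat r_i(t)+b_i(t)>\mu_i$. Combining these, $\max_j(\hat r_j(t)-b_j(t))-\hat r_i(t)-b_i(t)<\mu^*-\mu_i=\Delta_i$, and taking the maximum with $\Delta_t$ in the definition of $\hat\Delta_i(t)$ gives $\hat\Delta_i(t)\leq\Delta_i\lor\Delta_t$. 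Hence $\mathcal{G}\subseteq\Ecal_{upper}$ and the probability bound transfers verbatim.

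I expect the main obstacle to be the exponent-reduction in the first part: one must confirm that summing the per-step failure probabilities across the window costs only a single factor $\Delta_{s/2+1}/K\leq 1$, which is precisely what lowers $\alpha-1$ to $\alpha-2$, and this relies on the explicit numeric slack from $s\geq 10$, $K\geq 2$, and $\alpha\geq 2$. A secondary technicality is that Lemma~\ref{lem:martingale bound} is stated for indices at least $10$, whereas the smallest index in the window is $\lceil s/2\rceil$; for $s\geq 20$ this is immediate, and for the remaining range $10\leq s<20$ the few small indices can be absorbed into the constants without affecting the stated bound.
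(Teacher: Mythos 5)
Your proof is correct and follows essentially the same route as the paper's: a union bound of Lemma~\ref{lem:martingale bound} over the window $[s/2,s]$, with the geometric sum absorbed into the exponent drop from $\alpha-1$ to $\alpha-2$, and the second claim read off on the complement event from the confidence bounds $\hat r_j(t)-b_j(t)\leq\mu_j\leq\mu^*$ and $\hat r_i(t)+b_i(t)\geq\mu_i$. Your write-up is in fact more explicit than the paper's about the numeric slack ($\alpha\geq 2$, $\Delta_{s/2+1}/K\leq 2^{-7}$) and about the small-index technicality when $10\leq s<20$, which the paper glosses over, but the substance is identical.
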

\begin{proof}
A union bound over Lemma~\ref{lem:martingale bound}, together with picking $\alpha$ sufficiently large imply
\begin{align*}
    \PP(\exists i \in [K], t\in [s/2,s] : |\mu_i - \hat r_i(t)| \geq b_i(t)) \leq \sum_{t = s/2}^s \left(\frac{1}{2^{t+1}K}\right)^{\alpha-1} \leq \left(\frac{\Delta_{s/2+1}}{K}\right)^{\alpha-2}\,.
\end{align*}
For the second part of the lemma, we assume WLOG $\Delta_i \geq \Delta_{s/2}$. Thus, we have
\begin{align*}
    \hat\Delta_i(t) = \max_{j\in[k]} \hat r_j(t) - b_{j}(t) - \hat r_i(t) - b_i(t) =: \hat r_{i^*_t}(t) - b_{i^*_t}(t) - \hat r_i (t) - b_i(t).
\end{align*}
On the event that $\{\forall i \in [K], t\in [s/2,s] : |\mu_i - \hat r_i(t)| \leq b_i(t)\}$ we have $\hat r_{i^*_t}(t) - b_{i^*_t}(t) \leq \mu_{i^*_t}$ and $\hat r_i (t) + b_i(t) \geq \mu_i$. This implies that w.p. $1-\left(\frac{\Delta_{s/2+1}}{K}\right)^{\alpha-2}$ we have 
\begin{align*}
    \hat\Delta_i(t) \leq \mu_{i^*_t} - \mu_i \leq \mu_{i^*} - \mu_i = \Delta_i,
\end{align*}
for all $i \in [K]$ and $t \in [s/2,s]$.
\end{proof}

\begin{lemma}
\label{lem:bonus_bound}
Let $\alpha' = 256\alpha$ in \ref{eq:emp_lp}. On the event $\Ecal_{upper}$ it holds that $b_i(s)\leq \frac{\Delta_{i}\lor \Delta_s}{8},\forall i\in [K]$. Thus, for any $\alpha\geq 3$ and any $\log(K)\lor 10\leq s\leq\log(T)$ the following inequality holds 
\begin{align*}
    \PP\left(\exists i\in[K]: b_i(s) \geq \frac{\Delta_i\lor\Delta_s}{8}\right) \leq \left(\frac{\Delta_{s/2+1}}{K}\right)^{\alpha-2}\,.
\end{align*}
\end{lemma}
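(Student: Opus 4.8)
The plan is to unfold the definition of $b_i(s)$ and reduce the claim to a lower bound on the observation count. Since $b_i(s)=\sqrt{3\alpha\log(K/\Delta_{s+1})/n_i(s)}$, the inequality $b_i(s)\leq(\Delta_i\lor\Delta_s)/8$ is equivalent to $n_i(s)\geq 192\alpha\log(K/\Delta_{s+1})/(\Delta_i\lor\Delta_s)^2$, using $\alpha'=256\alpha$. Thus the whole lemma becomes a matter of certifying, on $\Ecal_{upper}$, that every arm has been observed at least this many times by the end of phase $s$. The only mechanism for this is that the algorithm plays each arm $\lceil x_j\rceil\geq x_j$ times, so whenever a constraint of \ref{eq:emp_lp} is active for $i$, the arm is observed at least the corresponding right-hand side during that phase; combined with monotonicity of $n_i$, this lets me lower-bound $n_i(s)$ by the right-hand side of the strongest constraint ever active for $i$ up to phase $s$.

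First I would dispose of the case $i\in\hat\Gamma_s$. Here the phase-$s$ constraint already gives $n_i(s)\geq\alpha'\log(K/\Delta_{s+1})/\Delta_s^2$, and since $(\Delta_i\lor\Delta_s)^2\geq\Delta_s^2$ and $\alpha'=256\alpha\geq 192\alpha$, the target follows immediately, for every gap and with no appeal to $\Ecal_{upper}$. This disposes of all near-optimal arms, for which $\Delta_i\lor\Delta_s=\Delta_s$. The substantive case is $i\notin\hat\Gamma_s$. Using $\Ecal_{upper}$ at phase $s-1$ I would combine $\hat\Delta_i(s-1)>2\Delta_s$ with $\hat\Delta_i(s-1)\leq\Delta_i\lor\Delta_{s-1}$ to force $\Delta_i>2\Delta_s$, hence $\Delta_i\lor\Delta_s=\Delta_i$ and $\hat\Delta_i(t)\leq\Delta_i$ throughout the window. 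I would then track the last phase $t^*\leq s$ at which $i\in\hat\Gamma_{t^*}$ (it exists, since an arm stays in $\hat\Gamma$ while its confidence interval is wider than the floor $\Delta_t$, forcing $\hat\Delta_i=\Delta_t$). The phase-$t^*$ constraint contributes $n_i(s)\geq\alpha'\log(K/\Delta_{t^*+1})/\Delta_{t^*}^2$; since $i$ leaves $\hat\Gamma$ at $t^*+1$ we have $\hat\Delta_i(t^*)>\Delta_{t^*}$, which together with $\hat\Delta_i(t^*)\leq\Delta_i$ yields $\Delta_{t^*}\leq\Delta_i/2$, i.e. $1/\Delta_{t^*}^2\geq 4/\Delta_i^2$.

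If $t^*\geq s/2$, then $\log(K/\Delta_{t^*+1})\geq\log(K/\Delta_{s/2+1})\geq\tfrac12\log(K/\Delta_{s+1})$, so the two factors combine to give $n_i(s)\geq 512\alpha\log(K/\Delta_{s+1})/\Delta_i^2$, comfortably above the target. The hard part is the complementary regime: arms with large gaps that leave $\hat\Gamma$ before phase $s/2$, where $\Ecal_{upper}$ controls the estimates only on $[s/2,s]$ and so I cannot certify directly that $t^*\geq s/2$ or that the log-boosted chunk was correctly placed. For these arms I would combine two bounds and take the stronger: (i) the eliminated-arm constraint $\sum_{j\in N_i}x_j\geq\alpha'/\hat\Delta_i^2(t-1)\geq\alpha'/\Delta_i^2$ is active at every $t\in[s/2,s]$ (using $\hat\Delta_i(t-1)\leq\Delta_i$ from $\Ecal_{upper}$), so these phases alone contribute at least roughly $(s/2)\,\alpha'/\Delta_i^2$ observations; and (ii) the deterministic fact that during the dominating-set exploration and the early $\hat\Gamma$ phases the arm was already observed with the $\log$ factor present, independently of concentration.

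The crux, and the place where the constant $\alpha'=256\alpha$ is spent, is reconciling the per-phase constant $\alpha'/\Delta_i^2$ coming from eliminated phases with the logarithmically growing target $192\alpha\log(K/\Delta_{s+1})/\Delta_i^2$, while being allowed to use concentration only on the window $[s/2,s]$. The plan is to show that the sum of contributions (i) and (ii) dominates the target for all $s$ in the stated range, exploiting $s\geq\log K\lor 10$ to bound the ratio $\log(K/\Delta_{s+1})/s$ by a constant. Once the deterministic observation lower bound is established on $\Ecal_{upper}$, the probabilistic statement is immediate from $\PP(\Ecal_{upper})\geq 1-\left(\Delta_{s/2+1}/K\right)^{\alpha-2}$ proved in Lemma~\ref{lem:emp_gap_lower}.
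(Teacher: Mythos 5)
Your reduction to a lower bound on $n_i(s)$, your treatment of the case $i\in\hat\Gamma_s$, and your case $t^*\geq s/2$ are all sound, and you have correctly spotted a real subtlety that the paper itself glosses over: the event $\Ecal_{upper}$ of Lemma~\ref{lem:emp_gap_lower} only controls $\hat\Delta_i(t)$ for $t\in[s/2,s]$, while the paper's proof of Lemma~\ref{lem:bonus_bound} invokes it on the window $[s_i,s]$, where $s_i$ is the phase with $\Delta_{s_i}=\Delta_i$. The paper's route is different from yours: on its (implicitly stronger) event the arm cannot leave $\hat\Gamma$ before phase $s_i$, so the log-boosted constraint is anchored at the correct scale, and the contributions telescope as $n_i(s)\geq \alpha'\log(K/\Delta_{s_i+1})/\Delta_{s_i}^2+\alpha'(s-s_i)/\Delta_i^2\geq\alpha'(\log K+s+1)/\Delta_i^2$, which cancels the numerator $\log(K/\Delta_{s+1})$ of $b_i^2(s)$ with no constant-factor loss, wherever $s_i$ lies. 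Your proof replaces this by a split on the last phase $t^*$ with $i\in\hat\Gamma_{t^*}$, and that is exactly where it breaks.

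The case $t^*<s/2$ does not close. First, your contribution (ii) is not a fact: $\Ecal_{upper}$ says nothing about phases before $s/2$, so conditioned on it the arm may have been wrongly eliminated at a phase $t^*$ with $\Delta_{t^*}\gg\Delta_i$; the early log-boosted constraint then has scale $\log(K/\Delta_{t^*+1})/\Delta_{t^*}^2\ll\log(K/\Delta_{s+1})/\Delta_i^2$, the eliminated-arm constraints at phases $t\in(t^*,s/2)$ involve $\hat\Delta_i(t-1)$, which can be of constant size and thus contribute as little as $\alpha'$ per phase, and the dominating-set phases contribute only $O(\alpha' K^2\log(K/\Delta_{\min}))$ observations, which is negligible once $1/\Delta_i^2\gg K^2$. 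So nothing deterministic places ``the log factor'' at scale $1/\Delta_i^2$. Second, contribution (i) alone is numerically insufficient: it yields at most $(s/2)\,\alpha'/\Delta_i^2=128\alpha s/\Delta_i^2$ with your $\alpha'=256\alpha$, whereas the target is $192\alpha\log(K/\Delta_{s+1})/\Delta_i^2$ and $\log(K/\Delta_{s+1})\geq (s+1)\log 2$; the required inequality $128 s\geq 192\log(K/\Delta_{s+1})$ therefore fails for \emph{every} $s$. The ratio $\log(K/\Delta_{s+1})/s$ is indeed bounded by a constant on your range, but that constant is at least $\log 2>2/3$, while $2/3$ is what you would need (and the slack only worsens when $\log K$ is counted). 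Taking the maximum of (i) and (ii) cannot rescue this; the fix has to come from strengthening the concentration event itself (guaranteeing $\hat\Delta_i(t)\leq\Delta_i\lor\Delta_t$ on a window reaching back to $s_i$, and paying the corresponding price in the failure probability), which is in effect what the paper's proof assumes.
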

\begin{proof}
Recall that $b_i(s) = \sqrt{\frac{3\alpha \log(\frac{ K}{\Delta_{s+1}})}{n_i(s)}}$ so we are going to bound $n_i(s)$ from below. Assume that $\Delta_i \geq \Delta_s$, the other case is handled similarly. Let $s_i$ be the phase at which $\Delta_{s_i} = \Delta_i$. On the event $\Ecal_{upper}$ we know that $\hat\Delta_i(t) \leq \Delta_i$ for all $t\in[s_i,s],i\in[K]$. The constraints in \ref{eq:emp_lp} imply
\begin{align*}
    n_i(s) \geq \sum_{t=s_i+1}^s \frac{\alpha'}{\hat\Delta_i^2(t)} + \frac{\alpha'\log(\frac K \Delta_{s_i+1})}{\Delta_{s_i}^2} \geq \frac{\alpha'(s-s_i)}{\Delta_i^2} + \frac{\alpha'\log(\frac K \Delta_{s_i+1})}{\Delta_{s_i}^2}\,.
\end{align*}
The above implies that 
\begin{align*}
    b_{i}^2(s) &\leq \frac{\alpha \log(\frac K \Delta_{s+1})}{\frac{\alpha'\log(\frac K \Delta_{s_i+1})}{\Delta_{s_i}^2} + \frac{\alpha'(s-s_i)}{\Delta_i^2}} \leq \frac{6(s+1)\alpha}{\frac{\alpha'(s_i+1)}{\Delta_i^2} + \frac{\alpha'(s-s_i)}{\Delta_i^2} + \frac{\alpha'\log(K)}{\Delta_i^2}}
\end{align*}
For $\alpha' = 768\alpha$ the above implies $b_{i}(s) \leq \frac{\Delta_i}{8}$.
\end{proof}

\begin{lemma}
\label{lem:emp_gap_interval}
Let $\Ecal_{gap}(s) = \{\forall i\in [k]: \frac{\Delta_i}{2}\lor\Delta_s \leq \hat\Delta_i(s) \leq \Delta_i\lor\Delta_s\}$. Under the assumptions of Lemma~\ref{lem:bonus_bound} we have 
\begin{align*}
    \PP\left(\Ecal_{\gap}(s)\right) \geq 1- 3\left(\frac{\Delta_{s/2+1}}{K}\right)^{\alpha-2}.
\end{align*}
\end{lemma}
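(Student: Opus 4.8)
The plan is to split the event $\Ecal_{\gap}(s)$ into its two defining halves and treat them separately: the upper bound $\hat\Delta_i(s)\leq \Delta_i\lor\Delta_s$ and the lower bound $\hat\Delta_i(s)\geq \tfrac{\Delta_i}{2}\lor\Delta_s$, each required uniformly over $i\in[K]$. The upper half is essentially free: it is exactly the $t=s$ slice of the event $\Ecal_{upper}$ already analyzed in Lemma~\ref{lem:emp_gap_lower}, which establishes $\hat\Delta_i(s)\leq\Delta_i\lor\Delta_s$ with failure probability at most $\left(\tfrac{\Delta_{s/2+1}}{K}\right)^{\alpha-2}$. So the substance of the proof is the lower bound.

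For the lower bound I would first use the clipping in the definition $\hat\Delta_i(s)=\Delta_s\lor\bigl(\max_{j}(\hat r_j(s)-b_j(s))-\hat r_i(s)-b_i(s)\bigr)$, which gives $\hat\Delta_i(s)\geq\Delta_s$ unconditionally. Hence when $\Delta_i\leq 2\Delta_s$ we already have $\hat\Delta_i(s)\geq\Delta_s\geq \tfrac{\Delta_i}{2}\lor\Delta_s$ and there is nothing to prove; it remains only to show $\hat\Delta_i(s)\geq\tfrac{\Delta_i}{2}$ in the regime $\Delta_i\geq\Delta_s$. Here I condition on the concentration event $\{|\mu_j-\hat r_j(s)|\leq b_j(s)\ \forall j\}$ of Lemma~\ref{lem:martingale bound}/Lemma~\ref{lem:emp_gap_lower}. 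Selecting an optimal arm $i^*\in I^*$ inside the maximum gives $\max_j(\hat r_j(s)-b_j(s))\geq \hat r_{i^*}(s)-b_{i^*}(s)\geq \mu^*-2b_{i^*}(s)$, while $\hat r_i(s)+b_i(s)\leq \mu_i+2b_i(s)$; subtracting yields $\hat\Delta_i(s)\geq \Delta_i-2b_{i^*}(s)-2b_i(s)$.

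Next I would invoke Lemma~\ref{lem:bonus_bound}, which on $\Ecal_{upper}$ forces $b_j(s)\leq\tfrac{\Delta_j\lor\Delta_s}{8}$ for every $j$; in particular $b_{i^*}(s)\leq\tfrac{\Delta_s}{8}$ (since $\Delta_{i^*}=0$) and $b_i(s)\leq\tfrac{\Delta_i}{8}$ (since $\Delta_i\geq\Delta_s$). Plugging these in and using $\Delta_s\leq\Delta_i$ gives $\hat\Delta_i(s)\geq \Delta_i-\tfrac{\Delta_s}{4}-\tfrac{\Delta_i}{4}\geq \Delta_i-\tfrac{\Delta_i}{4}-\tfrac{\Delta_i}{4}=\tfrac{\Delta_i}{2}$, which completes the lower bound. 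Finally I union-bound the three invoked high-probability statements — the concentration event, the upper-bound event $\Ecal_{upper}$, and the bonus bound of Lemma~\ref{lem:bonus_bound} — each failing with probability at most $\left(\tfrac{\Delta_{s/2+1}}{K}\right)^{\alpha-2}$, to obtain the claimed bound $3\left(\tfrac{\Delta_{s/2+1}}{K}\right)^{\alpha-2}$.

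The main obstacle is the constant bookkeeping in the lower bound: one must ensure the two additive confidence terms $2b_{i^*}(s)+2b_i(s)$ consume no more than half of $\Delta_i$, and this is precisely where the large choice of $\alpha'$ in Lemma~\ref{lem:bonus_bound} (which guarantees $b_j\leq\tfrac{\Delta_j\lor\Delta_s}{8}$) is essential. A secondary, purely cosmetic subtlety is that the three good events are not independent and are in fact largely nested — the bonus bound is derived from $\Ecal_{upper}$, which is itself derived from concentration — so the factor $3$ is deliberately loose and could be tightened, but the crude union bound already suffices.
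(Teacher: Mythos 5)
Your proof is correct and takes essentially the same route as the paper's: the upper half is read off from the event $\Ecal_{upper}$ of Lemma~\ref{lem:emp_gap_lower}, and the lower half combines the clipping at $\Delta_s$ (for small gaps) with the concentration event and the bound $b_j(s)\leq\frac{\Delta_j\lor\Delta_s}{8}$ from Lemma~\ref{lem:bonus_bound} to get $\hat\Delta_i(s)\geq \Delta_i-2(b_{i^*}(s)+b_i(s))\geq \frac{\Delta_i}{2}$, followed by a union bound. No gaps; even your remark that the three good events are nested and the factor $3$ is loose matches the paper's own probability accounting.
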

\begin{proof}
If $s$ is such that $\Delta_i \leq \Delta_s$ the statement of the lemma holds from Lemma~\ref{lem:emp_gap_lower} and the definition of $\hat\Delta_i(t)$. We now consider the case $\Delta_i \geq \Delta_s$ and assume that $\Ecal_{upper}$ holds. Lemma~\ref{lem:bonus_bound} now implies that $b_{i}(s)\leq \frac{\Delta_i}{8}$. Further, assume that $|\mu_i - \hat r_i(s)| \leq b_i(s),\forall i\in[K]$.  We have
\begin{align*}
    \hat\Delta_i(s) &= \max_{j\in[K]} \hat r_{j}(s) - b_j(s) - \hat r_i(s) - b_i(s) \geq \hat r_{i^*}(s) - b_{i^*}(s) - \hat r_i(s) - b_i(s)\\
    &\geq \Delta_i - 2(b_{i^*}(s) + b_{i}(s)) \geq \frac{\Delta_i}{2}.
\end{align*}
Our assumptions fail with probability at most $\left(\frac{\Delta_{s/2+1}}{K}\right)^{\alpha-2} + 2\left(\frac{\Delta_{s+1}}{K}\right)^{\alpha-1}$.
\end{proof}

\begin{lemma}
\label{lem:constr_incl}
For any phase $s\geq \log(K)\lor 10$, it holds that $\mathbb{P}(\hat \Gamma_{s+1} \not\subseteq \Gamma_s) \leq 3\left(\frac{\Delta_{s/4+1}}{K}\right)^{\alpha-2}$.
\end{lemma}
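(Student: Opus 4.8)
The plan is to derive the claimed inclusion directly from the two-sided concentration of the empirical gap estimators already established in Lemma~\ref{lem:emp_gap_interval}, and then to relax the resulting probability bound into the stated form. First I would rewrite both sets in terms of the relevant thresholds. Since $2\Delta_{s+1} = 2\cdot 2^{-(s+1)} = 2^{-s} = \Delta_s$, the definition of $\hat\Gamma$ gives $\hat\Gamma_{s+1} = \{i\in[K] : \hat\Delta_i(s)\leq \Delta_s\}$, while $\Gamma_s = \{i\in[K]:\Delta_i\leq 2\Delta_s\}$. Hence the event $\{\hat\Gamma_{s+1}\subseteq\Gamma_s\}$ is exactly the assertion that, for every arm $i$, $\hat\Delta_i(s)\leq\Delta_s$ implies $\Delta_i\leq 2\Delta_s$; equivalently, by contraposition, every arm with $\Delta_i>2\Delta_s$ satisfies $\hat\Delta_i(s)>\Delta_s$.

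Next I would condition on the event $\Ecal_{gap}(s)$ of Lemma~\ref{lem:emp_gap_interval}, on which $\frac{\Delta_i}{2}\lor\Delta_s \leq \hat\Delta_i(s)\leq \Delta_i\lor\Delta_s$ holds simultaneously for all $i\in[K]$. The lower bound is the crucial half: it yields $\hat\Delta_i(s)\geq \frac{\Delta_i}{2}$ for every arm, and it holds uniformly over all arms, including those eliminated before phase $s$, precisely because the constraints of \ref{eq:emp_lp} (as analyzed through Lemma~\ref{lem:bonus_bound}) keep observing eliminated arms often enough to shrink their confidence widths. On $\Ecal_{gap}(s)$, any $i\in\hat\Gamma_{s+1}$ then obeys $\frac{\Delta_i}{2}\leq \hat\Delta_i(s)\leq \Delta_s$, so $\Delta_i\leq 2\Delta_s$, i.e.\ $i\in\Gamma_s$. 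Thus $\Ecal_{gap}(s)$ forces $\hat\Gamma_{s+1}\subseteq\Gamma_s$.

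Finally I would convert this into the probability bound. Since $\{\hat\Gamma_{s+1}\not\subseteq\Gamma_s\}\subseteq \Ecal_{gap}(s)^c$, Lemma~\ref{lem:emp_gap_interval} gives $\PP(\hat\Gamma_{s+1}\not\subseteq\Gamma_s)\leq 3\paren*{\frac{\Delta_{s/2+1}}{K}}^{\alpha-2}$, and to match the stated form I would relax it using $\Delta_{s/4+1}=2^{-(s/4+1)}>2^{-(s/2+1)}=\Delta_{s/2+1}$ together with $\alpha\geq 3$, which preserves the inequality under the nonnegative exponent $\alpha-2$ and yields the claimed $3\paren*{\frac{\Delta_{s/4+1}}{K}}^{\alpha-2}$. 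The hypothesis $s\geq\log(K)\lor 10$ is exactly what licenses the invocation of Lemma~\ref{lem:emp_gap_interval}. I do not expect a genuine obstacle here beyond faithfully translating the confidence event into the set inclusion; the delicate step—bounding $\hat\Delta_i(s)$ from below uniformly across all arms, in particular those eliminated in earlier phases—has already been absorbed into $\Ecal_{gap}(s)$ via Lemma~\ref{lem:bonus_bound}, so this lemma is essentially a corollary of that concentration control.
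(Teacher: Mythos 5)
Your proof is correct and takes essentially the same route as the paper's: both arguments reduce the inclusion $\hat\Gamma_{s+1}\subseteq\Gamma_s$ to the gap-concentration event $\Ecal_{gap}(s)$ of Lemma~\ref{lem:emp_gap_interval} (via the observation that $i\notin\Gamma_s$ means $\Delta_i > 2\Delta_s$, hence $\hat\Delta_i(s)\geq \Delta_i/2 > \Delta_s$ on that event), and then relax $\Delta_{s/2+1}$ to $\Delta_{s/4+1}$. The only difference is bookkeeping: the paper writes a per-arm union bound over $i\notin\Gamma_s$ before invoking that lemma, whereas you exploit directly that $\Ecal_{gap}(s)$ is already uniform over all arms, which is if anything slightly cleaner since it avoids any extra cardinality factor.
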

\begin{proof}
We have $\PP(\hat \Gamma_{s+1}\not\subseteq \Gamma_s) \leq \sum_{i\not\in \Gamma_s}\PP(i\in\hat \Gamma_{s+1})$. The fact $i\not\in\Gamma_s$ implies that $\Delta_i \geq 2\Delta_s$. The result now follows by Lemma~\ref{lem:emp_gap_interval}.
\end{proof}

\begin{lemma}[Lemma~\ref{lem:emp_lp_reg_bound}]
Let $D_{\ref{eq:emp_lp}}(s)$ be the value of \ref{eq:emp_lp} at phase $s$ and let $D_{\ref{eq:proxy_lp_mod}}(s)$ be the value of \ref{eq:proxy_lp_mod} at phase $s$. For any $s\geq \log(K)\lor 10$ the following inequality holds 
\begin{align*}
    D_{\ref{eq:emp_lp}}(s+1) \leq 4D_{\ref{eq:proxy_lp_mod}}(s),
\end{align*}
with probability at least $1-3\left(\frac{\Delta_{s/2+1}}{K}\right)^{\alpha-2}$.
Further, for any $s\geq \log(K)\lor 10 \lor \log\left(\frac{|I^*|}{\Delta_{\min}}\right)$ 
we have that the regret incurred for playing according to \ref{eq:emp_lp} is at most $16\alpha' c^*(G,\mu)$
with probability at least $1-3\left(\frac{\Delta_{s/2+1}}{K}\right)^{\alpha-2}$.
\end{lemma}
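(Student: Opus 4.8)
The plan is to condition throughout on the event $\Ecal_{\mathrm{gap}}(s)$ of Lemma~\ref{lem:emp_gap_interval} (and on $\Ecal_{\mathrm{gap}}(s-1)$ for the regret part), which holds with probability at least $1-3\left(\frac{\Delta_{s/2+1}}{K}\right)^{\alpha-2}$ and supplies the two-sided sandwich $\frac{\Delta_i}{2}\lor\Delta_s\le\hat\Delta_i(s)\le\Delta_i\lor\Delta_s=\Delta^s_i$. I would first observe that on this event the inclusion $\hat\Gamma_{s+1}\subseteq\Gamma_s$ is automatic, so no failure probability beyond that of $\Ecal_{\mathrm{gap}}$ is incurred (matching the stated bound): any $i\notin\Gamma_s$ has $\Delta_i>2\Delta_s$, whence $\hat\Delta_i(s)\ge\frac{\Delta_i}{2}>\Delta_s=2\Delta_{s+1}$, so $i\notin\hat\Gamma_{s+1}$.

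For the first inequality $D_{\ref{eq:emp_lp}}(s+1)\le 4D_{\ref{eq:proxy_lp_mod}}(s)$, I would transfer the optimizer $x^{(4)}$ of \ref{eq:proxy_lp_mod} at phase $s$ into a feasible point of \ref{eq:emp_lp} at phase $s+1$ by setting $y=4x^{(4)}$ and checking feasibility constraint by constraint using $\Delta_{s+1}=\Delta_s/2$. The binding constraints are those for $i\in\hat\Gamma_{s+1}\subseteq\Gamma_s$, where the factor $\Delta_{s+1}^2=\Delta_s^2/4$ produces exactly the scaling $4$ (the residual $\log$-ratio $\tfrac{\log(K/\Delta_{s+2})}{\log(K/\Delta_{s+1})}>1$ is negligible for $s\ge 10$ and does not affect the stated constant); for $i\notin\hat\Gamma_{s+1}$ I split on whether $i\in\Gamma_s$, using $\hat\Delta_i(s)>\Delta_s$ in the first subcase and $\frac{\alpha'}{\hat\Delta_i^2(s)}\le\frac{4\alpha'}{\Delta_i^2}$ (from $\hat\Delta_i(s)\ge\Delta_i/2$) in the second. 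The objective transfers in the right direction because $\hat\Delta(s)\le\Delta^s$ componentwise, giving $D_{\ref{eq:emp_lp}}(s+1)\le\langle y,\hat\Delta(s)\rangle\le 4\langle x^{(4)},\Delta^s\rangle=4D_{\ref{eq:proxy_lp_mod}}(s)$.

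For the regret bound I would first note that for $s\ge\log(|I^*|/\Delta_{\min})\lor 10$ every suboptimal arm has $\Delta_i\ge\Delta_{\min}>4\Delta_s$, so the sandwich forces $\hat\Gamma_s=I^*$ (suboptimal arms excluded, optimal arms included). The per-phase regret of playing $x^*:=x^*_{\ref{eq:emp_lp}}$ is $\sum_{i\notin I^*}x^*_i\Delta_i$, which by $\Delta_i\le 2\hat\Delta_i(s-1)$ is at most $2\sum_{i\notin I^*}x^*_i\hat\Delta_i(s-1)$. To bound this suboptimal cost I introduce $\hat x^*=4\alpha' x^{\mathrm{LP1}}$, the scaled optimizer of \ref{eq:lp}; its constraints dominate the suboptimal constraints of \ref{eq:emp_lp} (since $\frac{\alpha'}{\hat\Delta_i^2}\le\frac{4\alpha'}{\Delta_i^2}$), and $\sum_{i\notin I^*}\hat x^*_i\hat\Delta_i(s-1)\le 4\alpha'\sum_{i\notin I^*}x^{\mathrm{LP1}}_i\Delta_i=4\alpha' c^*$ using $\hat\Delta\le\Delta$. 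Combining with the key comparison $\sum_{i\notin I^*}x^*_i\hat\Delta_i(s-1)\le 2\sum_{i\notin I^*}\hat x^*_i\hat\Delta_i(s-1)$ then yields the claimed $16\alpha' c^*$.

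The main obstacle is precisely this last comparison. Since $x^*$ minimizes the \emph{full} objective $\langle x,\hat\Delta(s-1)\rangle$, optimality alone controls only the total over all arms, and one must rule out that $x^*$ buys a small optimal-arm cost at the price of an inflated suboptimal-arm cost. The engine is that the expensive constraints $\sum_{j\in N_i}x_j\ge\frac{\alpha'\log(K/\Delta_{s})}{\Delta_s^2}$ for $i\in I^*$ are cheapest to satisfy using the optimal arms themselves, because $\hat\Delta_o(s-1)=\Delta_s$ is the pointwise minimum of $\hat\Delta$; consequently the weight $x^*$ places on suboptimal arms is governed solely by the suboptimal constraints, which are — up to the factor $4\alpha'$ — those of \ref{eq:lp}. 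I would formalize this through a redistribution/exchange argument (or LP duality and complementary slackness) showing that any weight a suboptimal arm contributes toward an optimal arm's constraint may be shifted onto that optimal arm without increasing the objective, after which the suboptimal cost of $x^*$ is majorized, up to the factor $2$, by that of $\hat x^*$.
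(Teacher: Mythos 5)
Your overall architecture matches the paper's proof: for the first claim you condition on the gap-sandwich event, deduce $\hat\Gamma_{s+1}\subseteq\Gamma_s$, push the scaled optimizer of \ref{eq:proxy_lp_mod} into \ref{eq:emp_lp}, and transfer the objective via $\hat\Delta(s)\leq \Delta^s$ componentwise (this half is complete; the residual $\log$-ratio you flag is glossed over by the paper as well). For the regret claim you also set things up exactly as the paper does: bound the per-phase regret by $2\sum_{i\notin I^*}x^*_{\ref{eq:emp_lp},i}\hat\Delta_i$, introduce an auxiliary feasible point $\hat x^*$ of cost $O(\alpha' c^*)$ (your choice $\hat x^* = 4\alpha' x(\Delta)$ built from \ref{eq:lp} is fine, arguably cleaner than the paper's restricted LP), and reduce everything to the comparison $\sum_{i\notin I^*}x^*_{\ref{eq:emp_lp},i}\hat\Delta_i(s) \leq 2\sum_{i\notin I^*}\hat x^*_i\hat\Delta_i(s)$.

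The genuine gap is exactly the step you yourself call ``the main obstacle'': this comparison is never proven, and the exchange argument you sketch would fail as described. Moving the weight that a suboptimal arm $j$ contributes toward an optimal arm's constraint onto that optimal arm does not increase the objective, but it breaks feasibility of every constraint $i$ with $j\in N_i$ whose neighborhood contains no optimal arm --- the same unit of mass on $j$ can serve optimal and suboptimal constraints simultaneously, and that double duty is precisely what makes the claim nontrivial; so the assertion that the suboptimal weight of $x^*_{\ref{eq:emp_lp}}$ is ``governed solely by the suboptimal constraints'' is the conclusion, not an argument. The paper instead argues by contradiction: assuming the comparison fails, it builds the competitor $x$ with $x_i=\hat x^*_i$ for $i\notin I^*$ and $x_i = x^*_{\ref{eq:emp_lp},i}+\sum_{j\notin I^*}x^*_{\ref{eq:emp_lp},j}$ for $i\in I^*$, so that coverage of all constraints on $I^*$ and its neighborhood is retained wholesale rather than arm by arm. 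Crucially this competitor is \emph{not} free: boosting every optimal arm adds $|I^*|\Delta_s\sum_{j\notin I^*}x^*_{\ref{eq:emp_lp},j}$ to the objective, and this is where the late-phase condition $s\geq\log\left(|I^*|/(4\Delta_{\min})\right)$ is actually needed --- it gives $|I^*|\Delta_s\leq\Delta_{\min}/4$, whence the extra cost is at most $\frac{1}{4}\sum_{j\notin I^*}x^*_{\ref{eq:emp_lp},j}\Delta_j\leq\frac{1}{2}\sum_{j\notin I^*}x^*_{\ref{eq:emp_lp},j}\hat\Delta_j(s)$ and is absorbed into the factor $2$, yielding $\langle x,\hat\Delta(s)\rangle < D_{\ref{eq:emp_lp}}(s)$ and the contradiction. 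In your writeup the phase condition is invoked only to get $\hat\Gamma_s=I^*$, and no accounting of this redistribution cost appears anywhere, so the pivotal inequality --- and with it the $16\alpha' c^*$ bound --- remains unestablished.
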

\begin{proof}[Proof of Lemma~\ref{lem:emp_lp_reg_bound}]
For any $s$ and all $i\in[K]$ Lemma~\ref{lem:emp_gap_interval} and Lemma~\ref{lem:constr_incl} imply that $\hat \Gamma_{s+1} \subseteq \Gamma_s$ and $\frac{\Delta^s}{2}\leq \hat\Delta(s) \leq \Delta^s$ with probability at least $1-3\left(\frac{\Delta_{s/2+1}}{K}\right)^{\alpha-2}$. If we let $x^*_{\ref{eq:proxy_lp_mod}}(s)$ be a solution to \ref{eq:proxy_lp_mod} at phase $s$, then these conditions imply that $4x^*_{\ref{eq:proxy_lp_mod}}(s)$ is feasible for \ref{eq:emp_lp}. This implies 
\begin{align*}
    D_{\ref{eq:emp_lp}}(s+1) \leq 4\langle x^*_{\ref{eq:proxy_lp_mod}}(s), \hat\Delta(s) \rangle \leq 4\langle x^*_{\ref{eq:proxy_lp_mod}}(s), \Delta^s \rangle = 4D_{\ref{eq:proxy_lp_mod}}(s).
\end{align*}
Further, for $s\geq \log(1/\Delta_{\min})$, $\Gamma_s$ consists only of $I^*$. Let $x^*_{\ref{eq:emp_lp}}$ be a solution to \ref{eq:emp_lp}, and let $\hat x^*$ be a solution to the LP dropping all constraints on $I^*$ and its neighborhood.
Note that $\tri{\hat x^*, \hat\Delta(s)} \leq 8\alpha' c^*(G,\mu)$ under $\Ecal(s)$.
We show by contradiction that 
\begin{align*}
    \sum_{i\not\in I^*}x^*_{\ref{eq:emp_lp},i}\hat \Delta_i(s) \leq 2\sum_{i\not\in I^*}\hat x_i^*\hat \Delta_i(s)\,,
\end{align*}
which by $\Delta_i \leq 2 \hat\Delta_i(s)$ completes the proof.
Assume the opposite is true, take a new $x$ such that
\begin{align*}
    &x_i = \hat x_i^* \,\forall i\not\in I^*\\
    &x_i = x^*_{\ref{eq:emp_lp},i} + \sum_{j\not\in I^*}x^*_{\ref{eq:emp_lp},j}\,\forall i\in I^*\,.
\end{align*}
$x$ is a feasible solution of \ref{eq:emp_lp}. Next, we only consider $s\geq \log(|I^*|/(4\Delta_{\min}))$ which implies that
\begin{align*}
    \tri{x,\hat\Delta^s(s)} &= \sum_{i\in I^*} x^*_{\ref{eq:emp_lp},i}\Delta_s
    +\sum_{i\not\in I^*}(x^*_{\ref{eq:emp_lp},i}|I^*|\Delta_s+\hat x_i^* \hat\Delta_i(s))\\
    &< \sum_{i\in I^*} x^*_{\ref{eq:emp_lp},i}\Delta_s
    +\sum_{i\not\in I^*}(x^*_{\ref{eq:emp_lp},i}(\frac{\Delta_{\min}}{4}+\frac{ \hat\Delta_i(s)}{2}) \\
    &\leq D_{\ref{eq:emp_lp}}(s)\,, 
\end{align*}
which is a contradiction to $\tri{x,\hat\Delta^s(s)}\geq D_{\ref{eq:emp_lp}}(s)$.

\end{proof}

Denote the value of \ref{eq:lp2} at phase $s$ as $D_{\ref{eq:lp2}}(s)$ and a solution to the LP as $x^*_{\ref{eq:lp2}}(s)$. We note that for any $s$ it holds that $x_{\ref{eq:proxy_lp_mod}}(s) = (\alpha\log(K/\Delta_{s+1})\lor \alpha')\sum_{t\leq s}x^*_{\ref{eq:lp2}}(t)$ is feasible for \ref{eq:proxy_lp_mod}. Further we have that for all $t\leq s$ it holds that $\langle x,\Delta^s \rangle \leq \langle x, \Delta^t \rangle$. These two observations imply
\begin{align*}
    D_{\ref{eq:proxy_lp_mod}}(s) \leq (\alpha\log(K/\Delta_{s+1})\lor \alpha')\sum_{t\leq s}D_{\ref{eq:lp2}}(t)\,.
\end{align*}
Further, we have $D_{\ref{eq:lp2}}(t) \leq D_{\ref{eq:proxy_lp_mod}}(s)$. We can assume that $s\geq 10$, otherwise the regret is $O(K)$. Thus we can characterize the optimality of Algorithm~\ref{alg:emp_lp} up to factors of $\log^2(1/\Delta_{\min})$ as follows.
\begin{theorem}[Theorem~\ref{thm:emp_lp_opt}]
Let $d^*(G,\mu) = \max_{s\leq \log(|I^*|/\Delta_{\min})} D_{\ref{eq:lp2}}(s)$. The expected regret $R(T)$ of playing according to Algorithm~\ref{alg:emp_lp_mod} with $\alpha = 4$ and $\alpha' = 768\alpha$ is bounded as
\begin{align*}
    R(T) \leq O\left(\log^2\left(\frac{1}{\Delta_{\min}}\right)d^*(G,\mu) + \log(T)c^*(G,\mu) + \gamma(G)K\right)\,.
\end{align*}
Further, for any algorithm, there exists an environment on which the expected regret of the algorithm is at least $\Omega(d^*(G,\mu))$.
\end{theorem}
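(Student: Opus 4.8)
The plan is to prove the upper and lower bounds separately, reducing each to the machinery already developed in this section so that the proof of Theorem~\ref{thm:emp_lp_opt} is essentially an assembly of per-phase estimates.

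\textbf{Upper bound.} First I would fix a decomposition of the horizon into the initialization loop of Algorithm~\ref{alg:emp_lp_mod}, and the subsequent phases $s$, which I split at the threshold $S:=\log(|I^*|/\Delta_{\min})$ into \emph{early} phases ($s\le S$) and \emph{late} phases ($s>S$). The initialization plays only the $\Ocal(\gamma(G))$ arms of the (approximate) minimum dominating set, each round costing at most $\Delta_{\max}\le 1$, which I would bound by $\Ocal(\gamma(G)K)$. By the refined rounding analysis of Algorithm~\ref{alg:emp_lp_mod}, the total number of plays of each arm is at most $3\sum_t x^*_{\ref{eq:emp_lp},i}(t)$, so on the good event it suffices to control $\sum_s D_{\ref{eq:emp_lp}}(s)$, using $\Delta\le 2\hat\Delta(s-1)$ from Lemma~\ref{lem:emp_gap_interval} to pass from the true gaps to the LP objective. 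For the late phases I would invoke the second part of Lemma~\ref{lem:emp_lp_reg_bound}, giving that each phase $s>S$ contributes regret at most $16\alpha' c^*$; since only gaps down to $1/T$ are relevant there are at most $\lceil\log_2 T\rceil$ phases, for a total contribution of $\Ocal(\log(T)\,c^*)$.

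For the early phases I would chain the estimates already in hand: the first part of Lemma~\ref{lem:emp_lp_reg_bound}, $D_{\ref{eq:emp_lp}}(s)\le 4D_{\ref{eq:proxy_lp_mod}}(s-1)$, the feasibility bound $D_{\ref{eq:proxy_lp_mod}}(s-1)\le(\alpha\log(K/\Delta_s)\lor\alpha')\sum_{t\le s-1}D_{\ref{eq:lp2}}(t)$, and the definition $D_{\ref{eq:lp2}}(t)\le d^*$ for $t\le S$. The delicate point is that summing these over all $s\le S$ naively would lose an extra logarithmic factor and yield $\log^3(1/\Delta_{\min})$. Instead I would bound the cumulative weighted play count across early phases directly by $\Ocal(D_{\ref{eq:proxy_lp_mod}}(S))$, exploiting that the constraint thresholds $1/\Delta_s^2$ grow geometrically so that the sum over phases telescopes into the last phase up to a constant; then $D_{\ref{eq:proxy_lp_mod}}(S)\le\log(K/\Delta_{\min})\sum_{t\le S}D_{\ref{eq:lp2}}(t)\le\log(K/\Delta_{\min})\cdot S\cdot d^*=\Ocal(\log^2(1/\Delta_{\min})\,d^*)$.

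\textbf{From high probability to expectation.} The per-phase guarantees of Lemma~\ref{lem:emp_lp_reg_bound} hold only with probability $1-3(\Delta_{s/2+1}/K)^{\alpha-2}$, and I expect this conversion to be the main obstacle. A crude union bound pairing the failure probability at phase $s$ with the worst-case number of plays $\Ocal(K/\Delta_s^2)$ in that phase produces a spurious term growing polynomially in $T$, which would defeat the target bound under the time-independent choice $\alpha=4$. The resolution is the window concentration of Lemma~\ref{lem:emp_gap_lower} and Lemma~\ref{lem:bonus_bound}: rather than a per-phase union bound, I would bound, for each already-eliminated arm, the \emph{expected} number of spurious plays by the probability that its gap estimate $\hat\Delta_i$ is anomalously small, which decays fast enough in its accumulated observation count (via Lemma~\ref{lem:emp_gap_interval} and Lemma~\ref{lem:constr_incl}) that the total expected over-play is only a lower-order additive term. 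Collecting the initialization, early, and late contributions then gives $\Reg(T)\le\Ocal(\log^2(1/\Delta_{\min})d^*+\log(T)c^*+\gamma(G)K)$.

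\textbf{Lower bound.} For the matching lower bound I would apply Lemma~\ref{lem:dstar_lower} at the phase $s^*$ attaining the maximum in $d^*(G,\mu)=\max_{s\le\log(|I^*|/\Delta_{\min})}D_{\ref{eq:lp2}}(s)$. Assuming, as we may by bounding the optimal means away from $1$, that the base instance satisfies $\mu_i\le 1-2\Delta_{s^*}$ for $i\in I^*$, Lemma~\ref{lem:dstar_lower} produces, for any algorithm, an instance in $\Lambda_{s^*}(\mu)$ on which the regret is at least the value of \ref{eq:lp2} at $s^*$, namely $D_{\ref{eq:lp2}}(s^*)=d^*(G,\mu)$. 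This exhibits the required environment with regret $\Omega(d^*(G,\mu))$ and completes the proof.
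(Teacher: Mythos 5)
Your high-level plan coincides with the paper's proof: the same split into the dominating-set initialization (the $\gamma(G)K$ term), early phases $s\leq\log(|I^*|/\Delta_{\min})$ charged to $d^*$, late phases charged $O(\alpha'c^*)$ each via the second part of Lemma~\ref{lem:emp_lp_reg_bound}, failure events absorbed through the choice $\alpha=4$, and the lower bound obtained by applying Lemma~\ref{lem:dstar_lower} at the phase attaining the maximum in the definition of $d^*$ --- that last step is exactly the paper's argument and is fine.

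The genuine gap is your telescoping step for the early phases. You claim the cumulative early-phase regret is $O(D_{\ref{eq:proxy_lp_mod}}(S))$, with $S$ the last early phase, ``because the constraint thresholds $1/\Delta_s^2$ grow geometrically.'' Geometric growth of the thresholds does not imply geometric growth of the LP \emph{values}: as $s$ increases, $\Gamma_s$ shrinks and the cheapest cover can change identity entirely, so $D_{\ref{eq:lp2}}(s)$ and $D_{\ref{eq:proxy_lp_mod}}(s)$ need not increase in $s$. The paper stresses precisely this point when motivating $d^*$ as a \emph{maximum} over phases: for general graphs the value of \ref{eq:lp2} can decrease between phases $s$ and $s+1$. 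A concrete obstruction: take a bandit instance (self-loops only) with $2^{S-t}$ arms of gap $\approx 2^{-t}$ for each $t\leq S$; then $D_{\ref{eq:lp2}}(t)\approx 2^{S+1}$ is \emph{flat} in $t$, each early phase incurs regret of order $\log(K/\Delta_t)\,d^*$, the total early regret is $\Theta\bigl(\log^2(1/\Delta_{\min})\,d^*\bigr)$, while $D_{\ref{eq:proxy_lp_mod}}(S)=\Theta\bigl(\log(1/\Delta_{\min})\,d^*\bigr)$ --- so the sum is \emph{not} $O(D_{\ref{eq:proxy_lp_mod}}(S))$ and nothing telescopes into the last phase. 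The paper's actual route bounds each early phase separately against the max: $D_{\ref{eq:emp_lp}}(s)\leq 4D_{\ref{eq:proxy_lp_mod}}(s-1)\leq O(\log(K/\Delta_{s+1})\,d^*)$, via feasibility of the cumulative solution built from $\sum_{t\leq s}x^*_{\ref{eq:lp2}}(t)$ for \ref{eq:proxy_lp_mod} (where only the current phase's constraints require the $\log(K/\Delta_{s+1})$ inflation, the historical constraints only the factor $\alpha'$), and then sums this per-phase bound over the at most $\log(1/\Delta_{\min})$ early phases --- one logarithm from the confidence inflation, one from the number of phases. Your instinct that naive summation threatens a $\log^3$ factor points at a real subtlety, but the repair is the per-phase-versus-max bound, not telescoping; as stated, your step would fail on the instance above.

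A secondary divergence: to pass from high probability to expectation, the paper uses exactly the ``crude'' pairing you reject --- the per-phase failure probability $3(\Delta_{s/2+1}/K)^{\alpha-2}$ from Lemma~\ref{lem:emp_lp_reg_bound} multiplied by a worst-case per-phase regret bound and summed over phases, with $\alpha=4$ making the failure probability decay geometrically in $s$. Your replacement (per-arm expected over-play accounting via the windowed concentration lemmas) is not what the paper does and, as sketched, gives nothing one can verify; if one distrusts the weighted union bound because the worst-case per-phase regret scales like $K/\Delta_s^2$, the natural fix inside the paper's scheme is a larger constant $\alpha$, not a restructured concentration argument.
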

\begin{proof}
Lemma~\ref{lem:emp_lp_reg_bound} implies that the regret bounds fail to hold at any phase $s\geq \log(K)$ w.p. at most $3\left(\frac{1}{2^{s/2+1}K}\right)^{\alpha-2}$. Further the regret at phase $s$ is always bounded by $\alpha' K2^{s}\log(1/\Delta_{min})$
Choosing $\alpha = 4$ implies expected regret of only $O(\log(1/\Delta_{\min}))$ on the union bound of failure events. For the remainder of the proof we now have for $s\leq \log(|I^*|/\Delta_{\min})$
\begin{align*}
    D_{\ref{eq:emp_lp}}(s) \leq 4D_{\ref{eq:proxy_lp_mod}}(s) \leq O(\log(K/\Delta_{s+1})d^*(G,\mu))\,.
\end{align*}
For $s\geq \log(|I^*|/\Delta_{\min})$ we have that
\begin{align*}
     D_{\ref{eq:emp_lp}}(s) \leq O(c^*(G,\mu))\,.
\end{align*}
Finally the regret incurred in the first $s\leq\log(K)$ phases is at most $O(\gamma(G)K\log(K))$ as the algorithm plays the approximate solution corresponding to the minimum dominating set of $G$. Combining all of the above shows the regret upper bound. The regret lower bound follows from Lemma~\ref{lem:dstar_lower}.
\end{proof}
\section{Regret lower bounds}
\subsection{Proof of Lemma~\ref{lem:dstar_lower}}
\begin{lemma}[Lemma~\ref{lem:dstar_lower}]
Fix any instance $\mu$ s.t. $\mu_i \leq 1-2\Delta_s, i\in I^*$. Let $\Lambda_s(\mu)$ be the set of problem instances with means $\mu'\in\mu+[0,2\Delta^s]^k$.
Then for any algorithm, there exists an instance in $\Lambda_s(\mu)$ such that the regret is lower bounded by \ref{eq:lp2}.
\end{lemma}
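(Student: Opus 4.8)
The plan is to fix an arbitrary algorithm $\Acal$ and exhibit, inside the confusion ball $\Lambda_s(\mu)$, a single instance on which $\Acal$ must pay regret at least $D_{\ref{eq:lp2}}(s)$. The driving observation is that every instance in $\Lambda_s(\mu)$ is obtained by shifting the means up by at most $2\Delta^s$, so any two members of the family differ on each relevant (small-gap) coordinate by $\Theta(\Delta_s)$. For Gaussian rewards of variance $\sigma^2 = 1/\sqrt2$ the per-observation divergence between two such instances on a single arm is $\Theta(\Delta_s^2)$, and the feedback-graph divergence decomposition gives $\mathrm{KL}(\PP_\mu\|\PP_{\mu'}) = \sum_i m_i\,\Theta(\Delta_s^2)$ whenever $\mu'$ perturbs only arm $i$, where $p_j$ denotes the expected number of plays of arm $j$ and $m_i = \sum_{j\in N_i} p_j = \EE_\mu[N_i^{\mathrm{obs}}]$ with $N_i^{\mathrm{obs}} = \sum_t \Ind(i_t\in N_i)$. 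Hence, until $\Acal$ has gathered $\gtrsim 1/\Delta_s^2$ observations of an arm $i$, it cannot statistically separate the instances of $\Lambda_s(\mu)$ that differ only at $i$, and its exploration is, in distribution, essentially common to the whole family.

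Concretely, I would introduce for each suboptimal $i \in \Gamma_s$ the alternative $\mu^{(i)} \in \Lambda_s(\mu)$ that raises $\mu_i$ until arm $i$ is optimal by a margin $\Delta_s$; the required boost is $\Delta_i + \Delta_s \le 2\Delta^s_i$, so $\mu^{(i)}$ is admissible, and under it every arm $j \ne i$ has gap $\Delta_j + \Delta_s \in [\Delta^s_j, 2\Delta^s_j]$. A Bretagnolle--Huber argument then yields a dichotomy. If $m_i \ll 1/\Delta_s^2$ for some such $i$, then under $\mu^{(i)}$ the algorithm still plays $N_i$ only $O(1/\Delta_s^2)$ times and therefore spends essentially all of its budget on arms outside $N_i$, each costing regret at least $\Delta_s$; provided the horizon is at least the identification time this already forces $\Reg \ge D_{\ref{eq:lp2}}(s)$ on $\mu^{(i)}$. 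Otherwise the counts $m_i$ satisfy, up to a constant, the covering constraints $\sum_{j \in N_i} p_j \ge 1/\Delta_s^2$ of \ref{eq:lp2} for all $i \in \Gamma_s$.

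It remains to convert feasibility into a lower bound matching the \emph{clipped} objective $\langle \Delta^s, x\rangle$. On the base instance the realized regret charges each play of $j$ only its true gap $\Delta_j$, which undercounts the clipped cost $\Delta^s_j = \Delta_s$ on the potentially-optimal arms ($\Delta_j < \Delta_s$). To recover the clipped objective I would read the regret off the instance $\mu'\in\Lambda_s(\mu)$ that boosts a single optimal arm by $\Delta_s$: there every arm $j$ acquires effective gap $\Delta_j + \Delta_s \ge \Delta^s_j$, and by the indistinguishability remark the exploration counts on $\mu'$ coincide with those on $\mu$ up to constants, so the covering constraints persist and $\Reg_{\mu'} \ge \sum_j p_j^{\mu'}\,\Delta^s_j \ge D_{\ref{eq:lp2}}(s)$ by optimality of the LP. Taking the worse of the two cases---or, more cleanly, weighting the single-arm alternatives by the optimal dual solution $y^*$ of \ref{eq:lp2} and using dual feasibility $\sum_{i\colon j\in N_i} y^*_i \le \Delta^s_j$ to telescope the weighted regret to $\tfrac{1}{\Delta_s^2}\sum_i y^*_i = D_{\ref{eq:lp2}}(s)$---produces the required instance in $\Lambda_s(\mu)$.

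The main obstacle is precisely the interface between the covering constraints and the clipped objective, compounded by the lack of a fixed horizon: I must name one instance of $\Lambda_s(\mu)$ that simultaneously forces the $1/\Delta_s^2$ observation quota on every $i \in \Gamma_s$ \emph{and} charges exploration at the inflated rate $\Delta^s_j$, and justify this for an arbitrary, possibly adaptive and non-stopping algorithm. The technical heart is making ``all instances of $\Lambda_s(\mu)$ look alike before $1/\Delta_s^2$ observations'' quantitative enough to transport play counts between $\mu$, the $\mu^{(i)}$'s, and $\mu'$ with only constant-factor loss; the LP-duality packaging is what lets the single-arm change-of-measure bounds aggregate into the full value $D_{\ref{eq:lp2}}(s)$ rather than the $\Theta(1/\Delta_s)$ obtainable from perturbing one arm alone.
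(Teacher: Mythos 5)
Your proposal has the same skeleton as the paper's proof: perturb a single arm $i\in\Gamma_s$ upward by $\Delta_i+\Delta_s$ so that it becomes optimal with margin $\Delta_s$ and every other arm $j$ acquires gap at least $\Delta^s_j$, use a Pinsker-type change of measure, and split into the case where some arm of $\Gamma_s$ is under-observed versus the case where the observation counts satisfy the covering constraints of \ref{eq:lp2}. The under-observed case you sketch is essentially the paper's. However, the two difficulties you yourself flag at the end are precisely where your proposed resolution breaks, and the paper resolves them with a device your write-up is missing.

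First, your dichotomy is not anchored in time: for an algorithm that never stops, the counts $m_i$ grow without bound, so ``either $m_i\ll 1/\Delta_s^2$ for some $i$ or all $m_i\gtrsim 1/\Delta_s^2$'' is not a well-defined case split. The paper introduces the stopping time $\tau$, defined as the first time at which $\min_{i\in\Gamma_s}\EE\bigl[\sum_{t\le\tau}\PP(A_t\in N_i)\bigr]$ reaches $\frac{1}{4\sqrt{2}\Delta_s^2}$ (with $\tau=\infty$ handled as a separate case), and evaluates all counts at $\tau$. Second, and more seriously, in the well-observed case you propose to read the regret off the instance $\mu'$ that boosts an \emph{optimal} arm, claiming the exploration counts on $\mu'$ coincide with those on $\mu$ by indistinguishability. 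That step fails: the KL divergence between $\mu$ and $\mu'$ is proportional to $\Delta_s^2$ times the number of observations of the boosted arm, and nothing in your case hypothesis upper-bounds the observations of an optimal arm (the covering constraints are one-sided, and a sensible algorithm observes optimal arms $\Theta(T)$ times), so the two instances are perfectly distinguishable and no transport of counts is possible. The paper instead perturbs the \emph{least-observed} arm $i^*$ of $\Gamma_s$ at time $\tau$; by the definition of $\tau$ its observation count is pinned at exactly $\frac{1}{4\sqrt{2}\Delta_s^2}$, so the KL is $O(1)$ and Pinsker applies. The clipped objective is then recovered not by transporting the covering constraints to $\mu'$ and invoking LP optimality there, but by a simpler accounting: feasibility of the time-$\tau$ play counts for \ref{eq:lp2} gives $\tau\Delta_s\ge\Omega\bigl(D_{\ref{eq:lp2}}(s)\bigr)$ (after discarding the sub-case where arms outside $\Gamma_s$ already contribute regret $\Omega\bigl(D_{\ref{eq:lp2}}(s)\bigr)$ on $\mu$ itself, which lies in $\Lambda_s(\mu)$), and under the boosted instance every round that does not observe $i^*$ costs at least $\Delta_s$, with Pinsker guaranteeing $\Omega(\tau)$ such rounds. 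Your one-sentence alternative of aggregating single-arm alternatives with the dual solution points toward a potentially workable scheme, but it is not carried out; as written, the proof is missing its central device and its key step would fail.
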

\begin{proof}
We take as a base environment the instance with expected rewards vector $\mu$ and assume that the rewards follow a Gaussian with variance $\frac{1}{\sqrt{2}}$. 
Let $\tau \in \mathbb{R}_{+}\bigcup\{\infty\}$ be the time at which the following is satisfied
\begin{align*}
    \min_{i\in\Gamma_s}\EE\left[\sum_{t=1}^\tau \PP[A_t\in N_i]\right] = \frac{1}{4\sqrt{2}(\Delta_s)^2}\,,
\end{align*}
where the expectation is with respect to the randomness of the sampling of the rewards and $\Acal$. 

First we argue that we can assume $\tau < \infty$. Consider $\tau =\infty$. Let
\begin{align*}
    i^* = \argmin_{i\in\Gamma_s}\EE\left[\sum_{t=1}^\tau \PP_{\mu}[A_t\in N_i]\right].
\end{align*}
Fix a time horizon $T$ and let $x_{T}$ be the vector of expected number of observations of $\Acal$ on environment $\mu$ and $X_{T}$ the random vector of actual observations. By the assumption that $\tau = \infty$ and Markov's inequality, we have that $\PP[X_{T,i^*} \geq \frac{1}{\Delta_s^2}] \leq \frac{1}{2}$. Consider the algorithm $\bar \Acal$ which after $\frac{1}{\Delta_s^2}$ observations of $i^*$ switches to playing uniformly at random from $[K]\setminus N_{i^*}$ so that it never observes $i^*$ again. Let $\mu'$ be the instance which changes the expected reward of $\mu_{i^*}$ to $\mu'_{i^*} = \mu_{i^*} + 2\Delta_s$, and so $\mu' \in \Lambda_s(\mu)$. The KL-divergence between the measures induced by playing $\Acal$ on these two instances for $\tau$ rounds is bounded as $4\Delta_{s}^2 x_{\tau,i^*}$. If we let $x'_{T}$ denote the vector of expected number of observations under environment $\mu'$ then Pinsker's inequality implies that
\begin{align*}
    x'_{T,i^*} \leq \frac{1}{4\sqrt{2}\Delta_s^2}2\Delta_s\sqrt{x_{\tau,i}/2} \leq \frac{1}{4\Delta_s^2}.
\end{align*}
Thus, the expected regret of $\bar \Acal$ under $\mu'$ is at least $(T-\frac{1}{4\Delta_s^2})\Delta_s$ for any $T \geq \frac{1}{4\Delta_s^2}$. Further, by Pinsker's inequality the probability that $X_{T,i^*}\geq\frac{1}{\Delta_s^2}$ under $\bar \Acal$ in environment $\mu'$ is bounded by $\frac{3}{4}$. Since $\Acal$ and $\bar \Acal$ act in the same way up to $\frac{1}{\Delta_s^2}$ observations of $i^*$ it holds that the expected regret of $\Acal$ in environment $\mu'$ is at least $\frac{1}{4}(T-\frac{1}{4\Delta_s^2})\Delta_s$ for any $T\geq \frac{1}{4\Delta_s^2}$. Thus for $T$ large enough, e.g., $T = \Omega(D_{\ref{eq:lp2}(s)}/\Delta_s + \frac{1}{4\Delta_s^2})$ the conclusion of the lemma holds.

We now assume that $\tau \leq \infty$. Let $x_{\tau,i} = \EE\left[\sum_{t=1}^\tau \PP[A_t\in N_i]\right]$ be the expected number of observations of action $i$ after $\tau$ rounds. Assume that $\sum_{i\not\in\Gamma_s} x_{\tau,i}\Delta_i \leq \frac{D_{\ref{eq:lp2}}(s)}{16}$, otherwise we are done. The definition of $\tau$ with the above assumption imply that 
\begin{align*}
   x_{\tau,i} &\geq \frac{1}{4\sqrt{2}\Delta_s^2},\forall i\in\Gamma_s\\
    &\implies\\
    \sum_{i\in[K]} x_{\tau,i}\Delta^s_i &\geq \frac{\bar D(\Delta^s,\Gamma_s)}{4\sqrt{2}}\\
    &\implies\\
    \tau\Delta_s &\geq \sum_{i\in \Gamma_s}x_{\tau,i}\Delta_s \geq \frac{\bar D(\Delta^s,\Gamma_s)}{16}.
\end{align*}
Let $\mu'$ be the instance which changes the expected reward of $\mu_{i^*}$ to $\mu'_{i^*} = \mu_{i^*} + 2\Delta_s$, and so $\mu' \in \Lambda_s(\mu)$. The KL-divergence between the measures induced by playing $\Acal$ on these two instances for $\tau$ rounds is bounded as $4\Delta_{s}^2 x_{\tau,i^*}$. If we let $x'_{\tau}$ denote the vector of expected number of observations under environment $\mu'$ then Pinsker's inequality implies that
\begin{align*}
    x'_{\tau,i^*} \leq \tau\Delta_s\sqrt{x_{\tau,i}/2} \leq \frac{\tau}{2}.
\end{align*}
This implies
\begin{align*}
    \sum_{i\in [K]\setminus\{i^*\}} x'_{\tau,i}\Delta^s_{i} \geq \frac{\tau}{2}\Delta_s \geq \frac{\bar D(\Delta^s,\Gamma_s)}{32}.
\end{align*}
\end{proof}

\subsection{Proof of Theorem~\ref{thm:lower_bound_cstar}}
\begin{theorem}
There exists a feedback graph $G$, with $K\geq 32$ vertices, such that for any algorithm $\Acal$ there exists an environment $\mu$ on which $R(T) \geq \Omega(K^{1/8}c^*(G,\mu))$.
\end{theorem}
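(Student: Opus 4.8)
The plan is to prove this by a change-of-measure (two-point) argument on the reinforced wheel $G$ and the base instance $\Ecal$ already defined, exploiting that the only inexpensive way to resolve the vertices of $\Ncal_2$ is through a comparatively costly dominating structure. I fix an arbitrary algorithm $\Acal$, run it on $\Ecal$ for the first $R=\Theta(K/\Delta_2^2)$ rounds, and branch on how many of those rounds are spent pulling $\Ncal_2$, with threshold $\Theta(K^{7/8}/\Delta_2^2)$. In each branch I plant a \emph{unique} optimum at a suitably under-utilized vertex $n^*\in\Ncal_2$, chosen after inspecting $\Acal$'s behaviour. The two recurring ingredients are: (i) an upper bound on $c^*$ for the planted instance, obtained by exhibiting an explicit feasible point of \ref{eq:lp}; and (ii) a matching lower bound on $R(T)$, obtained by showing $\Acal$ cannot avoid an amount of suboptimal play that is linear in $R$.

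Case 1 ($\Acal$ pulls $\Ncal_2$ at most $O(K^{7/8}/\Delta_2^2)$ times). Here I first run a counting argument: a pull of an $\Ncal_1$ vertex reveals only two vertices of $\Ncal_2$, while a pull of an $\Ncal_2$ vertex reveals at most $O(K^{1/8})$ of them, so the total number of $\Ncal_2$-observations in the window is $O(K/\Delta_2^2)$; averaging over the $K+1$ vertices of $\Ncal_2$ produces some $n^*$ observed only $O(1/\Delta_2^2)$ times. I then define $\mu'$ by raising $n^*$ to $\nu+\Delta_2$, making it the unique optimum with every gap $\gtrsim\Delta_2$. Since $n^*$ is observed $O(1/\Delta_2^2)$ times, the KL-divergence between $\Ecal$ and $\mu'$ over the window is $O(1)$, so by Pinsker (exactly as in the proof of Lemma~\ref{lem:dstar_lower}) $\Acal$ keeps underplaying $n^*$ on $\mu'$ and pays regret $\gtrsim R\,\Delta_2=\Omega(K^{3/4}/\Delta)$. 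Playing a minimum dominating set of $\Ncal_2$ (of size $O(K^{7/8})$) for $1/\Delta_2^2$ rounds each, supplemented by $1/\Delta_2^2$ pulls of vertex $0$ to cover $\Ncal_1$, is feasible for \ref{eq:lp} and certifies $c^*(G,\mu')=O(K^{7/8}/\Delta_2)=O(K^{5/8}/\Delta)$, whence the ratio $\Omega(K^{1/8})$.

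Case 2 ($\Acal$ pulls $\Ncal_2$ more than $\Omega(K^{7/8}/\Delta_2^2)$ times). Now I take $n^*$ to be a least-\emph{played} vertex of $\Ncal_2$ (pulled $O(1/\Delta_2^2)$ times) and define $\mu'$ by raising $n^*$ only to $\nu$, shrinking the gap at every $\Ncal_1$ vertex to $\Delta_1$. The feasible point that pulls vertex $0$ for $1/\Delta_1^2$ rounds (covering $\Ncal_1$) and every $\Ncal_1$ vertex for $1/\Delta_2^2$ rounds (covering $\Ncal_2$) certifies $c^*(G,\mu')=O(\sqrt{K}/\Delta)$. On $\mu'$ the optimum is $n^*$, so the $\Omega(K^{7/8}/\Delta_2^2)$ pulls of the remaining $\Ncal_2$ vertices each carry gap $\Delta_2$, a sunk cost of $\Omega(K^{5/8}/\Delta)$, again giving ratio $\Omega(K^{1/8})$.

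I expect the transfer in Case 2 to be the main obstacle. Unlike Case 1, here one \emph{cannot} guarantee a vertex observed only $O(1/\Delta_2^2)$ times: because each $\Ncal_2$ vertex has $\Ncal_2$-degree $\Theta(K^{1/8})$, the planted $n^*$ may be heavily \emph{observed} even though it is rarely \emph{played}, so the single-vertex two-point bound is too weak to guarantee that the wasteful $\Ncal_2$-pulls recorded on $\Ecal$ persist on $\mu'$. The plan is to argue the regret bound on $\mu'$ directly, tracking the first time $n^*$ accrues $\Theta(1/\Delta_2^2)$ observations and showing, via the divergence decomposition, that the $\Ncal_2$-pulls counted above are incurred before $\Acal$ can separate $\mu'$ from the base environment and are therefore sunk; controlling the possible reduction of these pull counts under the change of measure — rather than the cleaner indistinguishability of Case 1 — is the crux. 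Assembling the two cases, each planted instance satisfies $c^*\ge 1/\Delta_{\min}$, and choosing $\Delta$ small enough that $R$ fits inside the horizon $T\ge\Omega(K^{3/4}/\Delta_{\min})$ yields $R(T)/c^*(G,\mu)=\Omega(K^{1/8})$ in either case.
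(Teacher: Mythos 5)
Your overall architecture (reinforced wheel, threshold $\Theta(K^{7/8}/\Delta_2^2)$ on $\Ncal_2$-pulls, planting the optimum at an under-utilized $n^*\in\Ncal_2$, and both $c^*$ certificates) matches the paper, and your Case 1 is essentially the paper's Case 1. But Case 2 contains a genuine gap --- the very one you flag yourself --- and your proposed repair is a hope, not a proof. As you note, when $\Acal$ pulls $\Ncal_2$ heavily, the total number of $\Ncal_2$-observations can be as large as $K^{1/8}\cdot K/\Delta_2^2$, so even the least-played vertex of $\Ncal_2$ may carry $\Omega(K^{1/8}/\Delta_2^2)$ observations; the KL between $\Ecal$ and the planted instance is then $\Omega(K^{1/8})$, and no two-point argument can force the wasteful pulls recorded on $\Ecal$ to persist on $\mu'$. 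Your sketched fix (``track the first time $n^*$ accrues $\Theta(1/\Delta_2^2)$ observations and show the pulls are sunk'') is the right instinct but is left unexecuted, and as stated it is circular: which vertex is under-observed at that random time depends on the trajectory, which depends on the environment you are trying to plant. A second, smaller defect is that your case split (``$\Acal$ pulls $\Ncal_2$ at most / more than $\ldots$ times'') conditions on a random quantity without saying in what sense (expectation, probability $1/2$, etc.), which matters for both the averaging step in Case 1 and the sunk-cost accounting in Case 2.

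The paper closes both issues with one device you are missing: it analyzes a modified algorithm $\overline{\Acal}$ that deterministically commits to the $\Ncal_3$ vertex as soon as the $\Ncal_2$-pull budget $K^{7/8}/(64\Delta_2^2)$ is exceeded, and defines $\tau$ as the (random) time where $\Acal$ and $\overline{\Acal}$ deviate. Under $\overline{\Acal}$ the $\Ncal_2$-pull count is capped \emph{by construction in both cases}, so the counting argument always yields a vertex $n^*$ with expected observations at most $1/(32\Delta_2^2)$, and Pinsker applies to $\overline{\Acal}$'s measure uniformly. The case split is then on stopping times --- $T_1=\min\{t:\PP[\tau\leq t]>1/2\}$ versus the window length $\lceil K/(128\Delta_2^2)\rceil$ --- rather than on a raw pull count, and the events used, $\{\tau>T,\,N_{n^*}\leq 1/(4\Delta_2^2)\}$ in Case 1 and $\{\tau\leq T,\,N_{n^*}\leq 1/(4\Delta_2^2)\}$ in Case 2, involve only the common prefix of the trajectories of $\Acal$ and $\overline{\Acal}$, so the conclusion transfers back to $\Acal$. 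In particular, in Case 2 the $K^{7/8}/(64\Delta_2^2)$ wasteful pulls all occur before $\tau$, when the two algorithms coincide, and they survive the change of measure precisely because $n^*$'s observations under $\overline{\Acal}$ (hence before $\tau$) are bounded. Without this commit-and-stop construction, your Case 2 does not go through.
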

\begin{proof}
For any algorithm $\mathcal{A}$, define the algorithm $\overline{\mathcal{A}}$ as follows:
If there have been more than $\frac{K^\frac{7}{8}}{64\Delta_2^2}$ pulls of actions in $\mathcal N_2$, then commit to action $\mathcal N_3$ until end of time. We call the random time-step where $\mathcal{A}$ and $\overline{\mathcal{A}}$ deviate in trajectory as $\tau$.
Define the stopping times
\begin{align*}
    T_1 &:= \min\left\{t\in \mathbb{N}\cup\{\infty\}\,|\,\mathbb{P}[\tau \leq t]>\frac{1}{2}\right\}\\
    T &= \min\left\{\left\lceil\frac{K}{128\Delta_2^2}\right\rceil, T_1 \right\}\,.
\end{align*}
Let $n^*$ be the node in $\mathcal{N}_2$ with the smallest number of expected observations at time $T$ under algorithm $\overline{\mathcal{A}}$.
Let $N_i$ denote the number of times an action in $\Ncal_i$ has been played by $\overline{\mathcal{A}}$. The total number of observations over all actions in $\mathcal{N}_2$ is
\[
2N_1 + K^\frac{1}{8} N_2 \leq \frac{K}{64\Delta_2^2}+\frac{K}{64\Delta_2^2}=\frac{K}{32\Delta_2^2}\,.
\]
Hence the number of observations of $n^*$ is bounded by $\frac{1}{32\Delta_2^2}$.
Consider the environment $\mathcal{E}_1$, where all we change is increasing the reward of $n^*$ by up to $2\Delta_2$.
By Pinsker's inequality we have
\begin{align*}
    |\mathbb{P}_{\overline{\mathcal{A}},\mathcal{E}}(E) - \mathbb{P}_{\overline{\mathcal{A}},\mathcal{E}_1}(E)| \leq \sqrt{\frac{1}{2}(2\Delta_2)^2 \frac{1}{32\Delta_2^2}} = \frac{1}{4}\,,
\end{align*}
as the largest difference in probability of any event under the two environments. We consider two possible cases below.

\paragraph{Case 1 $T < T_1$.}
Set the reward of $n^*$ to $\nu+\Delta_2$.
Define the following event:
\begin{align*}
    E := \left\{\tau > T \,\land\, N_{n^*} \leq \frac{1}{4\Delta_2^2} \right\}\,.
\end{align*}
In the first environment, we have
\begin{align*}
    \mathbb{P}(E) = 1- \mathbb{P}(E^C) \geq 1- \mathbb{P}[\tau < T]-\mathbb{P}\left[N_{n^*} > \frac{1}{4\Delta_2^2}\right]
    \geq \frac{3}{8}\,.
\end{align*}
Hence the probability of $E$ is at least $\frac{1}{8}$ in the changed environment. The regret of $\mathcal{A}$ is at least
\begin{align*}
    \frac{1}{8}\left(T-\frac{1}{4\Delta_2^2}\right)\Delta_2 >\frac{K}{1024 \Delta_2}=\Omega\left(\frac{K^\frac{3}{4}}{\Delta}\right)\,. 
\end{align*}
However, the value of \ref{eq:lp} for this environment is $\Theta\left(\frac{K^\frac{7}{8}}{\Delta_2}\right)=\Theta\left(\frac{K^\frac{5}{8}}{\Delta}\right)$.

\paragraph{Case 2 $T=T_1$}
Set the reward of $n^*$ to $\nu$.
Define the following event:
\begin{align*}
E := \left\{\tau \leq T \,\land\, N_{n^*} \leq \frac{1}{4\Delta_2^2} \right\}\,.
\end{align*}
In the base environment, we have
\begin{align*}
    \mathbb{P}(E) = 1- \mathbb{P}(E^C) \geq 1- \mathbb{P}[\tau > T]-\mathbb{P}[N_{n^*} > \frac{1}{4\Delta_2^2} \}] \geq \frac{3}{8}\,.
\end{align*}
Hence the probability of $E$ is at least $\frac{1}{8}$ in the changed environment. The regret of $\mathcal{A}$ is at least
\begin{align*}
    \frac{1}{8}\left(\frac{K^\frac{7}{8}}{64\Delta_2^2}-\frac{1}{4\Delta_2^2}\right)\Delta_2 >\frac{K^\frac{7}{8}}{1024 \Delta_2}=\Omega\left(\frac{K^\frac{5}{8}}{\Delta}\right)\,. 
\end{align*}
However, the value of \ref{eq:lp} for this environment is $\Theta\left(\frac{K^\frac{1}{2}}{\Delta}\right)$.

Hence for any algorithm, there exist an environment and time step $T=\mathcal{O}(\frac{K^\frac{1}{2}}{\Delta^2})$, such that the algorithm suffers a regret that is a factor $K^\frac{1}{8}$ larger than $c^*$.
\end{proof}

\section{Characterizing $d^*$}

\subsection{Improving on bound in \citet{lykouris2020feedback}}
\label{app:thodoris_bound}
We now show that $d^*(G,\mu) \leq
\max_{I \in \mathscr{I}(G)} \sum_{i\in I}\frac{1}{\Delta_i}$:
\begin{align*}
    & D_{\ref{eq:lp2}}(s) \leq \frac{\gamma(\Gamma_s)}{\Delta_s}
    \leq \frac{\alpha(\Gamma_s)}{\Delta_s} \leq \sum_{i\in
      \Ical(\Gamma_s)}\frac{1}{\Delta_i} \leq \max_{I \in
      \mathscr{I}(\Gamma_s)} \sum_{i\in I}\frac{1}{\Delta_i} \leq \max_{I \in
      \mathscr{I}(G)} \sum_{i\in I} \frac{1}{\Delta_i}\\
      \implies & d^*(G,\mu) \leq
    \max_{I \in \mathscr{I}(G)} \sum_{i\in I}\frac{1}{\Delta_i}.
\end{align*}
The first inequality follows from the definition of the LP, the second
inequality follows from the fact that the domination number is no
larger than the independence number, the third inequality follows from
the fact that for any $i\in \Gamma_s$ we have $\Delta_s \geq
\Delta_i$, and the fifth inequality holds by the fact that
$\mathscr{I}(\Gamma_s) \subseteq \mathscr{I}(G)$.

\subsection{Bound on $d^*$ for star-graphs}
\begin{lemma}
\label{lem:dstar_upper_star_graphs}
For the star-graph $G$ and any instance $\mu$,
the following inequality holds:  $c^* +
\frac{|I^*|}{\Delta_{\min}}\geq d^*$.
\end{lemma}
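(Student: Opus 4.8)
The plan is to prove the bound phase by phase: since $d^*=\max_{s\le\log(|I^*|/\Delta_{\min})}D_{\ref{eq:lp2}}(s)$, it suffices to exhibit, for each admissible $s$, a feasible solution of \ref{eq:lp2} whose clipped objective $\langle\Delta^s,\cdot\rangle$ is at most $c^*+|I^*|/\Delta_{\min}$. I would first record the structural facts about the star that drive everything: writing $r$ for the revealing root, every leaf $\ell$ satisfies $N_\ell=\{r,\ell\}$ while $N_r=V$, so $r$ dominates the whole vertex set. Because the self-loops make distinct leaves have distinct neighborhoods, $\mathscr C(G)=G$, and the longest path in a star is $2$; hence the statement is literally the special case of Lemma~\ref{lem:dstar_gen}. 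Nonetheless a direct construction is cleaner and more illuminating, and I would give that.

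The feasible solution I would build for phase $s$ is a sum of two pieces. The first handles the genuinely sub-optimal arms of $\Gamma_s$: take an optimal solution $x^*$ of \ref{eq:lp} and scale it by $4$. Any sub-optimal $i\in\Gamma_s$ has $\Delta_i\le 2\Delta_s$, so its \ref{eq:lp} constraint already forces $\sum_{j\in N_i}x^*_j\ge 1/\Delta_i^2\ge 1/(4\Delta_s^2)$; thus $4x^*$ meets the \ref{eq:lp2} requirement $\ge 1/\Delta_s^2$ for every such $i$. The second piece handles the constraints coming from $I^*$. Here I exploit that $r$ dominates all optimal leaves: I add either $1/\Delta_s^2$ units of mass on $r$ (when $\Delta_r\le 2\Delta_s$, i.e.\ $r\in\Gamma_s$) or $1/\Delta_s^2$ units on each optimal leaf, choosing whichever is cheaper under $\langle\Delta^s,\cdot\rangle$.

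The two quantities I then control are $\langle\Delta^s,4x^*\rangle$ and the cost of the second piece. For the first, note $\Delta^s_i=\Delta_s\lor\Delta_i$ equals $\Delta_i$ on every arm with $\Delta_i\ge\Delta_s$, so clipping inflates the objective only on arms with $\Delta_i<\Delta_s$ carrying $x^*$-weight; in the star the only such arms are the optimal leaves $x^*$ may load to satisfy the root constraint $\sum_j x_j\ge 1/\Delta_r^2$ for free, together with the $2\Delta_s$-near-optimal leaves, and I would bound this excess by a constant times $|I^*|/\Delta_{\min}$ using $\Delta_s\ge\Delta_{\min}/|I^*|$. This gives $\langle\Delta^s,4x^*\rangle\le 4c^*+O(|I^*|/\Delta_{\min})$. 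For the second piece, if $r\in\Gamma_s$ then $\Delta_r\le 2\Delta_s$ forces $\Delta_s\ge\Delta_{\min}/2$ and the root-domination cost is $(\Delta_s\lor\Delta_r)/\Delta_s^2\le 2/\Delta_s=O(1/\Delta_{\min})$; otherwise I cover the optimal leaves directly, at cost $|I^*|/\Delta_s$.

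The step I expect to be the main obstacle is exactly this coverage of $I^*$ in the late phases, when $\Delta_s$ is as small as $\Delta_{\min}/|I^*|$ and the root gap is large enough to expel $r$ from $\Gamma_s$: there the optimal-arm term is delicate and the whole bound rests on showing it is absorbed into $|I^*|/\Delta_{\min}$ rather than compounding against $c^*$. This is precisely where the single-revealing-vertex structure of the star is essential — because one vertex dominates the entire graph, the optimal arms are never reached through a long chain, so their contribution should collapse to the full-information quantity $|I^*|/\Delta_{\min}$. Making this rigorous, and reconciling it with the possible non-monotonicity of $D_{\ref{eq:lp2}}(s)$ in $s$ noted after Lemma~\ref{lem:dstar_lower}, is the crux; the sub-optimal part reducing to $O(c^*)$ via the scaling of $x^*$ is, by contrast, routine.
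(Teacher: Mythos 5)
Your proposal takes a genuinely different route from the paper --- a purely primal construction for \ref{eq:lp2} --- but that route cannot close, and the step you defer as ``the crux'' is exactly where the paper's proof does its real work. The paper's argument is primal--dual: it first pins down the value of \ref{eq:lp2} for the star essentially exactly via its dual \ref{eq:lp5_dual} (the value is $1/\Delta_s$ when $\Delta_r\leq\Delta_s$, and otherwise $\min(\Delta_r/\Delta_s^2,\,|\Gamma_s|/\Delta_s)$), and then \emph{lower-bounds} $c^*$ by exhibiting feasible points of \ref{eq:lp_dual}, the dual of \ref{eq:lp}, supported on the near-optimal arms $\Gamma_s\setminus I^*$: taking $y_i=\Delta_r/|\Gamma_s|$ in the root-cheap case gives $c^*\geq\frac{|\Gamma_s\setminus I^*|}{|\Gamma_s|}\frac{\Delta_r}{\Delta_s^2}$, and taking $y_i=\Delta_i$ in the leaf-cheap case gives $c^*\geq\sum_{i\in\Gamma_s\setminus I^*}1/\Delta_i\geq\frac{|\Gamma_s\setminus I^*|}{2\Delta_s}$; adding $|I^*|/\Delta_{\min}$ then covers the remaining $I^*$-fraction of the LP value. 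This charging against a lower bound on $c^*$ is unavoidable: when $\Delta_r>2\Delta_s$ and $\Gamma_s\setminus I^*\neq\emptyset$ (e.g.\ many sub-optimal leaves with gap $\Delta_{\min}$ and an expensive root), the value of \ref{eq:lp2} genuinely exceeds $|I^*|/\Delta_{\min}$ by an amount of order $c^*$, so no construction can absorb it into the $|I^*|/\Delta_{\min}$ term --- one must prove $c^*$ is large. Your proposal contains no mechanism for lower-bounding $c^*$, and hence no way to finish the case you flag.

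There is a second, independent defect: the claimed estimate $\langle\Delta^s,4x^*\rangle\leq 4c^*+O(|I^*|/\Delta_{\min})$ is false. When the root is expensive, an optimal $x^*$ for \ref{eq:lp} must place mass $1/\Delta_i^2$ on a sub-optimal leaf with $\Delta_i=\Delta_{\min}$; in an early phase with $\Delta_i<\Delta_s$ the clipped objective charges this mass $\Delta_s/\Delta_{\min}^2$, which dwarfs $c^*+|I^*|/\Delta_{\min}$ --- the inflation factor $\Delta_s/\Delta_i$ on near-optimal arms is unbounded, not a constant. Fortunately this piece of your construction is also unnecessary: every vertex of the star neighbors the root, so the covering mass alone (either $1/\Delta_s^2$ on the root, or $1/\Delta_s^2$ on each arm of $\Gamma_s$) satisfies \emph{all} constraints of \ref{eq:lp2}, not just those of $I^*$, giving $D_{\ref{eq:lp2}}(s)\leq\min((\Delta_r\lor\Delta_s)/\Delta_s^2,\,2|\Gamma_s|/\Delta_s)$ directly --- after which you are back to needing the dual lower bound on $c^*$. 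Two final remarks: your instinct that the late phases $\Delta_s<\Delta_{\min}$ are delicate is sound (the paper's own inequalities implicitly use $\Delta_s\geq\Delta_{\min}$, equivalently $\Gamma_s\setminus I^*\neq\emptyset$), but the fix is not absorption into $|I^*|/\Delta_{\min}$; and your fallback that the statement ``is literally a special case of Lemma~\ref{lem:dstar_gen}'' is circular, since the paper proves Lemma~\ref{lem:dstar_gen} by collapsing $\mathscr{C}(G)$ to a star and invoking this very lemma.
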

\begin{proof}
Consider the dual of \ref{eq:lp} given below
\begin{equation}
\label{eq:lp5_dual}
    \begin{aligned}
        \max_{y \in \RR^{K}}\frac{1}{\Delta_s^2}\sum_{i\in
          \Gamma_s}&y_i\\ s.t.\sum_{j\in N_i\bigcap \Gamma_s} &y_j
        \leq \Delta_s,\forall i\in \Gamma_s,\\ \sum_{j\in N_i \bigcap
          \Gamma_s} &y_j \leq \Delta_i, \forall i \in
        \Gamma_s^\mathsf{C}.
    \end{aligned}\tag{LP5}
\end{equation}
Note that for any $i\in [K]$ we can take the intersection of $N_i$
with $\Gamma_s$ as no action $j\in \Gamma^\mathsf{C}$ can increase the
value of the objective of \ref{eq:lp5_dual}.  The analysis is split
into two parts.  First consider all phases $s$ for which it holds that
$\Delta_r \leq \Delta_s$. We argue that the solution to
\ref{eq:lp2} for these phases is to just play the revealing vertex
for $\frac{1}{\Delta_s^2}$ times. Indeed we can just set $y_r =
\Delta_s$ and observe that this is feasible for the dual LP with value
$\frac{1}{\Delta_s}$. Further, setting $x_r = \Delta_s$ in the primal
also yields a value of $\frac{1}{\Delta_s}$. The fact that $\Delta_r
\geq \Delta_{\min}$ together with the lower bound of $R(T) \geq
\Omega(\frac{1}{\Delta_{\min}})$ for any strategy, implies that
playing according to \ref{eq:lp2} is optimal up to at least the
phase at which $\Delta_r > \Delta_s$.

Next, consider the setting of $s$ s.t. $\Delta_r > \Delta_s$. The
following is feasible for \ref{eq:lp5_dual}
\begin{align*}
    y_i = \begin{cases}
    \frac{\Delta_r}{|\Gamma_s|} &\text{if } |\Gamma_s| \Delta_s \geq \Delta_r\\
    \Delta_s &\text{otherwise}.
    \end{cases}
\end{align*}
Thus the value of \ref{eq:lp2} is $\frac{\Delta_r}{\Delta_s^2}$ in
the first case and $\frac{|\Gamma_s|}{\Delta_s}$ in the second as we
can match these values in the primal by setting either $x_r =
\frac{1}{\Delta_s^2}$ or $x_i = \frac{1}{\Delta_s^2},i\in\Gamma_s$ in
the primal. To show that both of these values are dominated by $c^*$
consider the dual of \ref{eq:lp} below
\begin{equation}
\label{eq:lp_dual}
    \begin{aligned}
        \max_{y \in \RR^{K}}\sum_{i\in [K]\setminus I^*}&\frac{y_i}{\Delta_i^2}\\
        s.t.\sum_{j\in N_i\bigcap \Gamma_s} &y_j \leq \Delta_i \forall i\in[K].
    \end{aligned}\tag{LP6}
\end{equation}
First consider the setting in which the value of \ref{eq:lp2}
equals $\frac{\Delta_r}{\Delta_s^2}$. Set all $y_i \in \Gamma_s
\setminus I^*$ to $y_i=\frac{\Delta_r}{|\Gamma_s|}$ and all other
$y_i=0$. This is feasible for \ref{eq:lp_dual} and implies that
\begin{align*}
    c^* \geq \sum_{i\in \Gamma_s\setminus I^*}
    \frac{\Delta_r}{\Delta_i^2 |\Gamma_s|}
    \geq
    \frac{|\Gamma_s\setminus
      I^*|}{|\Gamma_s|}\frac{\Delta_r}{\Delta_s^2},
\end{align*}
where the second inequality follows because $\Delta_s \geq
\Delta_i$. This is sufficient to guarantee that $c^* +
\frac{|I^*|}{\Delta_{\min}}\geq \frac{\Delta_r}{\Delta_s^2}$.  Next
consider the setting in which the value of \ref{eq:lp2} equals
$\frac{|\Gamma_s|}{\Delta_s}$. Set all $y_i:i \in \Gamma_s \setminus
I^*$ to $y_i=\Delta_i$ and all other $y_i=0$. This is again feasible
for \ref{eq:lp_dual} because $\sum_{i \in \Gamma_s \setminus I^*}
\Delta_i \leq |\Gamma_s|\Delta_s \leq \Delta_r$ and further implies
that
\begin{align*}
     c^* \geq \sum_{i\in \Gamma_s\setminus I^*} \frac{1}{\Delta_i} \geq \frac{|\Gamma_s\setminus I^*|}{|\Gamma_s|}\frac{1}{\Delta_s}.
\end{align*}
Again this is sufficient to guarantee that $c^* +
\frac{|I^*|}{\Delta_{\min}}\geq \frac{|\Gamma_s|}{\Delta_s}$.
\end{proof}

\subsection{Proof of Lemma~\ref{lem:dstar_gen}}
\begin{proof}
Again we assume that $G$ consists only of a single connected component. First we argue that $\mathscr{C}(G)$ is a star-graph.
Consider three vertices $v_{1},v_{2},v_{3} \in \mathscr{C}(G)$ such that $v_1, v_3 \in N_{v_2}$. 
Assume that $v_1 \in N_{v_3}$. This implies that there exists a vertex $u \in \mathscr{C}(G)$ such that $u \in N_i$ but $u \not \in N_j$ for $i\neq j, i,j \in \{1,2,3\}$, otherwise $N_{v_1}=N_{v_2}=N_{v_3}$ and they collapse to a single vertex under $\mathscr{C}(G)$. Assume that $u \in N_1$ but $u \not \in N_2$. Then this implies there exists a path of length $3$ between $u$ and $v_2$, given by $(u,v_1,v_3,v_2)$. All other cases are symmetric and so this contradicts $v_2 \in N_{v_3}$. Further, it can not occur that there exists a neighbor $u$ of $v_2$ or $v_3$ s.t. $u\not\in N_{v_1}$. The above two arguments show that for $G$ every vertex must neighbor $v_1$ and no two vertices $v_2,v_3 \neq v_1$  can be neighbors making $\mathscr{C}(G)$ a star graph.

Next, we show that for any $\mu$ there exists a $\mu'$ defined on $\mathscr{C}(G)$ s.t. $c^*(G,\mu) = c^*(\mathscr{C}(G),\mu')$ and $d^*(G,\mu) = d^*(\mathscr{C}(G),\mu')$. For any equivalence class $[v] \in \mathscr{C}(G)$, define the expected reward of $[v]$ as $\mu'_{[v]} = \max_{u\in [v]} \mu_u$. For the remainder of the proof we represent the equivalence class by the action $v$ with maximum reward $\mu_v=\mu'_{[v]}$.
By construction, $\max_v \mu'_{[v]}=\max_u\mu_u$, hence the gaps are also identical $\Delta_{[v]}=\Delta_{v}$.
We first show that we can drop the constraints for any $u\in[v]\setminus\{v\}$ without changing the value of the LP. The LHS of all constraints for $u\in[v]$ is identical since it depends only on $N_u$.
Hence, we can remove all but the largest constraint, which is obtained for the smallest gap, i.e. the constraint for $v$.
Next we show that we can also remove $x_u$ for any $u\in[v]\setminus\{v\}$.
Assume $x_u > 0$ is a feasible solution of the LP, then we obtain another feasible solution $x'$ by $x'_u = 0, x'_v=x_u+x_v$, while leaving everything else unchanged. 
However, since $\Delta_{v}\leq \Delta_{u}$, the objective value of $x'$ is smaller or equal that of $x$.
Hence there exists an optimal solution where all $u\in[v]\setminus\{v\}$ are $0$ and these variables can be dropped from the LP. The resulting LP after dropping constraints and variables for $u\in[v]\setminus\{v\}$ is exactly given by $\mathscr{C}(G),\mu'$. This shows that $c^*(\mathscr{C}(G),\mu') = c^*(G,\mu')$.

The claim that $d^*(G,\mu) = d^*(\mathscr{C}(G),\mu')$ follows analogously.

\end{proof}

\end{document}